\documentclass[twoside,11pt]{article}
\usepackage{amsthm}
\let\savedproof\proof
\let\savedendproof\endproof
\usepackage[preprint]{jmlr2e}
\usepackage{amsmath,mathrsfs,enumerate,color}
\usepackage{algorithm,algorithmic,caption,multirow,booktabs,tikz,subcaption}
\let\proof\savedproof
\let\endproof\savedendproof

\usepackage{mathtools,bbding,fdsymbol,microtype,tabularx,placeins}
\newtheorem{assumption}[theorem]{Assumption}

\newcommand{\mc}[1]{\mathcal{#1}}
\newcommand{\mb}[1]{\mathbb{#1}}
\newcommand{\mf}[1]{\mathfrak{#1}}

\ShortHeadings{Latent-Space Importance Weighting}{Jiao, Kang, Xiang, Yang and Zhou}
\firstpageno{1}
\hypersetup{pdftitle={Latent-Space Importance Weighting under Covariate Shift},
            pdfauthor={Yuling Jiao, Lican Kang, Yang Xiang, Jerry Zhijian Yang, Fusheng Zhou}}

\begin{document}
\title{Latent-Space Importance Weighting under Covariate Shift}
\author{\name Yuling Jiao \email yulingjiaomath@whu.edu.cn \\
\addr School of Artificial Intelligence, \\
and Hubei Key Laboratory of Computational Science\\
Wuhan University, Wuhan, 430072, China
\AND
\name Lican Kang \email kanglican@whu.edu.cn \\
\addr Institute for Math and AI, \\
School of Artificial Intelligence \\
Hubei Key Laboratory of Computational Science,\\
Wuhan University, Wuhan, 430072, China
\AND
\name Yang Xiang \email maxiang@ust.hk \\
\addr Department of Mathematics, \\
The Hong Kong University of Science and Technology, Hong Kong, China\\
Algorithms of Machine Learning and Autonomous Driving Research Lab\\
HKUST Shenzhen-Hong Kong Collaborative Innovation Research Institute, Shenzhen, China
\AND
\name Jerry Zhijian Yang \email zjyang.math@whu.edu.cn \\
\addr School of Mathematics and Statistics, \\
and Hubei Key Laboratory of Computational Science\\
Wuhan University, Wuhan, 430072, China
\AND
\name Fusheng Zhou\thanks{Corresponding author.} \email fushengzhou@ust.hk \\
\addr School of Mathematics and Statistics, \\
Wuhan University, Wuhan, 430072, China\\
Department of Mathematics, \\
The Hong Kong University of Science and Technology\\
Hong Kong, China}
\maketitle
\begin{abstract}%
We propose a three-stage algorithm that combines representation learning with importance weighting to address covariate shift. The algorithm learns a representation, estimates the density ratio between domains in the learned space, and fits a weighted predictor. We establish finite-sample excess-risk bounds for classification that quantify the effect of representation insufficiency. Under sufficiency and regularity conditions, we derive explicit convergence rates for neural networks, replacing the ambient dimension by the latent dimension in the rate exponents. For fixed finite-dimensional predictor classes, we characterize strict reductions in the leading excess-risk constant relative to original-space weighting, including a model with learned representations and estimated weights. Experiments on synthetic and real datasets demonstrate the benefits of combining representation learning and latent weighting, with the proposed method achieving the highest mean accuracy among four controlled variants across six datasets.

\end{abstract}

\begin{keywords}
transfer learning, 
covariate shift, 
reweighting,
latent space, 
non-asymptotic error bound
\end{keywords}
\section{Introduction}\label{introduction}
Covariate shift arises when the covariate distribution changes between source and target populations while the conditional response law remains unchanged \citep{quinonero2022dataset,sugiyama2012machine}. It is an important setting in transfer learning (TL) \citep{pan2009survey}, with applications in language processing \citep{blitzer2006domain}, visual recognition \citep{saenko2010adapting}, and clinical risk prediction \citep{zhou2024doubly}. The goal is to learn a predictor for the target population using labeled source data and unlabeled target covariates.

Importance weighting addresses this goal by expressing target risk as a source expectation weighted by the target-to-source covariate density ratio \citep{shimodaira2000improving,huang2006correcting,cortes2008sample,cortes2010learning}; it also supports model selection through importance-weighted cross-validation \citep{sugiyama2007covariate}. In high dimensions, however, the ratio must describe distributional changes in every covariate, even when only a few features determine the response. This creates two statistical costs: estimating a high-dimensional ratio and carrying weight variation unrelated to prediction. A shift in an independent nuisance coordinate, for example, changes the weights without changing the predictive relationship and can increase weighted-loss variance even when the ratio is known.

Sufficient representations provide a way to address both costs. By retaining the conditional response law in a lower-dimensional space, they permit latent importance weighting that preserves population target risk \citep{stojanov2019low} and averages out weight variation among inputs with the same representation. The challenge is to realize these gains with a learned representation and estimated weights. A fitted nonlinear map can lose predictive information and induce different conditional response laws across domains. Its quality must therefore be connected to density-ratio estimation and predictor fitting in a target-risk analysis.

We make this connection through a three-stage algorithm that combines supervised representation learning with latent-space importance weighting. The algorithm learns a representation from source labels, estimates the density ratio between the domain distributions induced by this fitted map, and fits a predictor by minimizing the weighted source empirical risk. The first stage uses the dependence reward and reference-distribution penalty of Huang et al.~\citeyearpar{huang2024deep} to retain predictive information; the second corrects the remaining marginal shift. This separation allows the source and target latent distributions to differ. We analyze the resulting predictor for binary classification with logistic loss.

Our results quantify gains in both estimation rates and the risk of the fitted predictor. Under strict sufficiency and uniform regularity, the neural excess-risk bounds in Table \ref{tab_com} replace the ambient dimension by the latent dimension in the ratio-estimation exponent while retaining the same low-dimensional prediction term. Under equal smoothness and comparable fold sizes, this yields a larger overall convergence exponent than original-space weighting. For a fixed sufficient map and a fixed finite-dimensional predictor class, a separate risk expansion establishes a strict reduction in the leading expected excess-risk constant under an explicit nondegeneracy condition. A learnable coordinate-selection model realizes this improvement with estimated weights, while the general finite-sample bound also quantifies the effect of representation insufficiency. These results lead to the following contributions.

\begin{itemize}
\item \textit{A three-stage method and its target-risk analysis.} We combine learned representations, latent density-ratio estimation and weighted prediction, and establish a finite-sample excess-risk bound that tracks the errors from these stages. The bound includes information loss and conditional response mismatch when the fitted representation is insufficient.
\item \textit{Explicit neural-network convergence rates.} We derive rates for fully connected ReLU networks under strict sufficiency and uniform regularity. For common H\"older smoothness $\beta_{\mathcal H}$, prediction sample size $n$ and domain sample size $m\asymp n$, the excess-risk upper bound improves from $\widetilde O(n^{-\beta_{\mathcal H}/(2\beta_{\mathcal H}+p)})$ to $\widetilde O(n^{-\beta_{\mathcal H}/(2\beta_{\mathcal H}+d)})$ when weighting in a $d$-dimensional representation of $p$-dimensional inputs, with $d<p$ (Table \ref{tab_com}).
\item \textit{Risk comparisons after fitting.} For a fixed sufficient map and a fixed finite-dimensional predictor class, we derive $n^{-1}$ expected excess-risk expansions and establish a strictly smaller leading constant for latent weighting than for original-space weighting under an explicit nondegeneracy condition. A coordinate-selection model realizes this gain with a learned representation and estimated weights.
\item \textit{Empirical evaluation.} Across six synthetic and real datasets, the proposed method achieves the highest mean target accuracy among four controlled variants, demonstrating the benefit of combining representation learning and latent weighting. We also compare with domain-adaptation baselines on Office-Caltech under random and pretrained initialization.
\end{itemize}

Table \ref{tab_com} summarizes Theorem \ref{Th5} and Propositions \ref{th4} and \ref{contl gen} under their sufficiency and regularity conditions. Here $p,d$ are the ambient and latent dimensions; $n,m$ are the prediction- and domain-fold sizes; and $\widetilde O$ suppresses logarithmic factors. The H\"older indices $\beta_{\mathcal H},\beta_{\ddot{\mathcal H}}$ correspond to the learned-space and original-space Bayes scores, and $\beta_{\ddot\Phi},\beta_\Phi$ to the corresponding log density ratios. The representation-based methods share the same sufficient fitted map and an additional labeled fold of size $n_R$. Section \ref{comparison} analyzes the rate differences and the risk gains after fitting.

\begin{table*}[htbp]
\centering
\captionsetup{font=normalsize}

{\normalsize
\renewcommand{\arraystretch}{1.25}
\begin{tabular}{lcl}
\toprule
Method & Representation & Upper bound\\
\midrule
Latent weights (ours) \eqref{MeA} & \Checkmark &
$\widetilde O (n^{-\frac{\beta_{\mathcal H}}{2\beta_{\mathcal H}+d}}
       +m^{-\frac{\beta_{\ddot\Phi}}{2\beta_{\ddot\Phi}+d}})$\\[4pt]
Original weights \eqref{MeB} & \Checkmark &
$\widetilde O (n^{-\frac{\beta_{\mathcal H}}{2\beta_{\mathcal H}+d}}
       +m^{-\frac{\beta_\Phi}{2\beta_\Phi+p}})$\\[4pt]
No representation \eqref{MeD} & \XSolidBrush &
$\widetilde O(n^{-\frac{\beta_{\ddot{\mathcal H}}}{2\beta_{\ddot{\mathcal H}}+p}}
       +m^{-\frac{\beta_\Phi}{2\beta_\Phi+p}})$\\
\bottomrule
\end{tabular}

}
\caption{Target-domain excess-risk bounds}
\label{tab_com}
\end{table*}

\FloatBarrier
\subsection{Related Work}\label{sec:rw}
\noindent\textit{Low-dimensional weighting and learned representations.}
Stojanov et al.~\citeyearpar{stojanov2019low} combine sufficient dimension reduction with importance weighting, establishing population risk equivalence and analyzing projected predictors with kernel mean matching under a fixed target design. We extend the analysis to the population target risk of a predictor trained with an independently learned nonlinear map, retaining representation insufficiency and deriving explicit neural-network rates. Fang et al.~\citeyearpar{fang2020rethinking} address the coupling between feature extraction and weight estimation through dynamic importance weighting. Their ratio-preservation result assumes a fixed invertible transformation; our analysis permits noninvertible sufficient maps and estimates their induced marginal ratios.

\noindent\textit{Statistical efficiency under covariate shift.}
Maia Polo and Vicente~\citeyearpar{polo2023effective} relate effective sample size to R\'enyi divergence and generalization, and show how linear dimension reduction and feature selection can improve effective sample size. Kiyohara et al.~\citeyearpar{kiyohara2024off} prove unbiasedness and variance reduction for latent importance sampling in off-policy evaluation under sufficient slate abstraction. We carry the conditional-averaging mechanism through weighted fitting to an expected target-risk expansion, including estimated weights. Shimodaira~\citeyearpar{shimodaira2000improving} distinguishes the benefits of weighting under model misspecification from the efficiency of ordinary likelihood under correct specification. Our learnable model exhibits both the bias from ignoring relevant shift and the risk inflation from weighting irrelevant shift.

Related approaches improve weighting through regression adjustment \citep{li2020robust} or study independence-driven weights for stable feature selection under unknown test distributions \citep{xu2022theoretical}. Kernel learning rates under bounded-ratio and second-moment conditions include regimes where unweighted learning is rate-optimal \citep{feng2024towards}; deep regression under covariate shift is studied in \cite{feng2024deep}. Doubly robust transfer inference and subgroup adaptation address high-dimensional adjustment, heterogeneous populations and imbalanced target groups \citep{zhou2024doubly,zhou2024model}. Our analysis uses observed target covariates to quantify how learned representations affect weight estimation and the risk of the fitted predictor.

\noindent\textit{Density-ratio estimation.}
Kernel mean matching \citep{huang2006correcting,gretton2008covariate,yu2012analysis}, direct ratio estimation \citep{sugiyama2008direct,kanamori2009least}, and probabilistic classification \citep{bickel2007discriminative,bickel2009discriminative} estimate importance weights without estimating the two densities separately. Variational convex-risk minimization provides estimators of ratios and divergences \citep{nguyen2007estimating}, while nonnegative correction controls overfitting in deep Bregman ratio estimation \citep{kato2021non}. We use bounded domain-logistic classification to estimate the ratio induced by the learned representation.

\noindent\textit{Sufficient representations and domain alignment.}
Huang et al.~\citeyearpar{huang2024deep} develop the supervised representation objective used in our first stage. We incorporate its fitted output into covariate-shift correction and track representation quality in the target-risk bound. Feature learning for adaptation also includes structural correspondence learning \citep{blitzer2006domain}, maximum mean discrepancy embedding and transfer component analysis \citep{pan2008transfer,pan2010domain}, domain-invariant projection \citep{baktashmotlagh2013unsupervised}, joint distribution adaptation \citep{long2013transfer}, and intermediate tasks constructed from landmarks \citep{gong2013connecting}. Our representation stage seeks predictive sufficiency, and the weighting stage corrects the remaining marginal shift without requiring identical latent distributions across domains. Neural approximation theory relates approximation rates to network depth, smoothness and compositional structure \citep{yarotsky2018optimal,yarotsky2020phase,shen2019nonlinear,shen2019deep,petersen2018optimal,lu2021deep}.

\subsection{\texorpdfstring{Outline}{Outline}}
We develop the three-stage procedure in Section \ref{problem setting}, then analyze its target risk in Section \ref{convergence rate}. Section \ref{comparison} examines where latent weighting gains over original-space weighting, moving from rate comparisons to fitted-risk expansions and a learnable example. The experiments in Section \ref{experiments} assess how representation, weighting and initialization affect accuracy. Section \ref{conclusion} discusses the implications and open problems. Proofs, general bounds for insufficient representations and experimental details are collected in the appendix.

\section{Problem Formulation}\label{problem setting}
\subsection{\texorpdfstring{Preliminaries}{Preliminaries}}

\noindent
\textit{
Notation.
}
For the binary classification analysis, let $\mc X\subseteq\mathbb R^p$ and $\mathcal Y=\{0,1\}$ denote the input and output spaces. Let $(X^{\mc S},Y^{\mc S})$ and $(X^{\mc T},Y^{\mc T})$ be covariate-response pairs with joint laws $P_{\mc S}$ and $P_{\mc T}$ in the source and target domains. For $k\in\{\mc S,\mc T\}$, $P_k(X)$ denotes the feature marginal.
Let $P_\mu:=(P_{\mc S}(X)+P_{\mc T}(X))/2$ be the balanced feature mixture. Its joint domain-labeled law assigns $Z=-1$ to source observations and $Z=1$ to target observations.  
The prediction fold is $D_{\rm pred}:=\{(X_i^{\mc S},Y_i^{\mc S})\}_{i=1}^n$, with i.i.d. observations from $P_{\mc S}$. The unlabeled target sample $\{X_j^{\mc T}\}_{j=1}^m$ is i.i.d. from $P_{\mc T}(X)$ and is used to construct the domain-training fold $D_\mu$ in \eqref{ratio_data}.
For a probability measure $\mu$ on a measurable space and $1\leq s<\infty$, write $\|f\|_{L^s(\mu)}=(\int\|f\|^s\,d\mu)^{1/s}$, using absolute value for scalar functions and the Euclidean norm for vector functions. Write $\|f\|_{\infty,K}:=\sup_{x\in K}\|f(x)\|$, omitting $K$ when the domain is clear. Unless explicitly fixed, $C$ and $C_M$ denote positive constants that may change between occurrences; the subscript $M$ indicates dependence on $M$. For nonnegative $A,B$, write $A\asymp B$ if $c_1A\leq B\leq c_2A$ for constants $c_1,c_2>0$, and $A\lesssim B$ if $A\leq cB$ for some $c>0$.
Let $\mc M_d^p$ denote the measurable maps $\mathbb R^p\to\mathbb R^d$.

\noindent
\textit{
Sufficient Representation.
}
Given any $(X, Y) \in \mc{X} \times \mc{Y}$, let $R^* : \mb{R}^p \to \mb{R}^{d}$ be a representation function for any input $X \in \mc{X}$, where $d  \ll p$ and $d$ is prespecified. 
Sufficiency means that $Y$ and $X$ are conditionally independent given $R^*(X)$, i.e., $Y\Vbar X\mid R^*(X)$. Component independence (disentanglement) is a separate property encouraged by the representation-learning penalty \cite{huang2024deep}.

\begin{definition}[H\"older class]\label{def:hc}
The set $\mho^{\beta}([0,1]_d^p,B)$  represents the class of H\"older continuous functions mapping $[0,1]^p$ to $\mathbb{R}^d$,
where $p,d \in \mathbb{N}^+$, 
$\beta > 0$, with $\beta=\alpha+\tilde r$, $\alpha=\lceil\beta\rceil-1$, $\tilde r\in(0,1]$, and 
$B>0$ is a constant.
Specifically,  this class is  defined as
\begin{align*}
    \mho^{\beta}([0,1]_d^p,B) := \bigg\{
    &f=\{f_1,\ldots, f_d\}; f_i:[0,1]^p \to \mathbb{R}, \\
    &\max_{\|s\|_{1}\leq \alpha}\|\partial^{s}f_i\|_{\infty}\leq B,
    \max_{\|s\|_{1} = \alpha}\sup_{x \neq y}\dfrac{|\partial^{s}f_i(x) - \partial^{s}f_i(y)|}{\|x-y\|_2^{\tilde{r}}} \leq B,~ i=1,\ldots, d  \bigg\},
\end{align*}
where 
$s=(s_1,\ldots,s_p)\in\mathbb N_0^p$, $\|s\|_1=\sum_{j=1}^p s_j$, and $\partial^s f_i=\partial_{x_1}^{s_1}\cdots\partial_{x_p}^{s_p}f_i$, $i = 1,\ldots,d$. 
For convenience, 
we denote by $\mho^{\beta}([0,1]^p,B)$  the function class for which $d=1$. 
\end{definition}

\begin{definition}[ReLU DNNs]\label{def:dnn}
Let $\mathcal F_{\mathcal D_{\rm net},\mathcal W,\mathcal S,\mathcal B}$ be the class of ReLU DNNs with hidden-layer count $\mathcal D_{\rm net}$, width $\mathcal W$, parameter count $\mathcal S$ and output bound $\mathcal B$. Its networks $f_\theta$ can be parameterized as
\begin{equation*}
\begin{split}
   &f_{0}(x) = x,\\
    &f_{i + 1 }(x) = \varrho(W_if_i(x) + b_i), \quad  i = 0, \dots, \mathcal D_{\rm net}-1,\\
    &f_{\theta}(x)=W_{\mathcal D_{\rm net}}f_{\mathcal D_{\rm net}}(x)+b_{\mathcal D_{\rm net}}.
\end{split}
\end{equation*}
Here $\theta=(W_i,b_i)_{i=0}^{\mathcal D_{\rm net}}$ is the network parameter vector. We denote $\varrho(x):= \max\{0, x\}$ as the ReLU function applied componentwise. 
For $i=0,\ldots,\mathcal D_{\rm net}$, $W_i\in\mathbb R^{d_{i+1}\times d_i}$ and $b_i\in\mathbb R^{d_{i+1}}$. There are $\mathcal D_{\rm net}$ hidden layers and one affine output layer; $d_0$ is the input dimension and $d_{\mathcal D_{\rm net}+1}$ is the output dimension. Set $\mathcal W=\max_{0\leq i\leq\mathcal D_{\rm net}+1}d_i$ and $\mathcal S=\sum_{i=0}^{\mathcal D_{\rm net}}d_{i+1}(d_i+1)$, counting all weights and biases of a fully connected network, including positions whose values are zero. For a network class $J$, the subscripts in $\mathcal D_J,\mathcal W_J,\mathcal S_J$ specify its depth, width and total parameter budget. The parameter $\mathcal B$ specifies the output bound; score clipping is defined below.
\end{definition}

\par

\subsection{\texorpdfstring{Covariate Shift and Weighted Prediction}{Covariate Shift and Weighted Prediction}}
We begin with a shared sufficient representation and the covariate-shift conditions.

\begin{assumption}\label{ass2.0}
Let $(X^{\mc{S}}, Y^{\mc{S}}) \in \mc{X} \times \mc{Y}$ represent the covariate-response pair in the source domain $\mc{S}$, and $(X^{\mc{T}}, Y^{\mc{T}}) \in \mc{X} \times \mc{Y} $  represent the corresponding pair in the target domain $\mc{T}$. We assume that there exists a shared sufficient representation function $R^* : \mathbb{R}^p \to \mathbb{R}^{d}$ 
for both $(X^{\mc{S}}, Y^{\mc{S}})$ and $(X^{\mc{T}}, Y^{\mc{T}})$. 
\end{assumption} 
Define $U^{\mc{S}}:=R^*(X^{\mc{S}})$ and $U^{\mc{T}}:=R^*(X^{\mc{T}})$, with their respective distributions denoted by 
$P_{\mc S}^{R^*}$ and $P_{\mc T}^{R^*}$ over $\mathbb R^d$, where $P_k^R$ denotes the law of $R(X)$ under domain $k\in\{\mc S,\mc T\}$. Write $U=R^*(X)$ when the domain superscript is omitted.
In the latent space $\mathbb{R}^{d}$, let $\eta^*(u)$ denote the conditional class probability and let $h^*(u)$ denote the corresponding real-valued prediction score:
\begin{equation}\label{eq:hf}
\begin{split}
    \eta^*(u)&:=P_{\mc{T}}(Y^{\mc{T}}=1\mid U^{\mc{T}}=u)
    =\mb{E}[Y^{\mc{T}}\mid U^{\mc{T}}=u],\\
    h^*(u)&:=\log\frac{\eta^*(u)}{1-\eta^*(u)},\qquad
    \eta^*(u)=\sigma(h^*(u)),\quad \sigma(t):=\frac{1}{1+e^{-t}}.
\end{split}
\end{equation}

We assume $0<\eta^*(u)<1$ for $P_{\mc T}^{R^*}$-almost every $u$, so that $h^*$ is finite. Throughout the binary classification analysis, $h$, $\hat h$, and $h^*$ are scores, whereas $\sigma(h)$, $\sigma(\hat h)$, and $\eta^*$ are probabilities.
For the density-ratio constructions, assume $P_{\mc T}(X)\ll P_{\mc S}(X)$ and define the original-space ratio $r^*(x):=\frac{dP_{\mc T}(X)}{dP_{\mc S}(X)}(x)$. Finite optimal domain scores require $0<r^*<\infty$ $P_{\mc S}(X)$-almost everywhere; Assumption \ref{assum1} imposes uniform bounds for the finite-sample analysis. The oracle latent ratio is $r_d^*(u):=\frac{dP_{\mc T}^{R^*}}{dP_{\mc S}^{R^*}}(u)$, with the pushforward notation defined above. Absolute continuity passes to these pushforward laws.

\begin{assumption}\label{ass2.1}
Assume that $P_{\mc{S}}(Y^{\mc{S}}|U^{\mc{S}})=P_{\mc{T}}(Y^{\mc{T}}|U^{\mc{T}})$, where $P_{\mc{S}}(Y^{\mc{S}}|U^{\mc{S}})$ and $P_{\mc{T}}(Y^{\mc{T}}|U^{\mc{T}})$ represent the conditional probability distributions of the response variable  %
given the latent representation %
in the source and target domains, respectively.
\end{assumption}
Together with sufficiency of $R^*$, Assumption \ref{ass2.1} yields

\begin{equation}\label{eq2.02}
  P_{\mc{S}}(Y|X) =P_{\mc{T}}(Y|X),\quad P_{\mc{S}}(Y|U)=P_{\mc{T}}(Y|U).  
\end{equation}
Write $\eta(x):=P_{\mc S}(Y=1\mid X=x)=P_{\mc T}(Y=1\mid X=x)$ for the common class probability.

Let $\mc{H}$ be the hypothesis class from which we select an estimator for  $h^*$.
Our analysis includes predictor-class approximation error when $h^*$ does not belong to $\mc H$.
To measure the performance of %
this estimator on the target   data $(X^{\mc{T}}, Y^{\mc{T}})$,
we use the logistic loss for labels $y\in\{0,1\}$ and real-valued scores $a$ \citep{mohri2018foundations}:
\begin{equation*}
   \ell(a,y):=\log(1+e^a)-ya,\qquad y\in[0,1].
\end{equation*}

For binary labels this equals $\log(1+\exp(-(2y-1)a))$. The interval $[0,1]$ defines an affine extension in the label argument; the observed label $Y$ remains binary. In particular, $\mathbb E[\ell(a,Y)\mid U=u]=\ell(a,\eta^*(u))$.
The conditional expected loss has derivative $\sigma(a)-\eta^*(u)$ with respect to $a$, so its unique minimizer is the log-odds score $h^*(u)$ in \eqref{eq:hf}. Hence,
\begin{equation}\label{eq2.3}
     h^*\in\arg\min_{h\in \mc{M}^{d}_1} \mb{E}_{P_{\mc{T}}}\big[ \ell(h(U^{\mc{T}}),  Y^{\mc{T}})\big].
\end{equation}
However, it is not feasible to estimate $h^*$ directly via \eqref{eq2.3}, 
as the target response $Y^{\mc{T}}$ is typically inaccessible in practice.
To tackle this issue, we consider the equivalent form of \eqref{eq2.3} as follows:
\begin{equation}\label{eq_es_h}
    \mb{E}_{P_{\mc{S}}}\big[r_d^*(U^{\mc{S}}) \cdot \ell(h(U^{\mc{S}}), Y^{\mc{S}})\big] = \mb{E}_{P_{\mc{T}}}\big[\ell(h(U^{\mc{T}}), Y^{\mc{T}})\big].
\end{equation}

\begin{lemma}\label{th1}
    Under Assumption \ref{ass2.1} and $P_{\mc T}^{R^*}\ll P_{\mc S}^{R^*}$, the score $h^*$ minimizes $\mb E_{P_{\mc S}}[r_d^*(U^{\mc S})\allowbreak\ell(h(U^{\mc S}),Y^{\mc S})]$ over $h\in\mc M_1^d$, and every minimizer agrees with $h^*$ $P_{\mc T}^{R^*}$-almost everywhere.
\end{lemma}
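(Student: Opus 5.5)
The argument reduces the weighted source objective to the unweighted target objective and then minimizes the conditional expected loss pointwise.

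\emph{Step 1: change of measure.} Fix $h\in\mc M_1^d$ and condition on $U^{\mc S}$. By the tower property and the affine form of $\ell$ in its label argument,
\[
\mb E_{P_{\mc S}}\big[r_d^*(U^{\mc S})\,\ell(h(U^{\mc S}),Y^{\mc S})\big]=\int r_d^*(u)\,\ell\big(h(u),\eta_{\mc S}(u)\big)\,dP_{\mc S}^{R^*}(u),
\]
where $\eta_{\mc S}(u):=P_{\mc S}(Y=1\mid U=u)$. Assumption \ref{ass2.1} gives $\eta_{\mc S}=\eta^*$ $P_{\mc T}^{R^*}$-a.e. It suffices to have this on the support of $r_d^*\,dP_{\mc S}^{R^*}$, since the integrand carries the factor $r_d^*$. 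Because $P_{\mc T}^{R^*}\ll P_{\mc S}^{R^*}$ with Radon--Nikodym derivative $r_d^*$, the integral equals
\[
\int \ell\big(h(u),\eta^*(u)\big)\,dP_{\mc T}^{R^*}(u)=\mb E_{P_{\mc T}}\big[\ell(h(U^{\mc T}),Y^{\mc T})\big].
\]
This is exactly \eqref{eq_es_h}. Since $\ell\ge 0$, the identity holds in $[0,\infty]$ by Tonelli, so no integrability assumption on $h$ is needed.

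\emph{Step 2: pointwise minimization.} For each $u$ with $0<\eta^*(u)<1$, the function $a\mapsto\ell(a,\eta^*(u))$ is strictly convex, because its second derivative is $\sigma(a)(1-\sigma(a))>0$. Its derivative $\sigma(a)-\eta^*(u)$ vanishes only at $a=h^*(u)$. Hence $\ell(h(u),\eta^*(u))\ge\ell(h^*(u),\eta^*(u))$ for every $u$, with equality iff $h(u)=h^*(u)$. Integrating against $P_{\mc T}^{R^*}$ shows that $h^*$ minimizes the target risk, and therefore, by Step 1, the weighted source risk.

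\emph{Step 3: uniqueness.} This is the main obstacle, because differences of possibly infinite risks must be handled carefully. First check that the minimum is finite. We have $\ell(h^*(u),\eta^*(u))=-\eta^*\log\eta^*-(1-\eta^*)\log(1-\eta^*)\le\log 2$, so the minimal value is at most $\log 2$.

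Now let $\tilde h$ be another minimizer. Then $\int[\ell(\tilde h,\eta^*)-\ell(h^*,\eta^*)]\,dP_{\mc T}^{R^*}=0$, and this difference is well defined because both integrals are finite. The integrand is nonnegative, so it vanishes $P_{\mc T}^{R^*}$-a.e. Strict convexity from Step 2 then forces $\tilde h=h^*$ $P_{\mc T}^{R^*}$-a.e.

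Two points need care when writing this up. One must confirm that the conditional-expectation identity $\mb E[\ell(a,Y)\mid U=u]=\ell(a,\eta(u))$ is applied with the source conditional law, and then transferred to $\eta^*$ via Assumption \ref{ass2.1}. One must also observe that the exceptional null sets of $P_{\mc S}^{R^*}$ are $P_{\mc T}^{R^*}$-null by absolute continuity.
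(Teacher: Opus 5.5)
Your proposal is correct and follows the paper's route: condition on $U$ and change measure via $r_d^*=dP_{\mc T}^{R^*}/dP_{\mc S}^{R^*}$ to identify the weighted source risk with the target risk, then invoke uniqueness of the logistic Bayes score. Your Step 3 only spells out what the paper's one-line uniqueness appeal leaves implicit, namely that the minimal risk is finite (at most $\log 2$) so the zero-integral argument forces $\tilde h=h^*$ $P_{\mc T}^{R^*}$-almost everywhere.
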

Lemma \ref{th1} expresses the usual importance-weighting identity in sufficient coordinates \citep{stojanov2019low}. We estimate the ratio induced by the fitted map and use a domain classifier for this purpose \citep{qin1998inferences,bickel2007discriminative}. Binary predictions threshold the fitted score at zero. Sections \ref{sec:rep}--\ref{sec:tlr} define the representation, ratio and prediction stages; Lemma \ref{lem:latent-oracle} quantifies the target-risk discrepancy for general fitted maps.

\subsection{Learn the Representation}\label{sec:rep}
In this section, we focus on obtaining effective representations for high-dimensional raw data. However, determining an efficient and simple criterion for representation learning remains a challenging problem \citep{bengio2013representation}. Inspired by \cite{huang2024deep}, we adopt sufficiency and disentanglement as goals of representation learning. We use a dependence reward $\mc V[R(X),Y]$ and a Gaussian-reference penalty to encourage task relevance and component independence. Let $\mc F$ be the representation-network class mapping $\mathbb R^p$ to $\mathbb R^d$; in Appendix \ref{app:representation-recovery}, $\mc F=\mc F_{n_R}$ is the prescribed sample-size-dependent sieve. On a representation-training fold of size $n_R$, with regularization coefficient $\lambda\geq0$, the fitted representation is defined by
\begin{equation}\label{eq:rep-training}
 \hat R\in\arg\min_{R\in\mc F}\widehat{\mc L}_{n_R}(R),\qquad
 \widehat{\mc L}_{n_R}(R):=-\widehat{\mc V}_{n_R}[R(X),Y]
                +\lambda\widehat{\mb D}_{f,n_R}(R).
\end{equation}

Here $\widehat{\mc V}_{n_R}$ is the squared distance covariance statistic computed from the labeled fold \cite{szekely2007measuring}; the empirical variational $f$-divergence penalty is described in Appendix \ref{Au_results} \citep{ali1966general}. The source-domain regularization reference is $\gamma_d:=\mc N(0,I_d)$, where $I_d$ is the $d\times d$ identity matrix; write $W\sim\gamma_d$ for a reference observation.

\subsection{Estimate the Density Ratio}\label{sec:dre}
We use logistic regression \cite{sugiyama2012density} to estimate density ratios, first in the original space and then in the learned latent space.
Given independent i.i.d. samples $\{X_i^{\mc S}\}_{i=1}^m$ from $P_{\mc S}(X)$ and $\{X_i^{\mc T}\}_{i=1}^m$ from $P_{\mc T}(X)$, assign domain labels $Z_i^{\mc S}=-1$ and $Z_i^{\mc T}=1$. Construct the mixed sample $D_\mu:=\{(X_i^\mu,Z_i^\mu)\}_{i=1}^m$ by
\begin{equation}\label{ratio_data}
X^\mu_i = \Delta_i X^\mc{{S}}_i + (1-\Delta_i) X^\mc{{T}}_i,~  Z^\mu_i = \Delta_i Z^\mc{{S}}_i + (1-\Delta_i) Z^\mc{{T}}_i,
\end{equation}
The variables $\Delta_i$ are i.i.d. Bernoulli$(1/2)$, independent of the source and target samples. Let $P_{\mu,Z}$ denote the resulting joint law of $(X^\mu,Z^\mu)$, whose feature marginal is $P_\mu$. 
The population-optimal domain score is $\phi^*=\log r^*$ under the balanced domain mixture; equivalently, $r^*$ is recovered from the global population minimizer of the domain logistic risk \cite{sugiyama2012density}, which is formulated as:
\begin{equation}\label{eq2.8}
    r^* = \exp(\phi^*),
\end{equation}
with
\begin{equation}\label{eq2.2}
\begin{split}
 \mathcal L_r(\phi)&:=\mathbb E_{(X,Z)\sim P_{\mu,Z}}[\log(1+\exp(-Z\phi(X)))],\\
 \phi^*&\in\arg\min_{\phi\in\mathcal M_1^p}\mathcal L_r(\phi).
\end{split}
\end{equation}

The empirical domain-score fit corresponding to $\phi^*$ 
in \eqref{eq2.2} is defined as 
\begin{equation}\label{eq2.9}
\begin{split}
 \widehat{\mathcal L}_r(\phi)&:=\frac1m\sum_{i=1}^m\log(1+\exp(-Z_i^\mu\phi(X_i^\mu))),\\
 \hat\phi&\in\arg\min_{\phi\in\Phi}\widehat{\mathcal L}_r(\phi).
\end{split}
\end{equation}

Here $\Phi$ and $\ddot\Phi$ are domain-score network classes mapping $\mathbb R^p$ and $\mathbb R^d$ to $\mathbb R$, respectively. For the theoretical variant, with domain-score bound $M\geq0$ and clipping level $K\geq0$, define
\begin{equation*}
\begin{gathered}
 \operatorname{clip}_K(t):=\max\{-K,\min\{K,t\}\},\\
 \Phi_M:=\{\operatorname{clip}_M\circ\phi:\phi\in\Phi\},\quad
 \ddot\Phi_M:=\{\operatorname{clip}_M\circ\phi:\phi\in\ddot\Phi\}.
\end{gathered}
\end{equation*}

Domain-score fitting in \eqref{eq2.9} and its latent analogue uses these clipped classes in the theoretical variant. Thus $e^{-M}\leq\hat r,\hat r_d\leq e^M$ wherever the corresponding weight is evaluated. Predictor classes are similarly constrained to $[-\mc B,\mc B]$.
Consequently,  we can use $\hat{r}:= \exp{(\hat{\phi})}$  as the estimator of 
$r^*$.

To provide deeper insight into the deep density ratio estimation procedure, 
Figure \ref{fig:density ratio} in Appendix \ref{appendix} presents an example that compares the logarithm of the true density ratio with its estimation obtained through the proposed method. The results highlight the effectiveness and accuracy of density ratio estimation using  DNNs.

Now, we estimate the density ratio in the latent space, following the same framework as described above.
For oracle features, use the pushforward laws $P_{\mc S}^{R^*}$ and $P_{\mc T}^{R^*}$. Their density ratio is $r_d^*$ as defined above.

Analogous to \eqref{eq2.2}, with oracle features $U=R^*(X)$ and the induced joint law $P_{\mu,Z}^{R^*}$ of $(U,Z)$, we define
\begin{equation*}
    \phi^*_{d}
    \in \arg\min_{\phi \in \mc{M}^{d}_1}
    \mb{E}_{(U,Z)\sim P_{\mu,Z}^{R^*}}
    \left[\log (1+\exp(-Z\phi(U)))\right].
\end{equation*}
and $r^*_{d}$ 
still can be expressed as 
$ 
r^*_{d}= \exp(\phi^*_{d}).
$
For the learned features, the fitted domain score is defined as 
\begin{equation*}
    \hat{\phi}_{d}\in
        \arg\min_{\phi \in \ddot{\Phi}_M}
        \frac{1}{m} \sum_{i = 1}^m \log (1+\exp(-Z^{\mu}_i\phi(\hat R(X_i^\mu)))),  
    \end{equation*}
where $(X_i^\mu,Z_i^\mu)\in D_\mu$ is constructed by \eqref{ratio_data}.
Consequently, we denote  by $\hat{r}_{d}:=\exp(\hat{\phi}_{d})$
an estimator of $r_{\hat R}:=dP_{\mc T}^{\hat R}/dP_{\mc S}^{\hat R}$, the density ratio induced by the learned representation. This need not equal $r_d^*$; agreement of both pushforward distributions with those induced by $R^*$ is one sufficient condition for equality.

\subsection{\texorpdfstring{Fit the Weighted Predictor}{Fit the Weighted Predictor}}\label{sec:tlr}
With $\hat R$ and $\hat r_d$ fixed, fit the predictor by weighted empirical risk minimization:       
\begin{equation*}
\hat{h}\in\arg\min_{h\in \mc{H}} \frac{1}{n}\sum_{i=1}^n\hat{r}_{d}(\hat R(X_i^{\mathcal S})) \cdot\ell(h(\hat R(X_i^{\mathcal S})), Y_i^{\mathcal{S}}), 
\end{equation*}
using the source prediction fold, independently of the representation and domain-score fits.
For a target input $x$, the estimated class probability is $\sigma(\hat h \circ \hat R(x))$, and the predicted binary label is $\mathbf{1}\{\hat h \circ \hat R(x)>0\}$. Algorithm \ref{alg:al2} summarizes the three stages.
\begin{algorithm}[H]
    \caption{Reweighting in the latent space}\label{alg:al2}
    \begin{algorithmic}[1]
        \REQUIRE Independent source labeled folds $D_{\rm rep}$ of size $n_R\geq4$ and $D_{\rm pred}=\{(X_i^{\mc S},Y_i^{\mc S})\}_{i=1}^n$; independent source and target unlabeled samples for constructing $D_\mu$ of size $m$.
        \STATE Estimate the representation function $\hat R$:
        \begin{equation*}
          \hat R\in\arg\min_{R\in\mc F}\widehat{\mc L}_{n_R}(R)
          \quad\text{using }D_{\rm rep}\text{ in \eqref{eq:rep-training}}.
        \end{equation*}
        
        \STATE Estimate the density ratio $\hat r_d$:
        \begin{align*}
          \hat\phi_d&\in\arg\min_{\phi\in\ddot\Phi_M}\frac1m\sum_{i=1}^m
                 \log\big(1+\exp(-Z_i^\mu\phi(\hat R(X_i^\mu)))\big),\quad (X_i^\mu,Z_i^\mu)\in D_\mu,\\
          \hat r_d&:=\exp(\hat\phi_d).
        \end{align*}
        
        \STATE Estimate the target score $\hat h$:
        \begin{equation*}
          \hat h\in\arg\min_{h\in\mc H}\frac1n\sum_{i=1}^n
             \hat r_d(\hat R(X_i^{\mc S}))\ell(h \circ \hat R(X_i^{\mc S}),Y_i^{\mc S})
             \quad\text{using }D_{\rm pred}.
        \end{equation*}
        
        \RETURN $\hat h$, $\hat R$.
    \end{algorithmic}
\end{algorithm}
Algorithm \ref{alg:al2} gives the sample-split procedure analyzed by the finite-sample theorems. In both theoretical algorithms, $D_\mu$ is independent of all data, reference samples and training randomness used to fit $\hat R$; all inputs to the first two stages are independent of $D_{\rm pred}$. These requirements support domain-score analysis conditional on representation training and prediction analysis conditional on both fitted stages.
Stage 1 learns a representation from labeled source data, Stage 2 estimates the density ratio in the learned space, and Stage 3 fits the weighted predictor. The representation and weights remain fixed during Stage 3.

To assess the roles of representation learning and weighting, we compare the proposed method with three control methods. These methods include reweighting in the ambient space (as described in  \eqref{MeB}), unweighted transformation in the latent space (as outlined in  \eqref{MeC}), and reweighting in the original space (as specified in  \eqref{MeD}).
We present all the comparison methods below. In these formulas, $(X_i,Y_i)=(X_i^{\mc S},Y_i^{\mc S})$ are the source observations in $D_{\rm pred}$; the source superscript is omitted.
\begin{itemize}
    \item \textbf{Reweighting in the latent space (Proposed method)}
        \begin{equation}\label{MeA}
         \hat h\in\arg\min_{h\in\mc H}\frac1n\sum_{i=1}^n\hat r_d(\hat R(X_i)) \cdot \ell(h \circ \hat R(X_i),Y_i).   
        \end{equation}

    \item \textbf{Reweighting in the  original space}
    \begin{equation}\label{MeB}
             \hat{h}\in\arg\min_{h\in \mc{H}} \frac{1}{n}\sum_{i =1}^{n} \hat{r}(X_i) \cdot\ell(h \circ \hat R(X_i), Y_i),  
            \end{equation}

    \item \textbf{Unweighted in the latent space}
        \begin{equation}\label{MeC}
             \hat{h}\in\arg\min_{h\in \mc{H}} \frac{1}{n}\sum_{i =1}^{n}\ell(h \circ \hat R(X_i), Y_i),   
            \end{equation}
            
    \item \textbf{Reweighting without representation} \citep{yu2012analysis}
        \begin{equation}\label{MeD}
             \hat{h}\in\arg\min_{h\in \ddot{\mc{H}}} \frac{1}{n}\sum_{i =1}^{n} \hat{r}(X_i) \cdot\ell(h(X_i), Y_i).
            \end{equation}

\end{itemize}
Method \eqref{MeB} fits a latent-space predictor using original-space weights, isolating the effect of the space used for ratio estimation. Method \eqref{MeC} fits an unweighted predictor in the learned space, providing a control for weighting. Method \eqref{MeD} estimates both the ratio and the predictor in the original space, providing a control for representation learning. In \eqref{MeD}, $\hat h$ denotes the fitted predictor in $\ddot{\mc H}$, the class of DNNs mapping $\mathbb R^p$ to $\mathbb R$. Section \ref{comparison} compares these methods theoretically.

\section{Convergence Analysis}\label{convergence rate}
We analyze the proposed latent-space reweighting method in Algorithm \ref{alg:al2}. For a score $f:\mathcal X\to\mathbb R$, define its true target risk by
\begin{equation*}
 \mathcal R_{\mathcal T}(f):=\mathbb E_{(X,Y)\sim P_{\mathcal T}}\ell(f(X),Y).
\end{equation*}
For the fitted predictor, define the excess risk by
\begin{equation*}
 \mathcal E_{\mathcal T}(\hat h,\hat R)
 :=\mathcal R_{\mathcal T}(\hat h\circ\hat R)
                  -\mathcal R_{\mathcal T}(h^*\circ R^*).
\end{equation*}
The reference score $h^*\circ R^*$ is Bayes optimal under Assumptions \ref{ass2.0}--\ref{ass2.1}. Population risks integrate fresh observations with fitted functions held fixed; outer expectations below average over the training data and training randomness. For a measurable latent weight $r_d:\mathbb R^d\to[0,\infty)$ and a representation $R$, define the latent-weighted source risk by
\begin{equation}\label{excess error}
 \mathcal R_{r_d,R}(h):=\mathbb E_{P_{\mathcal S}}
 [r_d(R(X))\ell(h \circ R(X),Y)].
\end{equation}
The main theorem displays the sample-size rates directly. The more general finite-sample inequality, which does not require approximation or representation-recovery rates, is retained in Appendix \ref{pfth6}.

\subsection{Conditions for the Proposed Method}\label{sec:explicit-rates}
For a fixed representation $R$, use the pushforward law $P_k^R$ defined in Section \ref{problem setting}, put $U=R(X)$ within these pushforward expressions, and define
\begin{equation*}
 r_R:=\frac{dP_{\mathcal T}^R}{dP_{\mathcal S}^R},\qquad
 \eta_k^R(u):=P_k(Y=1\mid R(X)=u),\quad k\in\{\mathcal S,\mathcal T\}.
\end{equation*}
In particular, $r_{R^*}=r_d^*$, whereas the ratio estimated using learned features is $r_{\hat R}$. These ratios need not coincide. The balanced pushforward feature law is $P_\mu^R=(P_{\mathcal S}^R+P_{\mathcal T}^R)/2$, and $P_{\mu,Z}^R$ is the joint law of $(R(X^\mu),Z^\mu)$. The conditional label discrepancy is
\begin{equation*}
 b_R:=\mathbb E_{P_{\mathcal T}^R}
 |\eta_{\mathcal S}^R(U)-\eta_{\mathcal T}^R(U)|.
\end{equation*}
The general bound in Lemma \ref{lem:latent-oracle} permits $b_{\hat R}>0$. The main theorem treats the strictly sufficient learned-representation case below.

\begin{assumption}\label{assum1}
Suppose $P_{\mathcal T}(X)\ll P_{\mathcal S}(X)$ and $0<c_r\leq r^*(x)\leq C_r<\infty$ for $P_{\mathcal S}(X)$-almost every $x$. The same bounds hold for $r_R$ by \eqref{eq:target-risk-bridge}.
\end{assumption}

\begin{assumption}\label{assum4}
For $k\in\{\mathcal S,\mathcal T\}$, almost surely,
\begin{equation*}
 \sup_{h\in\mathcal H}\|h\|_{\infty,\operatorname{supp}(P_k^{\hat R})}\leq\mathcal B,
 \qquad \|h^*\|_{\infty,\operatorname{supp}(P_k^{R^*})}\leq\mathcal B.
\end{equation*}
\par
\end{assumption}
For $a,a'\in[-\mathcal B,\mathcal B]$ and $y\in[0,1]$, logistic loss satisfies
\begin{equation*}
 |\ell(a,y)-\ell(a',y)|\leq|a-a'|,
 \qquad 0\leq\ell(a,y)\leq L_{\mathcal B}:=\log(1+e^{\mathcal B}).
\end{equation*}
\par

\begin{proposition}\label{prop:learned-sufficiency}
Under Assumptions \ref{ass2.0}--\ref{ass2.1} and \ref{assum1}, suppose that, for almost every realization of the representation-training data, reference sample and training randomness, the fitted map is sufficient in the source population:
\begin{equation*}
 Y \Vbar X\mid\hat R(X)\quad\text{under }P_{\mathcal S}.
\end{equation*}
Then it is also sufficient in the target population and
\begin{equation}\label{eq:learned-sufficiency}
 P_{\mathcal S}(Y\mid\hat R(X))=P_{\mathcal T}(Y\mid\hat R(X)),\qquad b_{\hat R}=0.
\end{equation}
Consequently, the term $2\mathcal B\mathbb E b_{\hat R}$ disappears from the general bound \eqref{eq:latent-oracle-bound}. Predictor-class approximation and density-ratio estimation errors remain. The proof is in Appendix \ref{app:sufficiency}.
\end{proposition}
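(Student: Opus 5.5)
The argument conditions on the representation-training data, reference sample and training randomness, so that $\hat R=:R$ is a fixed measurable map. By the independence built into Algorithm \ref{alg:al2}, $R$ is independent of fresh source and target observations. The key input is the common conditional law \eqref{eq2.02}: under Assumptions \ref{ass2.0}--\ref{ass2.1}, $P_{\mathcal S}(Y=1\mid X=x)=P_{\mathcal T}(Y=1\mid X=x)=\eta(x)$. Together with $P_{\mathcal T}(X)\ll P_{\mathcal S}(X)$ from Assumption \ref{assum1}, this identity holds $P_{\mathcal T}(X)$-almost everywhere, since any $P_{\mathcal S}(X)$-null exceptional set is also $P_{\mathcal T}(X)$-null.

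The first step uses source sufficiency. The hypothesis $Y\Vbar X\mid R(X)$ under $P_{\mathcal S}$ means $\eta(x)=\eta_{\mathcal S}^R(R(x))$ for $P_{\mathcal S}(X)$-a.e. $x$, with $\eta_{\mathcal S}^R(u)=P_{\mathcal S}(Y=1\mid R(X)=u)$. By absolute continuity, this identity also holds $P_{\mathcal T}(X)$-a.e.

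The second step transfers this to the target. For any measurable $A\subseteq\mathbb R^d$,
\begin{equation*}
P_{\mathcal T}(Y=1,\,R(X)\in A)=\mathbb E_{P_{\mathcal T}}[\eta(X)\mathbf 1\{R(X)\in A\}]=\mathbb E_{P_{\mathcal T}}[\eta_{\mathcal S}^R(R(X))\mathbf 1\{R(X)\in A\}].
\end{equation*}
Hence $\eta_{\mathcal T}^R=\eta_{\mathcal S}^R$ holds $P_{\mathcal T}^R$-a.e. This gives the first identity in \eqref{eq:learned-sufficiency}, and $b_R=\mathbb E_{P_{\mathcal T}^R}|\eta_{\mathcal S}^R-\eta_{\mathcal T}^R|=0$.

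Target sufficiency follows in the same way. Because $Y$ is binary, it suffices to show $P_{\mathcal T}(Y=1\mid X)=P_{\mathcal T}(Y=1\mid R(X))$ a.s. We have $P_{\mathcal T}(Y=1\mid X)=\eta(X)=\eta_{\mathcal S}^R(R(X))=\eta_{\mathcal T}^R(R(X))$ $P_{\mathcal T}$-a.s. The conditional law of $Y$ given $X$ therefore depends on $X$ only through $R(X)$, and this is $Y\Vbar X\mid R(X)$ under $P_{\mathcal T}$.

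Since these conclusions hold for almost every realization of the conditioning variables, $b_{\hat R}=0$ almost surely, and so $\mathbb E\, b_{\hat R}=0$. The term $2\mathcal B\,\mathbb E\, b_{\hat R}$ in \eqref{eq:latent-oracle-bound} therefore vanishes. The other terms of that bound, predictor-class approximation and density-ratio estimation error, are untouched by this argument.

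The only delicate point is handling the null sets correctly. The source-sufficiency identity is only $P_{\mathcal S}$-almost-sure, and it must be moved to $P_{\mathcal T}$ through absolute continuity. This is why Assumption \ref{assum1} is needed; only $P_{\mathcal T}(X)\ll P_{\mathcal S}(X)$ is used, not the ratio bounds. The joint measurability of $\hat R$ in the training randomness also has to be checked, so that the almost-every-realization statement can be integrated.
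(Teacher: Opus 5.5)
Your proposal is correct and follows essentially the same route as the paper's proof. Both take the common class probability $\eta$ from \eqref{eq2.02}, transfer the source-sufficiency identity $\eta(X)=\eta_{\mathcal S}^{\hat R}(\hat R(X))$ to $P_{\mathcal T}$ via $P_{\mathcal T}(X)\ll P_{\mathcal S}(X)$, and conclude $\eta_{\mathcal T}^{\hat R}=\eta_{\mathcal S}^{\hat R}$ by the tower property (which you verify directly on the sets $\{\hat R(X)\in A\}$), with binary labels then giving target sufficiency and $b_{\hat R}=0$ for almost every training realization.
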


\begin{assumption}\label{ass:learned-sufficient}
For almost every realization of the representation-training data, reference sample and training randomness, the actual fitted map $\hat R$ satisfies the following conditions, with fixed cubes and constants uniform in that realization and in $n_R$.
\begin{enumerate}[(i)]
\item $Y \Vbar X\mid\hat R(X)$ under $P_{\mathcal S}$, and $\hat R(X)$ is supported on a fixed compact cube $K_U\subset\mathbb R^d$ in both domains.
\item The source conditional Bayes score in the learned coordinates,
\begin{equation}\label{eq:learned-bayes-score}
 h_{\hat R}^*(u):=\log\frac{\eta_{\mathcal S}^{\hat R}(u)}{1-\eta_{\mathcal S}^{\hat R}(u)},
\end{equation}
has a $\beta_{\mathcal H}$-H\"older extension to $K_U$ with a fixed radius and absolute bound $\mathcal B$.
\item $\log r_{\hat R}$ has a $\beta_{\ddot\Phi}$-H\"older extension to $K_U$ with a fixed radius and absolute bound $M$, where $M\geq\max\{|\log c_r|,|\log C_r|\}$.
\end{enumerate}
The H\"older classes on $K_U$ are those of Definition \ref{def:hc} after an affine map from $[0,1]^d$ to $K_U$. For an integer index $k\geq1$, derivatives of order $k-1$ are Lipschitz. The subscript in $h_{\hat R}^*$ distinguishes the learned coordinates from the oracle score $h^*$; their compositions with $\hat R$ and $R^*$ agree almost surely by sufficiency.
\end{assumption}

\begin{definition}\label{ass:downstream-sieves}
For a network class $J$, use the depth $\mathcal D_J$, width $\mathcal W_J$ and total parameter budget $\mathcal S_J$ of Definition \ref{def:dnn}. Depth counts hidden layers, so there are $\mathcal D_J+1$ affine layers. For fixed input dimension $q$, H\"older index $\beta>0$ and sample size $t\geq2$, take the fully connected ReLU class with depth and width budgets
\begin{equation}\label{eq:sieve-architecture}
\begin{split}
 \mathcal D_J&=\left\lceil C_Dt^{\frac{q}{4(2\beta+q)}}\log(et)\right\rceil,\qquad
 \mathcal W_J=\left\lceil C_Wt^{\frac{q}{4(2\beta+q)}}\log(et)\right\rceil,\\
 \mathcal S_J&\leq C(\mathcal D_J+1)\mathcal W_J^2
 \leq Ct^{\frac{3q}{4(2\beta+q)}}\log^3(et).
\end{split}
\end{equation}
The fixed constants are sufficiently large for the target H\"older ball and cube; output clipping is included in these budgets. All weight and bias positions are counted in $\mathcal S_J$. Use $(q,\beta,t)=(d,\beta_{\mathcal H},n)$ for $\mathcal H\subset[-\mathcal B,\mathcal B]$ and $(d,\beta_{\ddot\Phi},m)$ for $\ddot\Phi_M\subset[-M,M]$. Lemma \ref{lem:uniform-network-approximation} proves the uniform approximation error $O(t^{-\beta/(2\beta+q)})$ for this construction.
\end{definition}

\subsection{Explicit Target-Risk Rate}
\begin{theorem}\label{Th5}
Suppose Assumptions \ref{ass2.0}--\ref{ass2.1}, \ref{assum1} and \ref{ass:learned-sufficient} hold. Use the independent-fold Algorithm \ref{alg:al2}, with representation-fold size $n_R\geq4$, domain-fold size $m\geq2$ and prediction-fold size $n\geq2$. Choose the clipped fully connected ReLU predictor class $\mathcal H$ and domain-score class $\ddot\Phi_M$ of Definition \ref{ass:downstream-sieves}, with respective depths and widths
\begin{equation*}
\begin{aligned}
 \mathcal D_{\mathcal H},\mathcal W_{\mathcal H}
 &\asymp n^{\frac{d}{8\beta_{\mathcal H}+4d}}\log(en),\\
 \mathcal D_{\ddot\Phi_M},\mathcal W_{\ddot\Phi_M}
 &\asymp m^{\frac{d}{8\beta_{\ddot\Phi}+4d}}\log(em).
\end{aligned}
\end{equation*}
Then
\begin{equation}\label{eq:strict-sufficient-rate}
\begin{split}
 \mathbb E\left[\mathcal R_{\mathcal T}(\hat h\circ\hat R)
                 -\mathcal R_{\mathcal T}(h^*\circ R^*)\right]
 \leq \widetilde O \left(
 n^{-\frac{\beta_{\mathcal H}}{2\beta_{\mathcal H}+d}}
 +m^{-\frac{\beta_{\ddot\Phi}}{2\beta_{\ddot\Phi}+d}}
 \right).
\end{split}
\end{equation}
The implied constants depend only on the fixed dimensions, cubes, regularity radii, score bounds and overlap constants, and are uniform in $n,m,n_R$ and the training realization. The proof is in Appendix \ref{app:sufficiency}.
\end{theorem}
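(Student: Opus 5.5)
The plan is to condition on the representation fold, so that $\hat R$ is a fixed map satisfying Assumption \ref{ass:learned-sufficient}. I then decompose the target excess risk into a ratio-estimation term and a weighted-ERM term, and bound each with standard sieve arguments in dimension $d$. Finally I take outer expectations, using that all constants are uniform in the realization.

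\textbf{Step 1 (reduction to learned coordinates).} By Assumption \ref{ass:learned-sufficient}(i) and Proposition \ref{prop:learned-sufficiency}, $b_{\hat R}=0$ and $\eta_{\mathcal S}^{\hat R}=\eta_{\mathcal T}^{\hat R}$. Hence $h^*_{\hat R}\circ\hat R=h^*\circ R^*$ almost surely under both domains, and
\begin{equation*}
\mathcal E_{\mathcal T}(\hat h,\hat R)=\mathcal R_{r_{\hat R},\hat R}(\hat h)-\mathcal R_{r_{\hat R},\hat R}(h^*_{\hat R}),
\end{equation*}
using the change-of-measure identity with the exact pushforward ratio $r_{\hat R}$ (the latent analogue of \eqref{eq_es_h}). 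So the problem is a $d$-dimensional covariate-shift problem with inputs $U=\hat R(X)$ on $K_U$.

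\textbf{Step 2 (weight perturbation).} Since $0\le\ell\le L_{\mathcal B}$, for every $h\in\mathcal H\cup\{h^*_{\hat R}\}$,
\begin{equation*}
|\mathcal R_{\hat r_d,\hat R}(h)-\mathcal R_{r_{\hat R},\hat R}(h)|\le L_{\mathcal B}\|\hat r_d-r_{\hat R}\|_{L^1(P_{\mathcal S}^{\hat R})}.
\end{equation*}
Both log-ratios lie in $[-M,M]$, so this is at most $C_M\|\hat\phi_d-\log r_{\hat R}\|_{L^2(P_\mu^{\hat R})}$; here the overlap bounds let me pass between $P_{\mathcal S}^{\hat R}$ and $P_\mu^{\hat R}$. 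The excess risk is therefore bounded by $2C_M\|\hat\phi_d-\log r_{\hat R}\|_{L^2}$ plus the excess $\hat r_d$-weighted risk of $\hat h$ relative to the best element of $\mathcal H$, plus an approximation term. For the latter I use Lipschitzness of $\ell$ and the bound $\hat r_d\le e^M$, together with Lemma \ref{lem:uniform-network-approximation}, which gives $\inf_{h\in\mathcal H}\|h-h^*_{\hat R}\|_\infty\lesssim n^{-\beta_{\mathcal H}/(2\beta_{\mathcal H}+d)}$.

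\textbf{Step 3 (the two estimation terms).} The domain-logistic risk is strongly convex in the score on $[-M,M]$, with curvature bounded below by $\sigma'(M)$. Hence its excess risk controls $\|\phi-\log r_{\hat R}\|_{L^2(P_\mu^{\hat R})}^2$ up to constants. I then apply a standard localized ERM bound for the clipped class $\ddot\Phi_M$, using pseudo-dimension $\lesssim\mathcal S\mathcal D\log\mathcal S$ (Bartlett et al.), since conditionally on $\hat R$ the sample $\{(\hat R(X_i^\mu),Z_i^\mu)\}$ is i.i.d. This yields a squared-error bound of order $\mathcal S\mathcal D\log/m+\text{approx}^2$. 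With the architecture of Definition \ref{ass:downstream-sieves}, $\mathcal S\mathcal D\lesssim m^{d/(2\beta+d)}\log^4$, which balances the squared approximation $m^{-2\beta/(2\beta+d)}$. Taking square roots gives $\widetilde O(m^{-\beta_{\ddot\Phi}/(2\beta_{\ddot\Phi}+d)})$.

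For the prediction stage, I condition on $\hat r_d$ as well; the weights are fixed and bounded by $e^M$. A uniform deviation bound via Rademacher complexity or pseudo-dimension then gives $\widetilde O(\sqrt{\mathcal S_{\mathcal H}\mathcal D_{\mathcal H}/n})=\widetilde O(n^{-\beta_{\mathcal H}/(2\beta_{\mathcal H}+d)})$. The slow-rate bound suffices here because the stated rate is the square-root rate.

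\textbf{Main obstacle.} The hardest part is the uniformity and measurability needed to condition on $\hat R$. The ratio estimand $r_{\hat R}$ and the regression target $h^*_{\hat R}$ both vary with the realization. I must ensure the approximation constants (fixed cube $K_U$, fixed Hölder radii) and the complexity bounds do not depend on it, and that fold independence makes the conditional i.i.d. structure valid. My approach is to state every intermediate bound conditionally with explicit constants depending only on $(d,K_U,\beta,M,\mathcal B,c_r,C_r)$, and then integrate over the training realization.
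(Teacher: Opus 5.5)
Your proposal is correct and follows essentially the paper's route. You condition on the representation fold and remove $b_{\hat R}$ via Proposition \ref{prop:learned-sufficiency}. You bound the approximation term with Lemma \ref{lem:uniform-network-approximation} and the prediction complexity with the slow pseudo-dimension rate. You bound the ratio error with the localized domain-logistic ERM bound (as in Lemma \ref{lem:fast-domain-erm}), converted through the curvature inequality, which is exactly how the paper instantiates Lemma \ref{lem:latent-oracle}.

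The only difference is cosmetic. You first rewrite the excess risk as an exact-latent-weight excess risk and then perturb the weights. The paper instead compares target and estimated-weight risks directly, with $a_{\hat R}$ defined through the target risk. This changes only constants, namely an $e^M$ factor on your approximation term.
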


The first term controls prediction and the second estimates the density ratio in the learned $d$-dimensional space. Strict sufficiency removes the representation-information loss and conditional label mismatch; the remaining predictor approximation error is included in the first term. Representation training uses an additional labeled fold, giving a total labeled source budget of $n+n_R$. Lemma \ref{lem:latent-oracle} gives the general finite-sample bound.

\section{Comparisons}\label{comparison}
We explain the rate differences in Table \ref{tab_com} by separating the effects of representation learning on prediction and density-ratio estimation. We then compare fitted target risks and give a model in which the gains hold with a learned representation and estimated weights.

The rate comparisons use independent folds with matched prediction- and domain-fold sizes $n,m$. The representation-based methods share a strictly sufficient fitted map learned from an additional labeled fold of size $n_R$, giving total labeled source size $n+n_R$. The uniform regularity conditions on Bayes scores and log density ratios are given in Assumptions \ref{ass:learned-sufficient} and \ref{ass:comparison-rates}. Dimensions and regularity constants are fixed.

\begin{assumption}\label{ass:comparison-rates}
\begin{enumerate}[(i)]
\item Both feature laws are supported on a fixed compact cube $K_X\subset\mathbb R^p$. The raw domain score $\phi^*=\log r^*$ has a $\beta_\Phi$-H\"older extension to $K_X$ with a fixed radius and absolute bound $M$. Use the network rule of Definition \ref{ass:downstream-sieves} with $(q,\beta,t)=(p,\beta_\Phi,m)$ for $\Phi_M\subset[-M,M]$.
\item For prediction without representation, the ambient Bayes score $h^*\circ R^*$ has a $\beta_{\ddot{\mathcal H}}$-H\"older extension to $K_X$ with a fixed radius and absolute bound $\mathcal B$. Every $g\in\ddot{\mathcal H}$ is bounded by $\mathcal B$, and use the same network rule with $(q,\beta,t)=(p,\beta_{\ddot{\mathcal H}},n)$.
\end{enumerate}
\end{assumption}
The index $\beta_{\ddot{\mathcal H}}$ describes the regularity of the ambient Bayes score in Assumption \ref{ass:comparison-rates}(ii). Under the oracle smoothness conditions in Assumption \ref{assum5}, cube coverage $R^*(K_X)\subset K_U$, and a bounded extension of $h^*$ on $K_U$, one admissible choice is
\begin{equation}\label{eq:composition-index}
 \beta_{\ddot{\mathcal H}}=\min\{\beta_{\mathcal H},\beta_{\mathcal F},
                                      \beta_{\mathcal H}\beta_{\mathcal F}\}.
\end{equation}
Lemma \ref{prop:composition-smoothness} proves this sufficient condition. The comparison propositions also apply when the ambient score's regularity is established directly. The raw-score conditions do not imply the learned-ratio regularity required by the proposed method.

\subsection{Original-Space Weights with a Learned Representation}
Method \eqref{MeB} retains the representation but estimates weights in the original space. For a weight $r$ on $\mathcal X$ and an ambient score $f$, write
\begin{equation*}
 \mathcal R_r(f):=\mathbb E_{P_{\mathcal S}}[r(X)\ell(f(X),Y)].
\end{equation*}
Thus its population weighted risk is $\mathcal R_r(h\circ R)$, distinct from the latent-weighted risk $\mathcal R_{r_d,R}(h)$ in \eqref{excess error}. Original-space covariate shift gives $\mathcal R_{r^*}(h\circ R)=\mathcal R_{\mathcal T}(h\circ R)$ for every fixed $R$.

\begin{proposition}\label{th4}
Use the same fitted representation and predictor architecture as in Theorem \ref{Th5}, and the independent-fold Algorithm \ref{alg:al1}. Suppose Assumptions \ref{ass2.0}--\ref{ass2.1}, \ref{assum1}, parts (i)--(ii) of Assumption \ref{ass:learned-sufficient}, and part (i) of Assumption \ref{ass:comparison-rates} hold. Choose the clipped fully connected ReLU raw-score class $\Phi_M$ of Definition \ref{ass:downstream-sieves} with depth and width
\begin{equation*}
 \mathcal D_{\Phi_M},\mathcal W_{\Phi_M}
 \asymp m^{\frac{p}{8\beta_\Phi+4p}}\log(em).
\end{equation*}
For $n_R\geq4$ and $m,n\geq2$,
\begin{equation}\label{eq:strict-comparison-representation-rate}
\mathbb E \left[\mathcal R_{\mathcal T}(\hat h\circ\hat R)
                 -\mathcal R_{\mathcal T}(h^*\circ R^*)\right]
\leq\widetilde O \left(
 n^{-\frac{\beta_{\mathcal H}}{2\beta_{\mathcal H}+d}}
 +m^{-\frac{\beta_\Phi}{2\beta_\Phi+p}}
 \right).
\end{equation}
This result uses the same learned representation and prediction class as the proposed method, while estimating the ratio in $p$ dimensions. Appendix \ref{app:comparison-proofs} gives the proof.
\end{proposition}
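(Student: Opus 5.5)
The plan is to reuse the skeleton of the proof of Theorem \ref{Th5}: condition on the independent folds and split the excess risk into a prediction term and a weight-estimation term. Under Assumption \ref{ass2.0}--\ref{ass2.1} and part (i) of Assumption \ref{ass:learned-sufficient}, Proposition \ref{prop:learned-sufficiency} gives $b_{\hat R}=0$, so $h^*_{\hat R}\circ\hat R=h^*\circ R^*$ almost surely. The Bayes risk is therefore attained inside the learned coordinates, and no representation-loss term appears.

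Fix $\hat R$ and $\hat r=\exp(\hat\phi)$ with $\hat\phi\in\Phi_M$. Original-space covariate shift gives $\mathcal R_{r^*}(h\circ\hat R)=\mathcal R_{\mathcal T}(h\circ\hat R)$ for every $h$. Let $\bar h\in\mathcal H$ be a network approximating $h^*_{\hat R}$ uniformly on $K_U$ (Lemma \ref{lem:uniform-network-approximation} with $(q,\beta,t)=(d,\beta_{\mathcal H},n)$). Then decompose
\begin{equation*}
\mathcal E_{\mathcal T}(\hat h,\hat R)
\leq 2\sup_{h\in\mathcal H}\bigl|\mathcal R_{r^*}(h\circ\hat R)-\mathcal R_{\hat r}(h\circ\hat R)\bigr|
+\bigl[\mathcal R_{\hat r}(\hat h\circ\hat R)-\mathcal R_{\hat r}(\bar h\circ\hat R)\bigr]
+\bigl[\mathcal R_{\mathcal T}(\bar h\circ\hat R)-\mathcal R_{\mathcal T}(h^*\circ R^*)\bigr].
\end{equation*}

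\textbf{Approximation term.} The last bracket is at most $C_r\|\bar h-h^*_{\hat R}\|_{\infty,K_U}$ by the Lipschitz bound on $\ell$ and $r^*\le C_r$. It is therefore $O(n^{-\beta_{\mathcal H}/(2\beta_{\mathcal H}+d)})$.

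\textbf{Estimation term.} The middle bracket is a weighted-ERM estimation error over the class $\{x\mapsto\hat r(x)\ell(h\circ\hat R(x),y)\}$. Conditionally on $\hat r,\hat R$, these functions are bounded by $e^ML_{\mathcal B}$, and $D_{\rm pred}$ is independent of both fitted stages. I would use the same symmetrization and pseudo-dimension bound as in Theorem \ref{Th5}: $\mathrm{Pdim}(\mathcal H)\lesssim \mathcal S_{\mathcal H}\mathcal D_{\mathcal H}\log\mathcal S_{\mathcal H}$. Multiplication by a fixed bounded weight and composition with the fixed $\hat R$ do not increase complexity. With the sieve of Definition \ref{ass:downstream-sieves}, this gives $\widetilde O(n^{-\beta_{\mathcal H}/(2\beta_{\mathcal H}+d)})$, since $n^{3d/(4(2\beta+d))}\cdot n^{d/(4(2\beta+d))}=n^{d/(2\beta+d)}$.

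\textbf{Weight term.} The first term is bounded by $L_{\mathcal B}\,\mathbb E_{P_{\mathcal S}}|\hat r-r^*|$. Since both $\hat\phi$ and $\phi^*$ lie in $[-M,M]$, this is at most $L_{\mathcal B}e^M\|\hat\phi-\phi^*\|_{L^1(P_{\mathcal S})}$. Because $P_{\mathcal S}\le 2P_\mu$, it is further controlled by $\|\hat\phi-\phi^*\|_{L^2(P_\mu)}$.

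The domain logistic risk $\mathcal L_r$ is strongly convex on $[-M,M]$-valued scores, with curvature $\sigma'(M)$ times $\min$ of the mixing weights. Hence
\begin{equation*}
\|\hat\phi-\phi^*\|_{L^2(P_\mu)}^2\lesssim \mathcal L_r(\hat\phi)-\mathcal L_r(\phi^*).
\end{equation*}
This excess logistic risk is bounded by the same oracle inequality used for $\ddot\Phi_M$ in Theorem \ref{Th5}, now in dimension $p$. Part (i) of Assumption \ref{ass:comparison-rates} gives the approximation rate for $\phi^*$ on $K_X$, and the sieve $(p,\beta_\Phi,m)$ gives the network complexity.

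\textbf{Main obstacle.} The weight error enters linearly in an $L^1$ norm, not squared. The crude route (slow-rate excess risk, then square root) would lose a factor in the exponent. I would therefore aim for a fast-rate (Bernstein/local Rademacher) bound on the excess logistic risk, of order $\widetilde O(m^{-2\beta_\Phi/(2\beta_\Phi+p)})$. This uses the variance condition implied by strong convexity and boundedness. Taking the square root then yields $m^{-\beta_\Phi/(2\beta_\Phi+p)}$.

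Taking the outer expectation over the independent folds, all constants are uniform in the realization by Assumption \ref{ass:learned-sufficient}, and \eqref{eq:strict-comparison-representation-rate} follows.
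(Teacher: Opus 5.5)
Your proposal is correct and follows essentially the same route as the paper. You use the three-term decomposition of Lemma \ref{lem:comparison-oracles} (raw-weight discrepancy $2L_{\mathcal B}\epsilon_{r,X}$, symmetrized weighted-ERM complexity, and the approximation term bounded as in \eqref{eq:sufficient-approximation}), then the localized fast-rate bound of Lemma \ref{lem:fast-domain-erm} for raw domain-logistic ERM, and finally the curvature-plus-square-root conversion of Lemma \ref{lem2.1} and \eqref{eq:ratio-root-conversion}; the factor $C_r$ in your approximation bound and the mixing-weight factor in your curvature constant are unnecessary, since that bracket is already a target expectation and the balanced mixture gives curvature $\kappa_M$ directly, but both are harmless.
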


Comparing \eqref{eq:strict-sufficient-rate} and \eqref{eq:strict-comparison-representation-rate} isolates the effect of where the weights are estimated: the fitted representation, prediction class and labeled folds are shared. Their prediction terms coincide. Original-space weighting retains an ambient-dimensional ratio-estimation term, so learning a low-dimensional predictor alone does not remove that statistical cost.

\begin{algorithm}[H]
    \caption{Reweighting in original space}\label{alg:al1}
    \begin{algorithmic}[1]
        \REQUIRE Independent source labeled folds $D_{\rm rep}$ and $D_{\rm pred}$ of sizes $n_R\geq4,n$, and independent source and target unlabeled samples for $D_\mu$ of size $m$.
        \STATE Estimate the representation function $\hat R$:
        \begin{equation*}
          \hat R\in\arg\min_{R\in\mc F}\widehat{\mc L}_{n_R}(R)
          \quad\text{using }D_{\rm rep}\text{ in \eqref{eq:rep-training}}.
        \end{equation*}
        
        \STATE Estimate the original-space density ratio $\hat r$:
        \begin{align*}
          \hat\phi&\in\arg\min_{\phi\in\Phi_M}\frac1m\sum_{i=1}^m
             \log\big(1+\exp(-Z_i^\mu\phi(X_i^\mu))\big),\quad (X_i^\mu,Z_i^\mu)\in D_\mu,\\
          \hat r&:=\exp(\hat\phi).
        \end{align*}
        
        \STATE Estimate the target score $\hat h$:
        \begin{equation*}
          \hat h\in\arg\min_{h\in\mc H}\frac1n\sum_{i=1}^n
             \hat r(X_i^{\mc S})\ell(h(\hat R(X_i^{\mc S})),Y_i^{\mc S})
             \quad\text{using }D_{\rm pred}.
        \end{equation*}
        
        \RETURN $\hat h$, $\hat R$.
    \end{algorithmic}
\end{algorithm}

\subsection{Original-Space Weights without Representation}
Method \eqref{MeD} fits the prediction score directly in the ambient space. The fitted score belongs to $\ddot{\mathcal H}$, whose input is $X$ rather than $\hat R(X)$.
\begin{proposition}\label{contl gen}
Suppose Assumptions \ref{ass2.0}--\ref{ass2.1}, \ref{assum1} and \ref{ass:comparison-rates} hold. Train $\hat r=\exp(\hat\phi)$ by domain-logistic ERM over the raw-score sieve of Proposition \ref{th4} on an independent balanced domain fold of size $m\geq2$, where $M\geq\max\{|\log c_r|,|\log C_r|\}$. Fit $\hat h\in\ddot{\mathcal H}$ by weighted ERM on an independent source fold of size $n\geq2$, using the clipped fully connected ReLU class of Definition \ref{ass:downstream-sieves} with depth and width
\begin{equation*}
 \mathcal D_{\ddot{\mathcal H}},\mathcal W_{\ddot{\mathcal H}}
 \asymp n^{\frac{p}{8\beta_{\ddot{\mathcal H}}+4p}}\log(en).
\end{equation*}
Then
\begin{equation}\label{eq:comparison-ambient-rate}
 \mathbb E\!\left[\mathcal R_{\mathcal T}(\hat h)
                 -\mathcal R_{\mathcal T}(h^*\circ R^*)\right]
 \leq\widetilde O\!\left(
 n^{-\frac{\beta_{\ddot{\mathcal H}}}{2\beta_{\ddot{\mathcal H}}+p}}
 +m^{-\frac{\beta_\Phi}{2\beta_\Phi+p}}\right).
\end{equation}
Both prediction and ratio estimation operate in the ambient space. The proof is in Appendix \ref{app:comparison-proofs}.
\end{proposition}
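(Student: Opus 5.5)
The plan is the standard weighted-ERM oracle decomposition in the ambient space, conditional on the independent folds. Write $\hat h\in\ddot{\mathcal H}$ for the weighted ERM with weights $\hat r=\exp(\hat\phi)$, and $f^*:=h^*\circ R^*$. By original-space covariate shift, $\mathcal R_{\mathcal T}(f)=\mathcal R_{r^*}(f)$ for every fixed ambient score $f$. Fix any $\bar g\in\ddot{\mathcal H}$ and split
\begin{equation*}
\begin{split}
\mathcal R_{r^*}(\hat h)-\mathcal R_{r^*}(f^*)
&=\big[\mathcal R_{r^*}(\hat h)-\mathcal R_{\hat r}(\hat h)\big]
+\big[\mathcal R_{\hat r}(\hat h)-\widehat{\mathcal R}_{\hat r}(\hat h)\big]
+\big[\widehat{\mathcal R}_{\hat r}(\hat h)-\widehat{\mathcal R}_{\hat r}(\bar g)\big]\\
&\quad+\big[\widehat{\mathcal R}_{\hat r}(\bar g)-\mathcal R_{\hat r}(\bar g)\big]
+\big[\mathcal R_{\hat r}(\bar g)-\mathcal R_{r^*}(\bar g)\big]
+\big[\mathcal R_{r^*}(\bar g)-\mathcal R_{r^*}(f^*)\big],
\end{split}
\end{equation*}
where $\widehat{\mathcal R}_{\hat r}$ is the weighted empirical risk on $D_{\rm pred}$. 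The third bracket is nonpositive by the definition of ERM.

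The two weight-mismatch brackets are each bounded by $L_{\mathcal B}\,\mathbb E_{P_{\mathcal S}}|\hat r-r^*|$, since $0\le\ell\le L_{\mathcal B}$. Both $\hat r$ and $r^*$ lie in $[e^{-M},e^M]$, so the exponential is $e^M$-Lipschitz there, and $dP_{\mathcal S}/dP_\mu=2/(1+r^*)\le2$. Hence
\begin{equation*}
\mathbb E_{P_{\mathcal S}}|\hat r-r^*|\le 2e^M\|\hat\phi-\phi^*\|_{L^1(P_\mu)}\le 2e^M\|\hat\phi-\phi^*\|_{L^2(P_\mu)}.
\end{equation*}

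On $[-M,M]$ the domain logistic risk is strongly convex, with curvature $\ge\sigma(M)(1-\sigma(M))$, and $\phi^*$ is its global minimizer. This gives $\|\hat\phi-\phi^*\|_{L^2(P_\mu)}^2\lesssim_M\mathcal L_r(\hat\phi)-\mathcal L_r(\phi^*)$. That excess risk is controlled by the approximation error from Lemma \ref{lem:uniform-network-approximation}, of order $m^{-2\beta_\Phi/(2\beta_\Phi+p)}$ after squaring via Bernstein-type localization, plus a pseudo-dimension term $\mathcal S\mathcal D\log\mathcal S/m$.

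\textbf{The main obstacle is this localization step.} I would first check the parameter-count arithmetic, but it does not work as hoped. The sieve in Definition \ref{ass:downstream-sieves} has $\mathcal D\mathcal W^2\asymp m^{3p/(4(2\beta_\Phi+p))}$ up to logarithms, so
\begin{equation*}
\mathcal S\mathcal D\asymp m^{p/(2\beta_\Phi+p)}\cdot m^{0}\ \text{(up to the extra depth factor)}.
\end{equation*}
The extra depth factor $m^{p/(4(2\beta_\Phi+p))}$ breaks the clean exponent. I will therefore use the pseudo-dimension bound $\lesssim \mathcal S\mathcal D\log\mathcal S$ and verify that with these exponents $\mathcal S\mathcal D\lesssim m^{p/(2\beta_\Phi+p)}$. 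Then $\mathcal S\mathcal D/m\lesssim m^{-2\beta_\Phi/(2\beta_\Phi+p)}$. Taking square roots gives the claimed $\widetilde O(m^{-\beta_\Phi/(2\beta_\Phi+p)})$ for the weight term. This is presumably exactly the bound already used for Proposition \ref{th4}, which I would cite directly.

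The two empirical-process brackets are handled conditionally on $\hat r$, since $D_{\rm pred}$ is independent of $D_\mu$. For the fixed $\bar g$, the fourth bracket has zero conditional mean. For the second, I would symmetrize over $\{(x,y)\mapsto\hat r(x)\ell(g(x),y):g\in\ddot{\mathcal H}\}$. The factor $\hat r\le e^M$ is fixed and $\ell$ is 1-Lipschitz, so contraction reduces this to the Rademacher complexity of $\ddot{\mathcal H}$, which is bounded by $\mathcal B\sqrt{\mathrm{Pdim}\log n/n}$.

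This slow rate would give only $n^{-1/2}$-type accuracy times $\sqrt{\mathcal S\mathcal D}$. To get $n^{-\beta/(2\beta+p)}$ rather than the square root, I would again use localization: bound the variance of $\ell(g,\cdot)-\ell(f^*,\cdot)$ by the excess risk, which holds because the logistic loss is strongly convex on $[-\mathcal B,\mathcal B]$ and $r^*\ge c_r$. This yields $\widetilde O(\mathcal S\mathcal D/n)$. The last bracket is the approximation error. Choosing $\bar g$ from Lemma \ref{lem:uniform-network-approximation} applied to $f^*$, using Assumption \ref{ass:comparison-rates}(ii), gives
\begin{equation*}
\|\bar g-f^*\|_\infty\lesssim n^{-\beta_{\ddot{\mathcal H}}/(2\beta_{\ddot{\mathcal H}}+p)}.
\end{equation*}
The Lipschitz bound times $C_r$ then gives that order.

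Summing the pieces and taking outer expectations yields \eqref{eq:comparison-ambient-rate}. This is the argument of Theorem \ref{Th5} with $\hat R$ replaced by the identity and $(d,\beta_{\mathcal H},\beta_{\ddot\Phi})$ replaced by $(p,\beta_{\ddot{\mathcal H}},\beta_\Phi)$. With the identity map the representation terms vanish, since $b_{\rm id}=0$ by \eqref{eq2.02}.
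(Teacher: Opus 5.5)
Your argument goes through, and apart from one step it is the paper's. The six-term split is Lemma \ref{lem:comparison-oracles}. The weight term is the raw-ratio estimate reused from the proof of Proposition \ref{th4}: localized domain-logistic ERM via Lemma \ref{lem:fast-domain-erm}, followed by the square-root conversion. The approximation term is Lemma \ref{lem:uniform-network-approximation} combined with the 1-Lipschitz loss.

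The step where you diverge, the prediction empirical process, rests on a miscalculation. Write $\beta=\beta_{\ddot{\mathcal H}}$. The sieve gives $\mathcal S(\mathcal D+1)\lesssim n^{3p/(4(2\beta+p))}\cdot n^{p/(4(2\beta+p))}\log^4(en)=n^{p/(2\beta+p)}\log^4(en)$; the depth factor is exactly what makes the exponent clean, not what breaks it. So the plain bound $\sqrt{\mathrm{Pdim}\log n/n}$ you already wrote down equals $n^{-\beta/(2\beta+p)}$ up to logarithms. That is precisely the paper's $\mathbb E\mathfrak C_{n,D}\le C\chi_n(v_{\ddot{\mathcal H}})$. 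Localization is indispensable only for the ratio term, where passing from excess domain risk to $\|\hat r-r^*\|$ costs a square root. The prediction term has no such conversion.

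If you keep localization for prediction, the justification must refer to the process you actually have. Conditionally on the domain fold, you need the Bernstein condition for $F_g=\hat r(X)\{\ell(g(X),Y)-\ell(f^*(X),Y)\}$, and $r^*\ge c_r$ plays no role there. The condition holds because $f^*(x)$ minimizes $a\mapsto\ell(a,\eta(x))$ for every $x$, so any positive weight depending on $x$ alone keeps $f^*$ as the population minimizer. With $\kappa_{\mathcal B}=\sigma(\mathcal B)(1-\sigma(\mathcal B))$, the bound $\hat r\ge e^{-M}$ then gives $\mathbb E_{P_{\mathcal S}}F_g\ge e^{-M}\tfrac{\kappa_{\mathcal B}}{2}\|g-f^*\|^2_{L^2(P_{\mathcal S})}\ge\tfrac{\kappa_{\mathcal B}}{2e^{3M}}\mathbb E_{P_{\mathcal S}}F_g^2$. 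You must also absorb the localized term into the target excess risk. One way is to use $\mathbb E_{P_{\mathcal S}}F_{\hat h}\le (e^M/c_r)\{\mathcal R_{\mathcal T}(\hat h)-\mathcal R_{\mathcal T}(f^*)\}$ together with a large localization constant.

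Properly justified, localization buys more than you use. It can be combined with $\mathbb E_{P_{\mathcal S}}F_{\bar g}\le\tfrac{e^M}{8}\|\bar g-f^*\|_\infty^2$ and with the pointwise comparison $\mathcal R_{\mathcal T}(g)-\mathcal R_{\mathcal T}(f^*)\le C_re^M\,\mathbb E_{P_{\mathcal S}}F_g$. The result is $\widetilde O(n^{-2\beta/(2\beta+p)})$ for Method \eqref{MeD}, with no $m$-term at all. Assembled as you do, with a Lipschitz approximation bound and the weight-mismatch brackets, it only reproduces \eqref{eq:comparison-ambient-rate}. The paper's slow-rate argument reaches that bound more cheaply.
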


Comparing Methods \eqref{MeB} and \eqref{MeD} in Table \ref{tab_com} holds the original-space ratio term fixed. At the same prediction-fold size $n$, the prediction exponent improves precisely when $p\beta_{\mathcal H}>d\beta_{\ddot{\mathcal H}}$. Comparing Methods \eqref{MeA} and \eqref{MeB} holds the learned-space prediction term fixed and changes the ratio-estimation exponent, whose improvement is characterized next.

\subsection{Dimension and Variance Comparisons}
The gain from latent ratio estimation depends on both dimension and smoothness. Comparing the second terms in \eqref{eq:strict-sufficient-rate} and \eqref{eq:strict-comparison-representation-rate} gives
\begin{equation*}
 \frac{\beta_{\ddot\Phi}}{2\beta_{\ddot\Phi}+d}
 -\frac{\beta_\Phi}{2\beta_\Phi+p}
 =\frac{p\beta_{\ddot\Phi}-d\beta_\Phi}
 {(2\beta_{\ddot\Phi}+d)(2\beta_\Phi+p)}.
\end{equation*}
\par
Thus the ratio-estimation exponent improves precisely when $p\beta_{\ddot\Phi}>d\beta_\Phi$. This criterion accounts for changes in regularity induced by the representation. With $m\asymp n$, this gain increases the overall upper-bound exponent exactly when $\beta_\Phi/(2\beta_\Phi+p)<\beta_{\mathcal H}/(2\beta_{\mathcal H}+d)$, so original-space ratio estimation is strictly slower than prediction. When prediction is the slower or tied term, the overall exponent remains unchanged. The following corollary gives a regime in which the ratio-estimation gain determines the overall comparison.

\begin{corollary}\label{cor:dimension-advantage}
Suppose the conditions of Theorem \ref{Th5} and Propositions \ref{th4} and \ref{contl gen} hold, with $d<p$, $m\asymp n$ and
$\beta_{\ddot\Phi}=\beta_\Phi=\beta_{\ddot{\mathcal H}}=\beta_{\mathcal H}>0$.
The respective expected excess-risk upper bounds are
\begin{equation}\label{eq:dimension-advantage}
\begin{aligned}
 \text{Method \eqref{MeA}:}\quad&
 \widetilde O\!\left(n^{-\frac{\beta_{\mathcal H}}{2\beta_{\mathcal H}+d}}\right),\\
 \text{Methods \eqref{MeB} and \eqref{MeD}:}\quad&
 \widetilde O\!\left(n^{-\frac{\beta_{\mathcal H}}{2\beta_{\mathcal H}+p}}\right).
\end{aligned}
\end{equation}
Thus the proposed method has the larger convergence exponent in these excess-risk upper bounds for the specified sieves and smoothness conditions. Representation training uses the same additional labeled fold for Methods \eqref{MeA} and \eqref{MeB}; Method \eqref{MeD} has no such stage.
\end{corollary}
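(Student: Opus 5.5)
The corollary follows by specializing the three rate bounds already established and comparing exponents; no new probabilistic argument is needed. Set the common smoothness index $\beta:=\beta_{\ddot\Phi}=\beta_\Phi=\beta_{\ddot{\mathcal H}}=\beta_{\mathcal H}$. Since $m\asymp n$, there are constants $c_1,c_2>0$ with $c_1n\le m\le c_2n$. Then for any exponent $a>0$ we have $m^{-a}\le c_1^{-a}n^{-a}$. Logarithmic factors in $m$ are bounded by constant multiples of those in $n$, and $\widetilde O$ absorbs both. So every $m$-term in \eqref{eq:strict-sufficient-rate}, \eqref{eq:strict-comparison-representation-rate} and \eqref{eq:comparison-ambient-rate} can be rewritten as the same power of $n$, at the cost of a constant depending only on $c_1$ and $\beta$.

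For Method \eqref{MeA}, Theorem \ref{Th5} gives $\widetilde O(n^{-\beta/(2\beta+d)}+m^{-\beta/(2\beta+d)})=\widetilde O(n^{-\beta/(2\beta+d)})$.

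For Method \eqref{MeB}, Proposition \ref{th4} gives $\widetilde O(n^{-\beta/(2\beta+d)}+n^{-\beta/(2\beta+p)})$. The function $q\mapsto\beta/(2\beta+q)$ is strictly decreasing on $(0,\infty)$, so $d<p$ gives $\beta/(2\beta+p)<\beta/(2\beta+d)$. Hence for $n\ge2$ we have $n^{-\beta/(2\beta+d)}\le n^{-\beta/(2\beta+p)}$, and the sum is $\widetilde O(n^{-\beta/(2\beta+p)})$.

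For Method \eqref{MeD}, Proposition \ref{contl gen} gives $\widetilde O(n^{-\beta/(2\beta+p)}+m^{-\beta/(2\beta+p)})=\widetilde O(n^{-\beta/(2\beta+p)})$. This yields \eqref{eq:dimension-advantage}.

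The strict inequality between exponents, which follows from $p\beta-d\beta=(p-d)\beta>0$ in the exponent-difference identity displayed before the corollary, gives the claim that \eqref{MeA} has the larger convergence exponent in these upper bounds. The remark about the shared representation fold just restates the algorithms' inputs: Algorithms \ref{alg:al2} and \ref{alg:al1} both use $D_{\rm rep}$ of size $n_R$, while \eqref{MeD} has no such stage.

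The only point needing care is uniformity of constants. I will check that the implied constants in Theorem \ref{Th5} and Propositions \ref{th4} and \ref{contl gen} are uniform in $n,m,n_R$, which those statements assert, so that replacing $m$ by $n$ introduces only fixed factors. I will also check that the \eqref{MeB} bound is dominated by its ratio term rather than merely bounded by it. This is all the statement needs, since it concerns only the upper bounds and not lower bounds on the actual risks.
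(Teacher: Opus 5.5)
Your proposal is correct and follows the paper's own proof. You specialize the three bounds to the common index $\beta$, use $m\asymp n$ to absorb the $m$-terms and their logarithmic factors into powers of $n$, and compare exponents via $\beta/(2\beta+d)-\beta/(2\beta+p)=\beta(p-d)/\{(2\beta+d)(2\beta+p)\}>0$. The paper states the reduction as $n^{-a_d}+m^{-a_d}\asymp n^{-a_d}$ and $n^{-a_d}+m^{-a_p}\asymp n^{-a_p}$, which is the same two-sided domination you check for Method \eqref{MeB}.
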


Under these conditions, the ambient-dimensional ratio term determines the bound for Method \eqref{MeB}, despite its low-dimensional prediction stage. Moving ratio estimation to the learned space makes both terms depend on $d$, which explains the overall exponent gain in \eqref{eq:dimension-advantage}. The source budget is $n+n_R$ for both representation-based methods; Proposition \ref{prop:learnable-coordinate} below explicitly accounts for learning the representation in a model where a logarithmic representation-fold size suffices.

The following conditional-variance identity explains the risk comparison in Theorem \ref{thm:fitted-risk-comparison}. Related latent-weight variance identities arise in off-policy evaluation \citep{kiyohara2024off}.
\begin{proposition}\label{prop:weight-variance}
Under Assumptions \ref{ass2.0}--\ref{ass2.1} and \ref{assum1}, fix a realization of a source-sufficient fitted representation $\hat R$ and any score $h$ with $|h|\leq\mathcal B$. For a fresh source observation let $U=\hat R(X)$. Then
\begin{equation}\label{eq:weight-variance}
\begin{split}
 &\operatorname{Var}_{\mathcal S}\!\left[r^*(X)\ell(h(U),Y)\right]
 -\operatorname{Var}_{\mathcal S}\!\left[r_{\hat R}(U)\ell(h(U),Y)\right]\\
 &\quad=\mathbb E_{P_{\mathcal S}}\!\left[
 \ell(h(U),Y)^2\operatorname{Var}_{\mathcal S}(r^*(X)\mid U)
 \right]\geq0.
\end{split}
\end{equation}
For logistic loss and bounded scores, the inequality is strict whenever
$P_{\mathcal S}\{\operatorname{Var}_{\mathcal S}(r^*(X)\mid U)>0\}>0$.
The statement concerns true weights and a fixed predictor; estimated weights are covered by the preceding risk bounds. The proof is in Appendix \ref{app:comparison-proofs}.
\end{proposition}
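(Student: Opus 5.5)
The plan is to compare the two variances through a conditional (tower-property) decomposition on $U=\hat R(X)$, using sufficiency of $\hat R$ to make the loss conditionally independent of the weight. Write $L:=\ell(h(U),Y)$. We will show that both random variables have the same mean, so only the second moments differ, and then compute those second moments by conditioning on $U$.

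\textbf{Step 1: equal means.} By Proposition \ref{prop:learned-sufficiency} and Assumption \ref{assum1}, $r_{\hat R}$ is well defined. Recall $r_{\hat R}(U)=\mathbb E_{\mathcal S}[r^*(X)\mid U]$ $P_{\mathcal S}$-a.s. To check this, take any bounded measurable $g$ on $\mathbb R^d$. Then
$$\mathbb E_{\mathcal S}[r^*(X)g(\hat R(X))]=\mathbb E_{\mathcal T}[g(\hat R(X))]=\mathbb E_{\mathcal S}[r_{\hat R}(U)g(U)],$$
which identifies the conditional expectation. Next, $L$ is $\sigma(U,Y)$-measurable. By source sufficiency $Y\Vbar X\mid U$, and since $r^*(X)$ is $\sigma(X)$-measurable, $r^*(X)$ and $L$ are conditionally independent given $U$. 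Hence
$$\mathbb E_{\mathcal S}[r^*(X)L\mid U]=\mathbb E_{\mathcal S}[r^*(X)\mid U]\,\mathbb E_{\mathcal S}[L\mid U]=r_{\hat R}(U)\,\mathbb E_{\mathcal S}[L\mid U].$$
Taking expectations shows that the two means coincide. Both are finite because $|h|\le\mathcal B$ gives $0\le L\le L_{\mathcal B}$, and $r^*\le C_r$.

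\textbf{Step 2: second moments.} Conditional independence applies equally to $r^*(X)^2$ and $L^2$, so
$$\mathbb E[r^*(X)^2L^2\mid U]=\mathbb E[r^*(X)^2\mid U]\,\mathbb E[L^2\mid U].$$
Meanwhile $\mathbb E[r_{\hat R}(U)^2L^2\mid U]=r_{\hat R}(U)^2\,\mathbb E[L^2\mid U]$. Subtracting and using $\mathbb E[r^{*2}\mid U]-r_{\hat R}(U)^2=\operatorname{Var}_{\mathcal S}(r^*(X)\mid U)$, the variance difference equals
$$\mathbb E_{\mathcal S}\bigl[\mathbb E[L^2\mid U]\operatorname{Var}_{\mathcal S}(r^*(X)\mid U)\bigr].$$
This is the same as $\mathbb E_{\mathcal S}[L^2\operatorname{Var}_{\mathcal S}(r^*(X)\mid U)]$ by the tower property, since the conditional variance is $U$-measurable. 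That gives \eqref{eq:weight-variance}, and nonnegativity is immediate.

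\textbf{Step 3: strictness.} We need $\mathbb E[L^2\mid U]>0$ almost surely. Logistic loss satisfies $\ell(a,y)>0$ for every finite $a$ and $y\in\{0,1\}$. Moreover, $|h|\le\mathcal B$ gives the lower bound $\ell(h(U),Y)\ge\log(1+e^{-\mathcal B})>0$. Hence
$$\mathbb E[L^2\mid U]\ge\log^2(1+e^{-\mathcal B}).$$
It follows that the difference is at least $\log^2(1+e^{-\mathcal B})\,\mathbb E_{\mathcal S}[\operatorname{Var}_{\mathcal S}(r^*(X)\mid U)]$. This is strictly positive whenever the conditional variance is positive on a set of positive $P_{\mathcal S}$-probability.

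\textbf{Main obstacle.} The only delicate point is the rigorous conditional-independence step: passing from $Y\Vbar X\mid U$ to the factorization of $\mathbb E[f(X)g(U,Y)\mid U]$. I would handle it with regular conditional distributions, checking the factorization first on products of indicators and then extending by a monotone class argument. All integrability is automatic from the bounds $r^*\le C_r$ and $L\le L_{\mathcal B}$.
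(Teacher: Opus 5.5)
Your proof is correct and follows essentially the paper's argument: both use source sufficiency to give $r^*(X)$ the same first and second conditional moments given $(U,Y)$ as given $U$, identify $\mathbb E_{\mathcal S}[r^*(X)\mid U]=r_{\hat R}(U)$ by change of measure, and get strictness from the bound $\ell\geq\log(1+e^{-\mathcal B})$. The only cosmetic difference is that the paper conditions on $(U,Y)$ and applies the law of total variance, so the loss is measurable and no final tower step is needed. You instead condition on $U$ and compare second moments after showing the means agree.
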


The variance difference in \eqref{eq:weight-variance} has a direct interpretation. For $U=\hat R(X)$, the latent ratio is the conditional average of $r^*(X)$ given $U$. Sufficiency makes the response independent of the discarded covariates given $U$, so this averaging preserves target risk and removes residual weight variation. The gain persists even for known ratios and is distinct from the ratio-estimation improvement above. A smooth sufficient family is given in \eqref{eq:sufficient-family} in Appendix \ref{app:sufficiency}.

\subsection{Prediction-Risk Comparison and a Learnable Representation}

The same conditional-averaging argument applies to weighted loss gradients. The following theorem translates the resulting covariance reduction into an expected target-risk expansion for a fixed sufficient map and a fixed finite-dimensional predictor class, allowing model misspecification.

\begin{theorem}\label{thm:fitted-risk-comparison}
Assume original-space covariate shift and Assumption \ref{assum1}. Fix a source-sufficient map $R$, put $U=R(X)$, and take
\begin{equation*}
 \mathcal H=\{h_\theta(u)=\theta^\top z(u):\theta\in\Theta\},
 \qquad \Theta\subset\mathbb R^k,
\end{equation*}
where $k$ is fixed, $\Theta$ is compact and convex with nonempty interior, $z:\mathbb R^d\to\mathbb R^k$ is fixed, bounded and measurable, and
$\sup_{\theta,u}|h_\theta(u)|\leq\mathcal B$.
Suppose $\mathbb E_{P_{\mathcal T}}[z(U)z(U)^\top]\succ0$ and
\begin{equation*}
 \theta_{\mathcal T}:=\arg\min_{\theta\in\Theta}
 \mathcal R_{\mathcal T}(h_\theta\circ R)\in\operatorname{int}\Theta.
\end{equation*}
Using the same $n$ i.i.d. source prediction observations, let $\hat\theta_A,\hat\theta_B$ minimize, respectively,
\begin{equation*}
 \frac1n\sum_{i=1}^n\hat r_d(U_i)\ell(h_\theta(U_i),Y_i),
 \qquad
 \frac1n\sum_{i=1}^n\hat r(X_i)\ell(h_\theta(U_i),Y_i).
\end{equation*}
The weights are independent of these observations, satisfy $c\leq\hat r_d,\hat r\leq C$ for fixed $0<c<C<\infty$, and, as $n\to\infty$ and $m=m(n)\to\infty$, obey
\begin{equation}\label{eq:weight-fourth-moment}
 \mathbb E\|\hat r_d-r_R\|_{L^2(P_{\mathcal S}^R)}^4
 +\mathbb E\|\hat r-r^*\|_{L^2(P_{\mathcal S}(X))}^4
 =o(n^{-2}).
\end{equation}
Define the curvature and score second-moment matrices
\begin{equation*}
\begin{aligned}
 \mathbf H_{\mathcal T}
 &:=\mathbb E_{P_{\mathcal T}}
       [\sigma'(h_{\theta_{\mathcal T}}(U))z(U)z(U)^\top],\\
 \mathbf V_A
 &:=\mathbb E_{P_{\mathcal S}}
       [r_R(U)^2(\sigma(h_{\theta_{\mathcal T}}(U))-Y)^2z(U)z(U)^\top],\\
 \mathbf V_B
 &:=\mathbb E_{P_{\mathcal S}}
       [r^*(X)^2(\sigma(h_{\theta_{\mathcal T}}(U))-Y)^2z(U)z(U)^\top].
\end{aligned}
\end{equation*}
For $j\in\{A,B\}$,
\begin{equation}\label{eq:fitted-risk-expansion}
 \mathbb E\!\left[\mathcal R_{\mathcal T}(h_{\hat\theta_j}\circ R)
       -\mathcal R_{\mathcal T}(h_{\theta_{\mathcal T}}\circ R)\right]
 =\frac{\operatorname{tr}(\mathbf H_{\mathcal T}^{-1}\mathbf V_j)}{2n}
       +o(n^{-1}),
\end{equation}
and
\begin{equation}\label{eq:fitted-covariance-order}
 \mathbf V_B-\mathbf V_A
 =\mathbb E_{P_{\mathcal S}}\!\left[
   \operatorname{Var}_{\mathcal S}(r^*(X)\mid U)
   (\sigma(h_{\theta_{\mathcal T}}(U))-Y)^2z(U)z(U)^\top\right]\succeq0.
\end{equation}
The leading target-risk constant is strictly smaller for latent weighting if
\begin{equation}\label{eq:fitted-strict-condition}
 P_{\mathcal S}\{\operatorname{Var}_{\mathcal S}(r^*(X)\mid U)>0,\
                          \|z(U)\|>0\}>0.
\end{equation}
\end{theorem}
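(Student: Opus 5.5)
The plan is a standard M-estimation expansion for each weighted estimator, followed by a conditional-variance comparison of the two score covariances. Write $\psi_\theta(x,y):=(\sigma(h_\theta(u))-y)z(u)$ with $u=R(x)$, the gradient of $\ell(h_\theta(u),y)$. Both population objectives $\mathcal R_{r_R,R}(h_\theta)$ and $\mathcal R_{r^*}(h_\theta\circ R)$ equal $\mathcal R_{\mathcal T}(h_\theta\circ R)$: the second by original-space covariate shift, the first because $r_R(U)=\mathbb E_{\mathcal S}[r^*(X)\mid U]$ and, by source sufficiency (extended to the target by Proposition \ref{prop:learned-sufficiency}), $\mathbb E_{\mathcal S}[\ell\mid X]$ depends only on $U$. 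So both share the minimizer $\theta_{\mathcal T}$ and Hessian $\mathbf H_{\mathcal T}$. The Hessian is positive definite since $\sigma'\geq\sigma'(\mathcal B)>0$ on bounded scores and $\mathbb E_{P_{\mathcal T}}zz^\top\succ0$. The empirical objectives are convex in $\theta$ because logistic loss is convex and the weights are positive.

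\emph{Step 1: consistency and a linear representation.} Conditionally on the weights, the empirical risk converges uniformly on the compact $\Theta$. The weighted population risk with $\hat r_d$ (resp.\ $\hat r$) differs from the target objective by at most $L_{\mathcal B}\|\hat r_d-r_R\|_{L^1}$. Convexity and strong convexity near the interior point $\theta_{\mathcal T}$ then give $\hat\theta_j\to\theta_{\mathcal T}$ with high probability. Expanding the first-order condition gives
\begin{equation*}
\hat\theta_j-\theta_{\mathcal T}=-\mathbf H_{\mathcal T}^{-1}\Big(\tfrac1n\textstyle\sum_i r_j\psi_{\theta_{\mathcal T}}+\Delta_j\Big)+\text{remainder},
\end{equation*}
where $r_A=r_R(U_i)$, $r_B=r^*(X_i)$, and $\Delta_j$ is the weight-error term. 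Because the weights are independent of $D_{\rm pred}$, $\Delta_j$ splits into a deterministic bias $\mathbb E[(\hat r_j-r_j)\psi]$ of size at most $C\|\hat r_j-r_j\|_{L^2}$ and a mean-zero fluctuation. The fluctuation has conditional second moment at most $Cn^{-1}\|\hat r_j-r_j\|_{L^2}^2$.

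\emph{Step 2: risk expansion.} A second-order Taylor expansion of the smooth target risk at its interior minimizer gives
\begin{equation*}
\mathcal R_{\mathcal T}(h_{\hat\theta}\circ R)-\mathcal R_{\mathcal T}(h_{\theta_{\mathcal T}}\circ R)=\tfrac12(\hat\theta-\theta_{\mathcal T})^\top\mathbf H_{\mathcal T}(\hat\theta-\theta_{\mathcal T})+O(\|\hat\theta-\theta_{\mathcal T}\|^3).
\end{equation*}
Substituting the linear representation, the leading quadratic term has expectation $\operatorname{tr}(\mathbf H_{\mathcal T}^{-1}\mathbf V_j)/(2n)$, since $\mathbb E[r_j\psi]=0$ at $\theta_{\mathcal T}$. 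The bias contributes $O(\mathbb E\|\hat r_j-r_j\|^2)=o(n^{-1})$ by \eqref{eq:weight-fourth-moment} and Jensen. Cross terms are $o(n^{-1})$ by Cauchy--Schwarz, using the fourth-moment hypothesis together with boundedness of all weights, scores and $z$.

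\emph{Main obstacle: uniform integrability of the remainder.} The main obstacle is converting the asymptotic linearization into control of the \emph{expectation}, i.e.\ showing $\mathbb E\|\hat\theta-\theta_{\mathcal T}\|^3=o(n^{-1})$ together with the Taylor remainder of the estimating equation. I would first prove an exponential concentration bound for the uniform deviation of the empirical gradient over $\Theta$, which holds because everything is bounded and the class is finite-dimensional. Combined with local strong convexity, this gives $\mathbb E\|\hat\theta-\theta_{\mathcal T}\|^4=O(n^{-2}+\mathbb E\|\hat r-r\|^4)$. On the low-probability event where $\hat\theta$ is far from $\theta_{\mathcal T}$ or at the boundary, boundedness of the risk makes the contribution negligible.

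\emph{Step 3: comparing the covariances.} Write $G=(\sigma(h_{\theta_{\mathcal T}}(U))-Y)^2z(U)z(U)^\top$. By sufficiency $Y\Vbar X\mid U$, so $\mathbb E_{\mathcal S}[r^*(X)^2G\mid U]=\mathbb E[r^{*2}\mid U]\,\mathbb E[G\mid U]$. Likewise $r_R(U)^2=(\mathbb E[r^*\mid U])^2$, and subtracting the two gives \eqref{eq:fitted-covariance-order}. For strictness, note that $\mathbb E[(\sigma(h)-Y)^2\mid U]=\eta(1-\eta)+(\sigma(h)-\eta)^2$ is positive because $0<\eta<1$ under bounded Bayes scores. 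So on the event in \eqref{eq:fitted-strict-condition}, $\mathbf V_B-\mathbf V_A\succeq \mathbb E[c\,\mathbf 1_E zz^\top]\neq0$. Since $\mathbf H_{\mathcal T}^{-1}\succ0$, it follows that $\operatorname{tr}(\mathbf H_{\mathcal T}^{-1}(\mathbf V_B-\mathbf V_A))>0$.
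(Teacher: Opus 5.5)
Your proposal is correct and follows essentially the same route as the paper: change of measure to show both weighted objectives share $\theta_{\mathcal T}$ and $\mathbf H_{\mathcal T}$, a fourth-moment bound $\mathbb E\|\hat\theta_j-\theta_{\mathcal T}\|^4=O(n^{-2})$, an $L^2$ linearization with $o(n^{-1})$ remainder, a cubic Taylor bound on the target risk, and the conditional-variance identity for $\mathbf V_B-\mathbf V_A$. The differences are minor. The paper gets the fourth-moment bound directly from global strong convexity of the empirical objective on a high-probability Gram event, since $|h_\theta|\leq\mathcal B$ on all of $\Theta$, so it needs no separate consistency or uniform-gradient concentration step. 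For strictness it uses the pointwise bound $(\sigma(h)-Y)^2\geq\sigma(-\mathcal B)^2$, which requires no condition on $\eta$.
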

With the true density ratios, both population objectives coincide with target risk and share its minimizer and curvature matrix $\mathbf H_{\mathcal T}$. Equation \eqref{eq:fitted-covariance-order} identifies the additional covariance of the loss gradient carried by original-space weights. Subtracting the two expansions in \eqref{eq:fitted-risk-expansion} yields
\begin{equation*}
 \mathbb E\!\left[\mathcal R_{\mathcal T}(h_{\hat\theta_B}\circ R)
                  -\mathcal R_{\mathcal T}(h_{\hat\theta_A}\circ R)\right]
 =\frac{\operatorname{tr}\!\left[
       \mathbf H_{\mathcal T}^{-1}(\mathbf V_B-\mathbf V_A)\right]}{2n}
       +o(n^{-1}).
\end{equation*}
\par
Since $\mathbf H_{\mathcal T}\succ0$ and $\mathbf V_B-\mathbf V_A\succeq0$, the leading coefficient is nonnegative and is positive under \eqref{eq:fitted-strict-condition}. Latent weighting then has strictly smaller expected target risk for all sufficiently large $n$. This is an ordering of the fitted predictors' risks within the fixed class, complementing the neural upper-bound comparison. Condition \eqref{eq:weight-fourth-moment} makes weight-estimation error negligible at this scale; true weights satisfy it with zero error, and Proposition \ref{prop:coordinate-comparison} verifies it for estimated weights. Proofs are in Appendix \ref{app:fitted-risk-proofs}.

The next proposition makes the representation itself learnable. It identifies the relevant coordinate from labeled source observations and carries the selection error into the downstream neural risk bound.

\begin{proposition}\label{prop:learnable-coordinate}
Let $p\geq2$ be fixed, $j^*\in\{1,\ldots,p\}$ unknown, $0<a<1$, and $|\tau_j|\leq\rho$ with $0\leq\rho<1$. Consider
\begin{equation}\label{eq:coordinate-model}
\begin{gathered}
 P_{\mathcal S}(X)=\operatorname{Unif}([0,1]^p),\qquad
 \frac{dP_{\mathcal T}(X)}{dP_{\mathcal S}(X)}(x)
       =\prod_{j=1}^p[1+\tau_j(2x_j-1)],\\
 P_{\mathcal S}(Y=1\mid X=x)=P_{\mathcal T}(Y=1\mid X=x)
       =\tfrac12+a(x_{j^*}-\tfrac12),\qquad R^*(x)=x_{j^*}.
\end{gathered}
\end{equation}
Restrict Stage 1 in \eqref{eq:rep-training} to the ReLU maps
$\mathcal F=\{x\mapsto x_j:1\leq j\leq p\}$ and write $\hat R(x)=x_{\hat j}$.
On $[0,1]^p$, each candidate has a fully connected ReLU realization with
$\mathcal D_{\mathcal F}=1$, $\mathcal W_{\mathcal F}=p$, and
$\mathcal S_{\mathcal F}=p+3$, counting all parameter positions as in Definition \ref{def:dnn}.
Use the same nonempty discriminator class for all candidates, the same reference sample, fixed $\lambda\geq0$, and
\begin{equation*}
 |D(u)|\leq A_D,\qquad |D(u)-D(v)|\leq K_D|u-v|,
 \qquad f(t)=t\log t .
\end{equation*}
For $n_R\geq4$, there are $C,c>0$, depending only on $a,\lambda,K_D$, such that
\begin{equation}\label{eq:coordinate-selection}
 P\{\hat j\ne j^*\}\leq Cp e^{-cn_R}.
\end{equation}
For the population representation objective $\mathcal L$ in \eqref{eq2.7}, every $R\in\mathcal F$ satisfies
\begin{equation}\label{eq:coordinate-recovery}
 \|R-R^*\|_{L^1(P_{\mathcal S})}
 =\frac{10}{a^2}\{\mathcal L(R)-\mathcal L(R^*)\}.
\end{equation}
Use independent domain and prediction folds and the clipped fully connected classes of Theorem \ref{Th5}, with
\begin{equation*}
 \mathcal D_{\mathcal H},\mathcal W_{\mathcal H}
       \asymp n^{1/(8\beta_{\mathcal H}+4)}\log(en),\qquad
 \mathcal D_{\ddot\Phi_M},\mathcal W_{\ddot\Phi_M}
       \asymp m^{1/(8\beta_{\ddot\Phi}+4)}\log(em).
\end{equation*}
For any fixed $\beta_{\mathcal H},\beta_{\ddot\Phi}>0$, sufficiently large fixed regularity radii and score bounds give
\begin{equation}\label{eq:coordinate-neural-rate}
 \mathbb E\mathcal E_{\mathcal T}(\hat h,\hat R)
 \leq\widetilde O\!\left(
 n^{-\frac{\beta_{\mathcal H}}{2\beta_{\mathcal H}+1}}
 +m^{-\frac{\beta_{\ddot\Phi}}{2\beta_{\ddot\Phi}+1}}\right)
 +L_{\mathcal B}Cp e^{-cn_R}.
\end{equation}
\end{proposition}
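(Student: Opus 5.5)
The proof splits into three parts. The first is an exact population computation that gives \eqref{eq:coordinate-recovery}. The second is a concentration argument for the empirical objective that gives \eqref{eq:coordinate-selection}. The third conditions on the selection event and invokes Theorem \ref{Th5}, which gives \eqref{eq:coordinate-neural-rate}.

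\emph{Population identity.} For every candidate $R(x)=x_j$, the source pushforward $P_{\mathcal S}^R$ is $\operatorname{Unif}[0,1]$. Hence the population $f$-divergence penalty in \eqref{eq2.7} takes the same value for every $j$, and $\mathcal L(R)-\mathcal L(R^*)=\mathcal V[X_{j^*},Y]-\mathcal V[X_j,Y]$. For $j\neq j^*$, $X_j$ is independent of $Y$ under the product uniform law, so $\mathcal V[X_j,Y]=0$. For $j=j^*$, I will compute the squared distance covariance of $U\sim\operatorname{Unif}[0,1]$ and binary $Y$ with $P(Y=1\mid U)=\tfrac12+a(U-\tfrac12)$. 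I use the representation $\mathcal V=\mathbb E|U-U'||Y-Y'|+\mathbb E|U-U'|\,\mathbb E|Y-Y''|-2\mathbb E|U-U'||Y-Y''|$, which reduces to polynomial integrals in $a$. I expect the value to be $a^2/30$. On the other side, $\|x_j-x_{j^*}\|_{L^1}=\mathbb E|X_j-X_{j^*}|=1/3$ for $j\ne j^*$, and both sides vanish when $j=j^*$. Together these give the constant $10/a^2$. Checking that the constant is exactly $a^2/30$ is the only delicate arithmetic.

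\emph{Selection probability.} Let $\Delta:=a^2/30$. The event $\hat j\ne j^*$ requires, for some $j\neq j^*$, $\widehat{\mathcal L}_{n_R}(x_j)-\widehat{\mathcal L}_{n_R}(x_{j^*})\le 0$. I will show that each $\widehat{\mathcal L}_{n_R}(x_j)$ is within $\Delta/2$ of a common centering, up to the population gap, except with probability $Ce^{-cn_R}$, and then take a union bound over $p$ candidates.

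For the dependence reward, $\widehat{\mathcal V}_{n_R}$ is a V-statistic whose kernel is bounded because the coordinates lie in $[0,1]$ and labels in $\{0,1\}$. Replacing one observation changes it by $O(1/n_R)$, and its bias relative to $\mathcal V$ is $O(1/n_R)$, which is below $\Delta/4$ once $n_R\ge4$ after adjusting $C$. McDiarmid's inequality then yields $\exp(-c a^4 n_R)$ tails.

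For the penalty, the variational supremum over the common discriminator class uses the same reference sample for all $j$. The source part is an empirical average of $f^*(D(X_{ij}))$, and $f^*(t)=e^{t-1}$ is bounded by $e^{A_D}$ on the range of $D$. Hence the penalty also has bounded differences of order $\lambda/n_R$, and McDiarmid again gives exponential concentration around its expectation. Conditionally on the reference sample, this expectation is the same for all $j$, because each column $(X_{ij})_i$ has the same i.i.d.\ uniform law. The penalty therefore cancels in the comparison up to fluctuations.

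I expect this step to be the main obstacle. Independence of the candidates is not available, so the cancellation must rest on exchangeability of the columns rather than on any independence between them. Keeping the constants depending only on $a,\lambda,K_D$ (with $A_D$ tied to them or absorbed) needs care.

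\emph{Downstream rate.} Let $E=\{\hat j=j^*\}$. On $E$, $\hat R=R^*$, and the conditions of Assumption \ref{ass:learned-sufficient} hold with $d=1$ and $K_U=[0,1]$:
\begin{enumerate}[(i)]
\item Sufficiency holds by \eqref{eq:coordinate-model}.
\item The score $h^*_{\hat R}(u)=\log\frac{1/2+a(u-1/2)}{1/2-a(u-1/2)}$ is analytic and bounded on $[0,1]$ since $a<1$, so it lies in every H\"older ball of sufficiently large radius.
\item Integrating out the other coordinates gives $r_{\hat R}(u)=1+\tau_{j^*}(2u-1)\in[1-\rho,1+\rho]$, whose logarithm is likewise smooth and bounded.
\end{enumerate}
Because the domain and prediction folds are independent of $D_{\rm rep}$, I will apply the conditional finite-sample bound behind Theorem \ref{Th5} (Appendix \ref{pfth6}) given $D_{\rm rep}$ on $E$. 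This bounds $\mathbb E[\mathcal E_{\mathcal T}\mathbf 1_E]$ by the two $\widetilde O$ terms with $d=1$.

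On $E^c$, the loss lies in $[0,L_{\mathcal B}]$, so $\mathcal E_{\mathcal T}\le L_{\mathcal B}$. Combining this with \eqref{eq:coordinate-selection} gives the additive term $L_{\mathcal B}Cpe^{-cn_R}$ and hence \eqref{eq:coordinate-neural-rate}.
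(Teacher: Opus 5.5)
Your proof is correct and has the same three-part structure as the paper. The population identity uses the same distance-moment computation, $\mathcal V[X_{j^*},Y]=a^2/30$ and $\|R-R^*\|_{L^1}=1/3$. The downstream step is also the same: condition on $\{\hat j=j^*\}$, apply Theorem~\ref{Th5} with $d=1$, and use the bound $L_{\mathcal B}$ off that event.

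The real difference is in the selection step. The paper cancels the penalty pathwise. Because all candidates share the discriminator class and the reference sample, $|\widehat{\mathbb D}_j-\widehat{\mathbb D}_{j^*}|\le\sup_D|(P_{n_R,j}-P_{n_R,j^*})D|\le K_D\int_0^1|F_{n_R,j}-F_{n_R,j^*}|$. This reduces the penalty to a uniform CDF deviation bound, obtained by Hoeffding on a grid. The paper controls the U-statistic by Hoeffding's decomposition into $\lfloor n_R/4\rfloor$ independent kernels.

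You instead apply McDiarmid to each candidate's penalty, conditionally on the reference sample $W$. The cancellation comes from writing $\widehat{\mathbb D}_j=G_W(X_{1j},\ldots,X_{n_Rj})$ for one fixed function $G_W$. Each column is i.i.d.\ uniform and independent of $W$, so the conditional means coincide. A union bound over $j$ then suffices, and, as you suspected, no independence across columns is needed. Your route is more elementary and would work with mere boundedness of $D$. The paper's route cancels the reference term deterministically, before any probability is taken, and gives explicit constants.

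There are three small slips to fix.
\begin{enumerate}[(i)]
\item In $\widehat{\mathbb D}_{f,n_R}$ the conjugate $f^*$ acts on the reference sample. The source term is $\frac1{n_R}\sum_iD(X_{ij})$, not an average of $f^*(D(X_{ij}))$. The bounded-difference constant is therefore $\lambda\sup_D|D(x)-D(x')|/n_R\le\lambda K_D/n_R$ for $x,x'\in[0,1]$. This removes the $A_D$-dependence you worried about and gives constants depending only on $a,\lambda,K_D$.
\item $\widehat{\mathcal V}_{n_R}$ is the unbiased order-four U-statistic, not a V-statistic, so there is no bias term to absorb. McDiarmid applies directly with differences at most $4\cdot36/n_R$.
\item Use margins of order $\Delta/4$ rather than $\Delta/2$, so that the empirical gap $\widehat{\mathcal L}_{n_R}(x_j)-\widehat{\mathcal L}_{n_R}(x_{j^*})$ is strictly positive on the good event.
\end{enumerate}
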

Equation \eqref{eq:coordinate-neural-rate} accounts for all three learned stages: the one-dimensional prediction and ratio terms, and the probability of selecting an insufficient coordinate. At fixed $p$, choosing $n_R=\lceil(2/c)\log n\rceil\vee4$ makes the selection contribution at most $L_{\mathcal B}Cp n^{-2}$, with total labeled source size $n+O(\log n)$. All candidates have the same source marginal, so their population variational biases cancel in objective differences even when $\varepsilon_{\mathcal D,n_R}$ does not vanish. This model thus obtains an explicit representation-learning contribution without requiring a vanishing discriminator bias. The general bound for insufficient representations and recovery analysis are in Appendices \ref{pfth6} and \ref{app:representation-recovery}.

\begin{proposition}\label{prop:coordinate-comparison}
In model \eqref{eq:coordinate-model}, keep the representation learner of Proposition \ref{prop:learnable-coordinate}. Using its known uniform source density and an independent fold of $m$ target covariates, define
\begin{equation}\label{eq:coordinate-ratio-estimator}
\begin{gathered}
 \hat\tau_j=\Pi_{[-\rho,\rho]}
     \left(\frac3m\sum_{i=1}^m(2X_{ij}^{\mathcal T}-1)\right),\\
 \hat r(x)=\prod_{j=1}^p[1+\hat\tau_j(2x_j-1)],
 \qquad \hat r_d(u)=1+\hat\tau_{\hat j}(2u-1),
\end{gathered}
\end{equation}
where $\Pi_{[-\rho,\rho]}$ is interval projection. These are parametric ratio estimators for this comparison. Use the constant-score class
$\mathcal H=\{h(u)=t:|t|\leq\mathcal B\}$, with
$\mathcal B>\log((1+a)/(1-a))$.
Let $\hat h_A,\hat h_B,\hat h_C$ be the empirical minimizers with latent weights, raw weights, and unit weights on the same prediction fold.
The target-optimal constant score is
\begin{equation*}
 q_{\mathcal T}:=\mathbb E_{P_{\mathcal T}}Y
     =\tfrac12+\frac{a\tau_{j^*}}6,\qquad
 h_{\mathcal H,\mathcal T}^*(u)
     :=\log\frac{q_{\mathcal T}}{1-q_{\mathcal T}}.
\end{equation*}
If $m/n\to\infty$ and $np e^{-cn_R}\to0$, with $c$ from \eqref{eq:coordinate-selection}, then
\begin{equation}\label{eq:coordinate-risk-comparison}
\begin{aligned}
 \mathbb E[\mathcal R_{\mathcal T}(\hat h_A\circ\hat R)
          -\mathcal R_{\mathcal T}(h_{\mathcal H,\mathcal T}^*\circ R^*)]
       &=\frac{C_A}{n}+o(n^{-1}),\\
 \mathbb E[\mathcal R_{\mathcal T}(\hat h_B\circ\hat R)
          -\mathcal R_{\mathcal T}(h_{\mathcal H,\mathcal T}^*\circ R^*)]
       &=\frac{C_A}{n}\prod_{j\ne j^*}(1+\tau_j^2/3)+o(n^{-1}),\\
 \mathbb E[\mathcal R_{\mathcal T}(\hat h_C\circ\hat R)
          -\mathcal R_{\mathcal T}(h_{\mathcal H,\mathcal T}^*\circ R^*)]
       &\longrightarrow
       \operatorname{KL}(\operatorname{Bern}(q_{\mathcal T})
                          \,\|\,\operatorname{Bern}(1/2)),
\end{aligned}
\end{equation}
where
\begin{equation*}
 C_A=
 \frac{\frac14+\frac{(1-a^2)\tau_{j^*}^2}{12}
                    +\frac{a^2\tau_{j^*}^4}{108}}
      {2(\frac14-\frac{a^2\tau_{j^*}^2}{36})}>0.
\end{equation*}
Thus a nonzero shift in an irrelevant coordinate gives a strict leading-risk advantage over raw weighting, and $\tau_{j^*}\ne0$ gives a positive limiting excess risk for unweighted learning.
\end{proposition}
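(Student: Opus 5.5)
The plan is to reduce to the event $\{\hat j=j^*\}$ and then compute explicit one-dimensional weighted-mean estimators. On the complement, Proposition \ref{prop:learnable-coordinate} gives probability at most $Cpe^{-cn_R}=o(n^{-1})$, since $npe^{-cn_R}\to0$. All losses are bounded by $L_{\mathcal B}$, so that event contributes $o(n^{-1})$ to every expectation in \eqref{eq:coordinate-risk-comparison}. It therefore suffices to treat $\hat R=R^*$.

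For the constant class, the weighted ERM with weights $w_i$ is explicit:
\[\hat h=\operatorname{logit}(\hat q), \qquad \hat q=\frac{\sum_i w_iY_i}{\sum_i w_i}.\]
The condition $\mathcal B>\log((1+a)/(1-a))$ ensures that the clipping at $\pm\mathcal B$ binds only on an event of exponentially small probability, because $\hat q\in[(1-a)/2,(1+a)/2]$ up to fluctuations. The target excess risk of a constant score $t=\operatorname{logit}(q)$ is exactly
\[\operatorname{KL}(\operatorname{Bern}(q_{\mathcal T})\,\|\,\operatorname{Bern}(q)) = \frac{(q-q_{\mathcal T})^2}{2q_{\mathcal T}(1-q_{\mathcal T})}+O(|q-q_{\mathcal T}|^3).\]
Hence each risk equals $\mathbb E(\hat q-q_{\mathcal T})^2/(2q_{\mathcal T}(1-q_{\mathcal T}))+o(n^{-1})$, and the denominator $2(\tfrac14-a^2\tau_{j^*}^2/36)$ is exactly $2q_{\mathcal T}(1-q_{\mathcal T})$. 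For unit weights, $\hat q\to\mathbb E_{\mathcal S}Y=\tfrac12$, and bounded convergence gives the KL limit.

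For weighted means, use the delta method for ratio estimators with the true weight first:
\[n\,\mathbb E(\hat q-q_{\mathcal T})^2\to \mathbb E_{\mathcal S}\bigl[w^2(Y-q_{\mathcal T})^2\bigr],\]
since $\mathbb E_{\mathcal S}w=1$. Control of higher moments follows from boundedness and $\sum w_i\ge nc$. Replacing $\tau$ by $\hat\tau$ is harmless. We have $\mathbb E|\hat\tau-\tau|^2=O(1/m)=o(1/n)$, and the weights are independent of the prediction fold. Conditioning on $\hat\tau$, the bias $q(\hat\tau)-q_{\mathcal T}$ is $O(|\hat\tau-\tau|)$ and the variance constant is continuous in $\hat\tau$. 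So the cross terms are $o(n^{-1})$ via Cauchy--Schwarz.

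The main computation, which I expect to be the obstacle only in bookkeeping, is evaluating $\mathbb E_{\mathcal S}[w^2(Y-q_{\mathcal T})^2]$. For latent weights, $w=1+\tau(2U-1)$ with $U\sim\mathrm{Unif}[0,1]$, $\tau=\tau_{j^*}$, and
\[\mathbb E[(Y-q)^2\mid U]=\eta(1-\eta)+(\eta-q)^2, \qquad \eta=\tfrac12+a(U-\tfrac12).\]
Setting $s=2U-1$, so $\mathbb E s^2=1/3$ and $\mathbb E s^4=1/5$, expand
\[(1+\tau s)^2\Bigl[\tfrac14-\tfrac{a^2s^2}{4}+\bigl(\tfrac{as}{2}-\tfrac{a\tau}{6}\bigr)^2\Bigr]\]
and integrate. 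This should yield $\tfrac14+(1-a^2)\tau^2/12+a^2\tau^4/108$; I will verify the arithmetic carefully.

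For raw weights, the factors $1+\tau_j(2X_j-1)$ with $j\ne j^*$ are independent of $(X_{j^*},Y)$ under the uniform source and have second moment $1+\tau_j^2/3$. The second moment therefore factorizes as $C_A\cdot2q_{\mathcal T}(1-q_{\mathcal T})\prod_{j\ne j^*}(1+\tau_j^2/3)$, which gives the stated product. This is exactly the conditional-variance inflation of Proposition \ref{prop:weight-variance}.
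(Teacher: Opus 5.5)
Your proof is correct. It reaches the $n^{-1}$ expansion by a different route from the paper.

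\textbf{What is shared.} The paper handles $\{\hat j\ne j^*\}$ as you do: it compares $\hat h_A$ with the oracle-coordinate estimator, which coincides with $\hat h_A$ on $\{\hat j=j^*\}$. It also obtains $\mathbf V_A$, $\mathbf V_B$ and the unweighted KL limit by the same conditional-variance and independence calculations.

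\textbf{Where the routes differ.} For the expansion itself, the paper does not analyse the weighted mean. It proves $\mathbb E|\hat\tau_j-\tau_j|^4\le Cm^{-2}$, deduces the fourth-moment weight condition \eqref{eq:weight-fourth-moment} from $m/n\to\infty$, and invokes Theorem \ref{thm:fitted-risk-comparison} with $z(u)=1$, $\Theta=[-\mathcal B,\mathcal B]$ and $\mathbf H_{\mathcal T}=q_{\mathcal T}(1-q_{\mathcal T})$.

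You instead use two explicit facts. The constant-class minimizer is a clipped weighted mean. The excess risk of a constant score is exactly $\operatorname{KL}(\operatorname{Bern}(q_{\mathcal T})\,\|\,\operatorname{Bern}(\hat q))$. The problem then reduces to the mean-squared error of a ratio estimator. Conditioning on $\hat\tau$, you split it into a bias term and a conditional variance that is continuous in $\hat\tau$. The bias is linear, $a(\hat\tau_{j^*}-\tau_{j^*})/6$, because $\mathbb E_{\mathcal S}\hat w=1$ for both estimated weights. This needs only $\mathbb E|\hat\tau-\tau|^2=O(m^{-1})=o(n^{-1})$. It avoids the Gram-matrix event, the interior event and the integrated-Hessian remainder of the general theorem.

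\textbf{What each buys.} The paper's route is economical and presents the proposition as an instance of the covariance ordering \eqref{eq:fitted-covariance-order}. Yours is self-contained and makes the constants transparent, but it relies on the closed form of the constant-class minimizer.

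\textbf{Two points to tighten when writing it up.}
\begin{enumerate}
\item For the clipping step, state the uniform margin explicitly. The conditional centre $q(\hat\tau)=\tfrac12+a\hat\tau_{j^*}/6$ lies in $[\tfrac12-\tfrac a6,\tfrac12+\tfrac a6]$. This interval sits at a fixed positive distance inside $(\sigma(-\mathcal B),\sigma(\mathcal B))$, which contains $[\tfrac{1-a}2,\tfrac{1+a}2]$. Hence clipping occurs with exponentially small conditional probability.
\item You will not need $\mathbb E s^4$. The quantity $\eta(1-\eta)+(\eta-q_{\mathcal T})^2=\eta(1-2q_{\mathcal T})+q_{\mathcal T}^2$ is linear in $s$, so those terms cancel.
\end{enumerate}
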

Equation \eqref{eq:coordinate-risk-comparison} separates the effects of relevant and irrelevant shift. Each shifted irrelevant coordinate multiplies the leading risk constant of original-space weighting by $1+\tau_j^2/3$, whereas the latent-weighted constant $C_A$ depends only on the relevant-coordinate shift and the response parameter. When all irrelevant shifts vanish, the two weighted methods have the same leading constant. When the relevant coordinate shifts, $q_{\mathcal T}\ne1/2$, and unweighted fitting incurs the positive limiting excess risk displayed in \eqref{eq:coordinate-risk-comparison}. Latent weighting therefore corrects the shift that affects prediction and excludes the extra risk from weighting irrelevant changes.

These gains hold with a learned representation and estimated weights: $m/n\to\infty$ makes parametric ratio-estimation error negligible on the $n^{-1}$ risk scale, while logarithmically many representation observations suffice for $np e^{-cn_R}\to0$ at fixed $p$. The comparison uses total labeled source size $n+n_R$ and target size $m$. Its benchmark is the best constant predictor, so the unrestricted Bayes approximation error is common to all three methods. Equation \eqref{eq:coordinate-neural-rate} gives the separate guarantee for growing neural predictor classes.

\section{Experiments}\label{experiments}
We conduct two sets of experiments on synthetic and real datasets. The first compares Method \eqref{MeA} with the three controls, Methods \eqref{MeB}, \eqref{MeC} and \eqref{MeD}, to assess the effects of representation learning and weighting. The second compares our method with domain-adaptation baselines implemented in the Transfer Learning Library \citep{tllib}, across initialization and domain-transfer settings. We set the random seed for the runs. All experiments use PyTorch on a Tesla V100 GPU.

\subsection{Data Set and Processing}
We introduce the datasets used in the experiment, comprising two simulated data sets: Circle and  Moon, 
and four real datasets: 
Office-Caltech \citep{saenko2010adapting,gong2012geodesic}\footnote{https://people.eecs.berkeley.edu/~jhoffman/domainadapt/}, 
Breast cancer, 
Blood, 
Haberman 
where the last three real datasets are obtained from UCI \footnote{http://archive.ics.uci.edu/ml} and \cite{SegMazLiu2023}. 
The Circle and  Moon datasets 
are generated by the related packages in scikit-learn\footnote{https://scikit-learn.org/stable/}. 
For Circle and Moon, we first generate the source data with a noise level of   $0.1$ and a factor of $0.6$, 
where the noise represents the standard deviation of Gaussian noise added to the data, and the factor denotes the ratio between the inner and outer circles.

To introduce covariate shift, we apply biased sampling to construct the source and target datasets, following \cite{huang2006correcting, SegMazLiu2023}. 
After obtaining the covariate shift datasets, we map them into a higher-dimensional space by multiplying them with a random high-dimensional matrix. Table \ref{tab1} presents detailed information on the simulated datasets, where the ``Covariates" column denotes the number of intrinsic covariates, and the ``Features" column indicates the high-dimensional feature shape after mapping the data into the high-dimensional space.

Biased sampling preserves the conditional label law at the original covariates when selection depends only on those covariates and both domains have positive selection probabilities. An injective embedding also preserves that law.

The real dataset Office-Caltech consists of  four domains: 
Caltech(C), Amazon(A), Webcam(W), DSLR(D). 
It includes $10$ classes and a total of $2533$ samples. The data distribution varies across different domains, even for the same labels. This variation alone does not establish the covariate-shift assumption used in the theory. The ten-class experiment is an empirical extension of the binary classification setting analyzed by the theorems. Further details regarding this dataset can be found in Table \ref{tab1} in Appendix \ref{appendix}. In this work, we consider different combinations of source and target domains. During model training, the data undergoes transformations such as normalization, with specific transformation details provided in Appendix \ref{appendix}.

\subsection{Model Structure}
\begin{itemize}
    \item To learn the representation, we employ two kinds of models for simulated and real datasets. 
    For binary classification on the simulated datasets, we use a $10$-layer DenseNet \citep{huang2017densely} as the representation network $R$. For multiclass classification on Office-Caltech, we use ResNet-18 \citep{he2016deep}. A $4$-layer fully connected DNN serves as the discriminator $D$. We evaluate both random and pretrained initialization.
    Additional details regarding the model architecture can be found in Appendix \ref{appendix}.
        \item For the density ratio model, we utilize DenseNet for Methods \eqref{MeB} and \eqref{MeD}, as these methods handle high-dimensional data. In contrast, for Method \eqref{MeA}, which deals with low-dimensional latent representations, we employ a 
    4-layer  DNN.
    
    \item 
    We use a $3$-layer fully connected DNN as the final predictor $\hat{h}$.
\end{itemize}

\subsection{\texorpdfstring{Hyperparameter Settings}{Hyperparameter Settings}}
We use Adam with a batch size of $64$ and train for $200$ epochs. The dataset-specific learning rate $lr$ and latent dimension for representation learning (Step $1$ in Algorithm \ref{alg:al2}) are listed in Table \ref{tab3}. The density-ratio estimation and prediction stages use a learning rate of $10^{-3}$. We use the 
exponentially-flattened technique \citep{shimodaira2000improving,zhang2020one} to reduce the variance caused by density ratio \cite{zhang2020one}. 
This technique involves replacing the estimated ratio $\hat r$ with $\hat r^{\alpha_w}$, where $\alpha_w\in(0,1]$ is the weight-tempering exponent, distinct from the representation-penalty coefficient $\lambda$ in \eqref{eq:rep-training}. Table \ref{tab3} in Appendix \ref{appendix} lists the dataset-specific hyperparameters.

Tempering with an exponent below one generally introduces bias relative to full-ratio target-risk correction. The rate theorems analyze full-ratio weighting with clipped fully connected network sieves and empirical risk minimization; the experiments use tempered weights, fixed architectures and finite-epoch Adam training.

\subsection{Baselines}
In our experiments, we select a series of TL methods as comparison baselines, including DANN \citep{ganin2015unsupervised}, DAN \citep{long2015learning}, CDAN~\citep{long2018conditional}, AFN \citep{xu2019larger}, BSP \citep{chen2019transferability}, JAN \citep{long2017deep}, MCC \cite{jin2020minimum}, MCD \citep{saito2018maximum}, MDD \citep{zhang2019bridging}, and ERM. We use their implementations in the Transfer Learning Library \citep{tllib}. For a fair comparison, we use the same experimental data, parameter settings and training strategies. We report both random and pretrained initialization. Random initialization assesses feature learning from scratch, whereas pretrained initialization assesses transfer with existing features; both provide empirical comparisons.

The Office-Caltech comparisons evaluate complete adaptation pipelines with different objectives. The four-variant comparison separately examines the roles of representation and weight estimation.

\subsection{Performance Evaluation}\label{per eval}
We evaluate the performance of our proposed method by comparing the top-1 accuracy on the target test dataset. The experiments are conducted 10 times for each method, and the average result is taken as the final performance metric.
Figure \ref{fig2} and Table \ref{tab2} compare Methods \eqref{MeA}--\eqref{MeD}. Tables \ref{tab_compare2} and \ref{tab_compare1} compare our method with the baselines from \cite{tllib} under random initialization; Tables \ref{tab_compare3} and \ref{tab_compare4} report the corresponding comparisons under pretrained initialization.
We conclude the following:
\begin{itemize}
\item 
In Table \ref{tab2}, Method \eqref{MeB} improves on Method \eqref{MeC} on several datasets, whereas Moon gives the opposite ordering. Thus the accuracy gain from original-space weighting with learned features depends on the dataset.
\item 
Method \eqref{MeA} has the highest reported accuracy among the four variants in Table \ref{tab2}. The combination of learned representations and latent reweighting outperforms each control in these configurations.
\item 
Tables \ref{tab_compare2} and \ref{tab_compare1} show competitive results against many baselines under random initialization. With pretrained initialization, Table \ref{tab_compare3} shows the highest reported accuracy for target D but lower accuracy than all listed baselines for targets A, C and W. Table \ref{tab_compare4} also shows substantial gaps on several transfers. The gains over domain-adaptation baselines are concentrated in the random-initialization comparisons.
\end{itemize}

\begin{figure}[tbp]%
    \centering
    \includegraphics[width=0.85\textwidth]{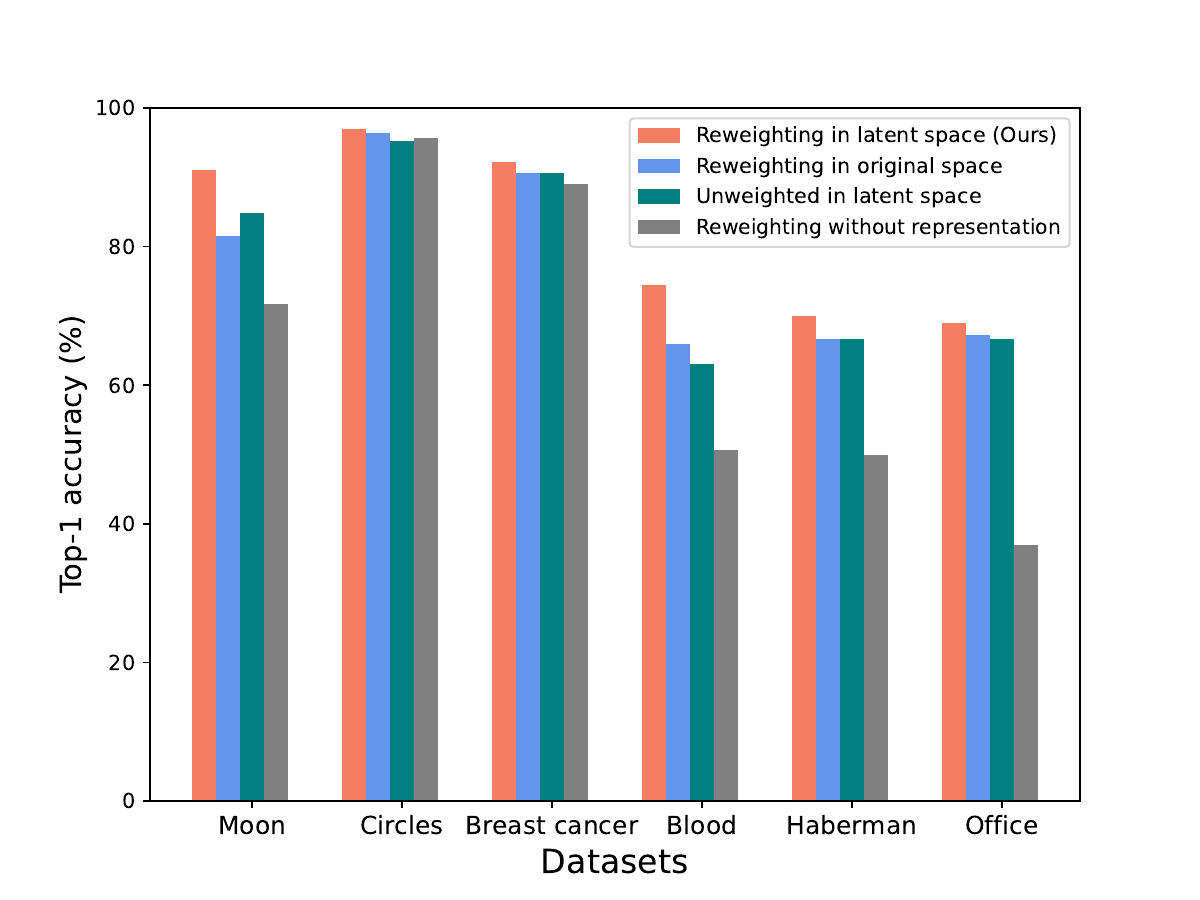}
    \caption{The Top-1 accuracy of four methods on simulated and real datasets.}
    \label{fig2}
\end{figure}

Tables \ref{tab_compare2}, \ref{tab_compare1}, \ref{tab_compare3} and \ref{tab_compare4} compare random initialization with ResNet weights pretrained on ImageNet \citep{deng2009imagenet}. These results identify initialization and domain choice as key factors in prediction accuracy. Yosinski et al.~\citeyearpar{yosinski2014transferable} likewise show that feature transfer depends on the source task and the layers transferred. Measuring representation sufficiency and density-ratio error would help explain their effects.

\begin{table}[tbp]

\centering
\begin{small}

\begin{tabular}{ccccc}
    \toprule
        \multirow{2}{*}{\textbf{Dataset}} & \multicolumn{4}{c}{\textbf{Top-1 accuracy(\%)}} \\ \cline{2-5}
        ~       & Method \eqref{MeA}          &Method \eqref{MeB}      &Method \eqref{MeC}  &Method \eqref{MeD}     \\ \cline{2-5}
        Circle  & \textbf{97.02}    & 96.40        &95.21      &95.70       \\ 
        Moon    & \textbf{91.08}    &81.55         &84.82      &71.72       \\ 
        Blood   & \textbf{74.42}    &65.98         &62.99      &50.62         \\ 
Breast cancer  &\textbf{92.19}      & 90.63        &90.63      &89.06        \\
    Haberman   &\textbf{70.00}      & 66.67        & 66.67     & 50.0        \\
Office-Caltech  &\textbf{68.93}     & 67.19        & 66.67     &36.98         \\
                    \bottomrule  
    \end{tabular}

\end{small}
\caption{The performance of four methods on simulated and real datasets.}
    \label{tab2}
\end{table}

\begin{table*}[tbp]

\centering
\begin{small}

\begin{tabularx}{\linewidth}{@{}X*{4}{c}@{}}
    \toprule
Methods     & \shortstack{\{C,D,W\}\\$\to$ A}         &\shortstack{\{A,D,W\}\\$\to$ C}     &\shortstack{\{A, C, D \}\\$\to$ W}   &\shortstack{\{A, C, W\}\\$\to$ D}   \\ \cline{1-5}
                AFN~\citep{xu2019larger}  &  47.6   & 37.9     & 45.8     &  61.3   \\
        BSP~\citep{chen2019transferability}  & 60.7    & 35.3      & 55.9    &54.8    \\
         CDAN \citep{long2018conditional} &61.3      &  40.2     &57.6    &64.5  \\
        DANN \citep{ganin2015unsupervised}  &59.2    &42.0      & 64.4     & 58.1   \\ 
        DAN \citep{long2015learning}   &59.7      &40.2       & 40.7     &64.5    \\ 
        ERM     &59.2      &35.3       & 52.5     &71.0 \\ 
                JAN \citep{long2017deep}   &56.5      & 35.7      &54.2      & 67.7     \\
        MCC \citep{jin2020minimum}    &64.4      & 34.4      &62.7      &64.5      \\
        MCD \citep{saito2018maximum}   &48.7      & 30.8      & 40.7     & 58.1   \\
        MDD \citep{zhang2019bridging}   &52.4      & 32.6      &44.1      &64.5    \\
        Reweighting in latent space (Ours) \eqref{MeA}   & \textbf{76.56}     &\textbf{56.88}       &\textbf{67.80}      &\textbf{87.5}    \\
    \bottomrule    
    \end{tabularx}

\end{small}
\caption{The Top-1 accuracy(\%) comparison of proposed method with others on Office-Caltech dataset without pretrained model weight.}
\label{tab_compare2}
\end{table*}

\begin{table}[tbp]

\centering
\begin{small}

\begin{tabularx}{\linewidth}{@{}X*{6}{c}@{}}
    \toprule
        \textbf{Methods}              &A $\to$ W      &D $\to$ W   &W $\to$ D  & A $\to$ D  &D $\to$ A  &W $\to$ A \\ \cline{1-7}
                 AFN \citep{xu2019larger}     &23.7       &47.5      & 77.4   & 9.7    &9.9   &19.9 \\
         BSP~\citep{chen2019transferability}      & 42.4      & 33.9     &64.5    & 16.1    & 12.6   & 12.6\\
         CDAN \citep{long2018conditional}     &13.6       & 50.8      & 35.5    &25.8     & 9.4   &13.6 \\
        DANN \citep{ganin2015unsupervised}     &30.5       & 32.2     &32.3    & 25.8    &16.2    &6.8     \\ 
        DAN \citep{long2015learning}      & 22.0      & 50.8     &51.6    &9.7     &5.2    &13.1   \\ 
        ERM      & 11.9      &35.6      &38.7    &9.7     & 7.9   & 8.4          \\ 
                JAN \citep{long2017deep}   & 28.8      & 23.7     &45.2    & 16.1   &15.7    &16.8         \\
        MCC~\citep{jin2020minimum}     & 10.2     & 30.5     &51.6    &12.9     &9.4    &  8.9        \\
        MCD \citep{saito2018maximum}     & 11.9      &25.4      &48.4    & 38.7    &18.8    &   24.1         \\
        MDD \citep{zhang2019bridging}     & 11.9      &45.8      &61.3    & 19.4    & 9.4   &8.9 \\
        Reweighting in latent space (Ours) \eqref{MeA}     & 38.98      &42.37      &65.63    &34.38     &13.54    & 24.48        \\
    \bottomrule    
    \end{tabularx}

\end{small}
\caption{The Top-1 accuracy (\%) comparison of proposed method with others on Office-Caltech dataset without pretrained model weight.}
\label{tab_compare1}
\end{table}

\begin{table}[tbp]

\centering
\begin{small}

\begin{tabularx}{\linewidth}{@{}X*{4}{c}@{}}
    \toprule
Methods     & \shortstack{\{C,D,W\}\\$\to$ A}         &\shortstack{\{A,D,W\}\\$\to$ C}     &\shortstack{\{A, C, D \}\\$\to$ W}   &\shortstack{\{A, C, W\}\\$\to$ D}   \\ \cline{1-5}
        AFN~\citep{xu2019larger}  & 96.3   &92.9      & 96.6     & 96.8    \\
        BSP~\citep{chen2019transferability}  &95.3    &86.2      &96.8      & 96.6    \\
         CDAN \citep{long2018conditional} &95.8    &87.9      &86.4      & 93.5    \\
        DANN \citep{ganin2015unsupervised}  &95.8    &87.5      &88.1      & 96.8    \\ 
        DAN \citep{long2015learning}   &95.3    & 87.1     & 100.0     & 96.6    \\ 
        ERM     &95.8    &87.5      & 100.0     &91.5     \\
        JAN \citep{long2017deep}   &95.8    & 90.2     & 90.3     &  94.9   \\
        MCC \citep{jin2020minimum}    &96.9    &92.9      & 100.0     & 96.8    \\
        MCD \citep{saito2018maximum}   & 95.3   &87.9      & 94.9     & 96.8    \\
        MDD \citep{zhang2019bridging}   &94.8    & 84.4     &98.3      & 96.8    \\
        Reweighting in latent space (Ours) \eqref{MeA}   &85.9   &76.9     & 83.4    & 100.0    \\
    \bottomrule    
    \end{tabularx}

\end{small}
\caption{The Top-1 accuracy(\%) comparison of proposed method with others on Office-Caltech dataset using pretrained model weight.}
\label{tab_compare3}
\end{table}

\begin{table}[tbp]

\centering
\begin{small}

\begin{tabularx}{\linewidth}{@{}X*{6}{c}@{}}
    \toprule
        \textbf{Methods}              &A $\to$ W      &D $\to$ W   &W $\to$ D  & A $\to$ D  &D $\to$ A  &W $\to$ A \\ \cline{1-7}
         AFN \citep{xu2019larger}     &93.2       & 100.0     &100.0    & 100.0  &83.8   & 86.9\\
         BSP~\citep{chen2019transferability}       &91.5       & 100.0     &100.0   & 96.8  & 84.8  & 84.3\\
         CDAN \citep{long2018conditional}      & 83.1      &96.6      &100.0   &93.5   &88.0   &84.3 \\
        DANN \citep{ganin2015unsupervised}      & 81.4      & 100.0     &100.0   &64.5   &88.5   &83.8 \\ 
        DAN \citep{long2015learning}       & 89.8      & 96.6     &100.0   & 90.3  &86.4   & 84.8\\ 
        ERM       &79.7       & 81.4     &100.0   &83.9   &73.3   &78.5 \\ 
        JAN \citep{long2017deep}    & 88.1      &94.9      &100.0   & 90.3  &80.6   &86.9 \\
        MCC~\citep{jin2020minimum}      &96.6       & 100.0     & 100.0  & 100.0  &86.9   &81.7 \\
        MCD \citep{saito2018maximum}      &93.2       &96.6      &100.0   &71.0   &80.6   & 79.1 \\
        MDD \citep{zhang2019bridging}     & 61.0      & 91.5     &96.8   & 67.7  &61.8   & 58.6\\
        Reweighting in latent space (Ours) \eqref{MeA}     & 57.6      & 93.2     & 93.8  &71.9   &63.0   &55.7 \\
    \bottomrule    
    \end{tabularx}

\end{small}
\caption{The Top-1 accuracy (\%) comparison of proposed method with others on Office-Caltech dataset using pretrained model weight.}
\label{tab_compare4}
\end{table}

\FloatBarrier
\section{Conclusion}\label{conclusion}
We developed a three-stage algorithm that combines representation learning with latent-space importance weighting for covariate shift. Our finite-sample analysis quantifies how representation insufficiency affects target risk. Under strict sufficiency and uniform regularity, latent weighting replaces the ambient dimension by the lower latent dimension in the density-ratio estimation term, yielding a faster neural excess-risk upper bound under matched smoothness and comparable fold sizes. For a fixed sufficient map and a fixed finite-dimensional predictor class, conditional averaging of the weights yields a strictly smaller leading expected excess-risk constant under the stated nondegeneracy and weight-estimation conditions. A coordinate-selection model realizes these gains with learned representations and estimated weights. Across six synthetic and real datasets, the proposed method achieves the highest mean target accuracy among four controlled variants.

Extending the representation-learning guarantees to general nonlinear classes and quantifying how weight estimation changes the leading fitted-risk constants at comparable domain and prediction sample sizes remain open problems. Together, our results show how sufficient representations improve covariate-shift correction by reducing both the dimension of ratio estimation and the residual variation of importance weights.

\appendix
\section{}\label{appendix}

\subsection{Proof of Lemma \ref{th1}}\label{pflemm1}
\begin{proof}
Let $U=R^*(X)$ and $m_h(u):=\mb E_{P_{\mc S}}[\ell(h(U),Y)\mid U=u]$. By Assumption \ref{ass2.1}, the same conditional loss applies in the target domain. Since $r_d^*=dP_{\mc T}^{R^*}/dP_{\mc S}^{R^*}$,
\begin{equation}\label{eq2.5}
\begin{split}
 \mb E_{P_{\mc S}}[r_d^*(U)\ell(h(U),Y)]
 &=\int r_d^*(u)m_h(u)\,dP_{\mc S}^{R^*}(u)\\
 &=\int m_h(u)\,dP_{\mc T}^{R^*}(u)
 =\mb E_{P_{\mc T}}[\ell(h(U),Y)].
\end{split}
\end{equation}
The two risks therefore have the same minimizers, and the logistic Bayes score is unique $P_{\mc T}^{R^*}$-almost everywhere.
\end{proof}

\subsection{General Finite-Sample Bound for Learned Representations}\label{pfth6}
This subsection establishes the proposed method's bound without assuming approximation or representation-recovery rates. Retain \eqref{excess error} and define its empirical version by
\begin{equation*}
 \widehat{\mathcal R}_{r_d,R}(h):=\frac1n\sum_{i=1}^n
 r_d(R(X_i^{\mathcal S}))\ell(h(R(X_i^{\mathcal S})),Y_i^{\mathcal S}).
\end{equation*}
Using the excess risk defined in Section \ref{convergence rate}, put
\begin{equation*}
\begin{split}
 \epsilon_{r,A}&:=\|\hat r_d-r_{\hat R}\|_{L^1(P_{\mathcal S}^{\hat R})},\qquad
 a_R:=\inf_{h\in\mathcal H}\mathcal R_{\mathcal T}(h\circ R)
                  -\mathcal R_{\mathcal T}(h^*\circ R^*).
\end{split}
\end{equation*}
The excess risk is nonnegative; $a_R$ includes representation information loss and predictor-class approximation. The subscript $A$ in the ratio error and complexity identifies Method \eqref{MeA}. Let $\mathscr G$ be the $\sigma$-field generated by the first two stages' data, reference samples and training randomness. Their fitted functions are $\mathscr G$-measurable, and $D_{\rm pred}$ is independent of $\mathscr G$. Define the conditional absolute weighted-loss complexity
\begin{equation*}
 \mathfrak C_{n,A}:=\mathbb E_{D_{\rm pred},\zeta}\!\left[
 \sup_{h\in\mathcal H}\left|\frac1n\sum_{i=1}^n
 \zeta_i\hat r_d(\hat R(X_i))\ell(h(\hat R(X_i)),Y_i)\right|
 \;\middle|\;\mathscr G\right].
\end{equation*}
Here $(X_i,Y_i)\in D_{\rm pred}$, and the signs $\zeta_i$ are independent Rademacher variables. Conditioning fixes the representation and weights while leaving the prediction sample i.i.d. from $P_{\mathcal S}$. It does not make a reused training sample independent. All estimators and suprema are assumed measurable.

\begin{lemma}\label{lem:latent-oracle}
Under Assumptions \ref{ass2.0}--\ref{ass2.1}, \ref{assum1} and \ref{assum4}, use the independent-fold Algorithm \ref{alg:al2} with $\hat h$ minimizing $\widehat{\mathcal R}_{\hat r_d,\hat R}$ over $\mathcal H$. Suppose $|\hat\phi_d|\leq M$ with $M\geq\max\{|\log c_r|,|\log C_r|\}$. Then
\begin{equation}\label{eq:latent-oracle-bound}
 \mathbb E\mathcal E_{\mathcal T}(\hat h,\hat R)
 \leq4\mathbb E\mathfrak C_{n,A}
 +2L_{\mathcal B}\mathbb E\epsilon_{r,A}
 +2\mathcal B\mathbb E b_{\hat R}+\mathbb E a_{\hat R}.
\end{equation}
For the balanced pushforward domain law, let
\begin{equation*}
 \mathcal L_{\mu,R}(\phi):=\mathbb E_{(U,Z)\sim P_{\mu,Z}^R}\log(1+e^{-Z\phi(U)}),\qquad
 q_A:=\mathcal L_{\mu,\hat R}(\hat\phi_d)
       -\mathcal L_{\mu,\hat R}(\log r_{\hat R}).
\end{equation*}
With $\kappa_M:=e^M/(1+e^M)^2$,
\begin{equation}\label{eq:ratio-root-conversion}
 \mathbb E\epsilon_{r,A}\leq\frac{2e^M}{\sqrt{\kappa_M}}\sqrt{\mathbb E q_A}.
\end{equation}
Neither a vanishing representation error nor smoothness of $\log r_{\hat R}$ is needed for these inequalities.
\end{lemma}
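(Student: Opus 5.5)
Conditional on $\mathscr G$, I would run a standard oracle-inequality argument and then take outer expectations. Fix the realization of $\hat R,\hat r_d$. The first step is to relate the target risk of any $h\in\mathcal H$ to the latent-weighted source risk under the \emph{true} induced ratio. Since $r_{\hat R}=dP_{\mathcal T}^{\hat R}/dP_{\mathcal S}^{\hat R}$ and $\mathbb E[\ell(a,Y)\mid U=u]=\ell(a,\eta_k^{\hat R}(u))$ with $\ell$ affine in the label argument,
\begin{equation*}
 \mathcal R_{r_{\hat R},\hat R}(h)=\mathbb E_{P_{\mathcal T}^{\hat R}}\ell(h(U),\eta_{\mathcal S}^{\hat R}(U)),\qquad
 \mathcal R_{\mathcal T}(h\circ\hat R)=\mathbb E_{P_{\mathcal T}^{\hat R}}\ell(h(U),\eta_{\mathcal T}^{\hat R}(U)).
\end{equation*}
Their difference is $\mathbb E_{P_{\mathcal T}^{\hat R}}[(\eta_{\mathcal T}^{\hat R}-\eta_{\mathcal S}^{\hat R})h]$, which is at most $\mathcal B b_{\hat R}$ in absolute value by Assumption \ref{assum4}.

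Next, I would swap $r_{\hat R}$ for $\hat r_d$. Since $0\le\ell\le L_{\mathcal B}$, we have $|\mathcal R_{\hat r_d,\hat R}(h)-\mathcal R_{r_{\hat R},\hat R}(h)|\le L_{\mathcal B}\epsilon_{r,A}$ uniformly over $h$. Fix $\varepsilon>0$ and pick a near-minimizer $h_\varepsilon\in\mathcal H$ of $\mathcal R_{\mathcal T}(\cdot\circ\hat R)$, which is $\mathscr G$-measurable. Then I would decompose
\begin{equation*}
\mathcal R_{\mathcal T}(\hat h\circ\hat R)-\mathcal R_{\mathcal T}(h_\varepsilon\circ\hat R)
\le[\mathcal R_{\hat r_d,\hat R}(\hat h)-\widehat{\mathcal R}_{\hat r_d,\hat R}(\hat h)]+[\widehat{\mathcal R}_{\hat r_d,\hat R}(h_\varepsilon)-\mathcal R_{\hat r_d,\hat R}(h_\varepsilon)]+2L_{\mathcal B}\epsilon_{r,A}+2\mathcal B b_{\hat R},
\end{equation*}
using $\widehat{\mathcal R}(\hat h)\le\widehat{\mathcal R}(h_\varepsilon)$. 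The conditional expectation of the second bracket is zero. The first bracket is bounded by a supremum over $\mathcal H$, and standard symmetrization with the conditionally i.i.d. prediction fold bounds its conditional expectation by $2\mathfrak C_{n,A}$. The stated constant $4$ leaves slack, for instance if both brackets are bounded by suprema. Adding $a_{\hat R}+\varepsilon$, taking expectations and letting $\varepsilon\to0$ gives \eqref{eq:latent-oracle-bound}.

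For \eqref{eq:ratio-root-conversion}, I would first use the bounds $|\hat\phi_d|\le M$ and $|\log r_{\hat R}|\le M$; the latter holds because Assumption \ref{assum1} transfers to $r_{\hat R}$, as a conditional expectation of $r^*$. The mean value theorem then gives $|\hat r_d-r_{\hat R}|\le e^M|\hat\phi_d-\log r_{\hat R}|$. Next comes a curvature argument. The pointwise conditional domain-logistic risk at $u$ is $\psi(\phi)=\pi_-\log(1+e^{\phi})+\pi_+\log(1+e^{-\phi})$, it is minimized at $\log r_{\hat R}(u)$, and its second derivative is $\sigma'(\phi)\ge\kappa_M$ on $[-M,M]$. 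Hence $q_A\ge\frac{\kappa_M}{2}\|\hat\phi_d-\log r_{\hat R}\|^2_{L^2(P_\mu^{\hat R})}$. To change measure, note $dP_{\mathcal S}^{\hat R}\le 2\,dP_\mu^{\hat R}$. Cauchy–Schwarz then gives $\epsilon_{r,A}\le e^M\sqrt2\|\cdot\|_{L^2(P_\mu^{\hat R})}\le e^M\sqrt{2\cdot2q_A/\kappa_M}=2e^M\sqrt{q_A/\kappa_M}$, and Jensen moves the expectation inside the root.

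The main obstacle I expect is the measure-theoretic care rather than any single inequality. The pointwise curvature step has to be justified with the correct mixture proportions under $P_\mu^{\hat R}$. The conditioning on $\mathscr G$ has to be done so that the symmetrization is legitimate, and the near-minimizer $h_\varepsilon$ has to be chosen measurably. None of these steps use smoothness or representation recovery, consistent with the final remark in the statement.
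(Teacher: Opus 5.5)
Your proposal is correct and follows essentially the paper's proof. It uses the same bridge through the true induced ratio $r_{\hat R}$ (via $r_{\hat R}(\hat R(X))=\mathbb E_{P_{\mathcal S}}[r^*(X)\mid\hat R(X)]$), which gives the per-$h$ discrepancy $\mathcal B b_{\hat R}+L_{\mathcal B}\epsilon_{r,A}$, together with ghost-sample symmetrization conditional on $\mathscr G$ and the same chain of Cauchy--Schwarz, $P_{\mathcal S}^{\hat R}\le 2P_\mu^{\hat R}$, $e^M$-Lipschitz exponentiation and $\kappa_M$-curvature for the ratio conversion. The only variation is in the comparator term: you pick a $\mathscr G$-measurable near-minimizer so that its empirical deviation has conditional mean zero, giving constant $2$. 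The paper instead bounds both deviations by the uniform $G_n$ and takes the infimum over $h$ pathwise, which yields the stated constant $4$ and avoids the measurable-selection issue you flagged.
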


\begin{proof}
For any fixed $R$ and bounded measurable test function $\psi$ on its range, change of measure gives
\begin{equation*}
\begin{split}
 \mathbb E_{P_{\mathcal S}}[\psi(R(X))r^*(X)]
 &=\mathbb E_{P_{\mathcal T}}\psi(R(X))\\
 &=\int\psi(u)r_R(u)\,dP_{\mathcal S}^R(u)\\
 &=\mathbb E_{P_{\mathcal S}}[\psi(R(X))r_R(R(X))].
\end{split}
\end{equation*}
Since $r_R(R(X))$ is measurable with respect to $R(X)$, the defining property of conditional expectation yields
\begin{equation}\label{eq:target-risk-bridge}
 r_R(R(X))=\mathbb E_{P_{\mathcal S}}[r^*(X)\mid R(X)],\qquad
 c_r\leq r_R\leq C_r\quad P_{\mathcal S}^R\text{-a.e.}
\end{equation}
The equality holds $P_{\mathcal S}$-almost surely; the bounds follow by taking the conditional expectation of $c_r\leq r^*(X)\leq C_r$. No Lebesgue density for the pushforward is required. Conditioning the logistic loss on $U=R(X)$ gives
\begin{equation*}
\begin{split}
 \mathcal R_{\mathcal T}(h\circ R)
 &=\int[\log(1+e^{h(u)})-h(u)\eta_{\mathcal T}^R(u)]\,dP_{\mathcal T}^R(u),\\
 \mathcal R_{r_R,R}(h)
 &=\int r_R(u)[\log(1+e^{h(u)})-h(u)\eta_{\mathcal S}^R(u)]\,dP_{\mathcal S}^R(u)\\
 &=\int[\log(1+e^{h(u)})-h(u)\eta_{\mathcal S}^R(u)]\,dP_{\mathcal T}^R(u).
\end{split}
\end{equation*}
Subtracting cancels the log terms and gives
\begin{equation*}
 \mathcal R_{\mathcal T}(h\circ R)-\mathcal R_{r_R,R}(h)
 =\mathbb E_{P_{\mathcal T}^R}
 [h(U)(\eta_{\mathcal S}^R(U)-\eta_{\mathcal T}^R(U))].
\end{equation*}
Its absolute value is at most $\mathcal B b_R$. For $R=\hat R$, insert the true learned-space ratio between the target and estimated-weight risks. For every $h\in\mathcal H$,
\begin{equation*}
\begin{split}
 &|\mathcal R_{\mathcal T}(h\circ\hat R)-\mathcal R_{\hat r_d,\hat R}(h)|\\
 &\leq|\mathcal R_{\mathcal T}(h\circ\hat R)-\mathcal R_{r_{\hat R},\hat R}(h)|
       +|\mathcal R_{r_{\hat R},\hat R}(h)-\mathcal R_{\hat r_d,\hat R}(h)|\\
 &\leq\mathcal B b_{\hat R}
       +L_{\mathcal B}\mathbb E_{P_{\mathcal S}}
       |r_{\hat R}(\hat R(X))-\hat r_d(\hat R(X))|\\
 &=\mathcal B b_{\hat R}+L_{\mathcal B}\epsilon_{r,A}.
\end{split}
\end{equation*}
Here $0\leq\ell\leq L_{\mathcal B}$, and the last expectation is the $L^1(P_{\mathcal S}^{\hat R})$ norm. Taking the supremum over $h$ yields
\begin{equation}\label{eq:explicit-discrepancy}
 \Delta_A:=\sup_{h\in\mathcal H}
 |\mathcal R_{\mathcal T}(h\circ\hat R)-\mathcal R_{\hat r_d,\hat R}(h)|
 \leq L_{\mathcal B}\epsilon_{r,A}+\mathcal B b_{\hat R}.
\end{equation}
This is a pathwise risk comparison and does not itself require sample independence.

For any comparison function $h\in\mathcal H$, expand
\begin{equation}\label{eq3.1}
\begin{split}
 &\mathcal R_{\mathcal T}(\hat h\circ\hat R)
             -\mathcal R_{\mathcal T}(h^*\circ R^*)\\
 &=\mathcal R_{\mathcal T}(\hat h\circ\hat R)
             -\mathcal R_{\hat r_d,\hat R}(\hat h)
   +\mathcal R_{\hat r_d,\hat R}(\hat h)
             -\widehat{\mathcal R}_{\hat r_d,\hat R}(\hat h)\\
 &\quad+\widehat{\mathcal R}_{\hat r_d,\hat R}(\hat h)
             -\widehat{\mathcal R}_{\hat r_d,\hat R}(h)
   +\widehat{\mathcal R}_{\hat r_d,\hat R}(h)
             -\mathcal R_{\hat r_d,\hat R}(h)\\
 &\quad+\mathcal R_{\hat r_d,\hat R}(h)
             -\mathcal R_{\mathcal T}(h\circ\hat R)
   +\mathcal R_{\mathcal T}(h\circ\hat R)
             -\mathcal R_{\mathcal T}(h^*\circ R^*).
\end{split}
\end{equation}
The empirical-ERM comparison is nonpositive \citep{geer2000empirical}. Put, within this proof,
$G_n:=\sup_{h\in\mathcal H}|\mathcal R_{\hat r_d,\hat R}(h)-\widehat{\mathcal R}_{\hat r_d,\hat R}(h)|$.
Taking the infimum over $h$ gives
\begin{equation}\label{eq_errdecomp}
 \mathcal E_{\mathcal T}(\hat h,\hat R)\leq2G_n+2\Delta_A+a_{\hat R}.
\end{equation}
Conditional on $\mathscr G$, the weighted loss class is fixed. For an independent ghost prediction sample $D_{\rm pred}'=\{(X_i',Y_i')\}_{i=1}^n$, write its empirical risk as $\widehat{\mathcal R}'$. Jensen's inequality and conditional symmetrization \citep{vaart1997weak,gine2021mathematical} give
\begin{equation*}
\begin{split}
 \mathbb E[G_n\mid\mathscr G]
 &\leq\mathbb E_{D_{\rm pred},D_{\rm pred}'}\sup_h
 |\widehat{\mathcal R}'_{\hat r_d,\hat R}(h)
              -\widehat{\mathcal R}_{\hat r_d,\hat R}(h)|\\
 &=\mathbb E_{D_{\rm pred},D_{\rm pred}',\zeta}\sup_h
 \left|\frac1n\sum_i\zeta_i[q_h(X'_i,Y'_i)-q_h(X_i,Y_i)]\right|
 \leq2\mathfrak C_{n,A},
\end{split}
\end{equation*}
where $q_h(x,y):=\hat r_d(\hat R(x))\ell(h(\hat R(x)),y)$ is used only in this proof, and all expectations in the display are conditional on $\mathscr G$. Substituting \eqref{eq:explicit-discrepancy} into \eqref{eq_errdecomp} and taking full expectations proves \eqref{eq:latent-oracle-bound}.

For the ratio conversion, fix the learned functions and use $P_\mu^{\hat R}=(P_{\mathcal S}^{\hat R}+P_{\mathcal T}^{\hat R})/2$. Equation \eqref{eq:target-risk-bridge} bounds the true log ratio by $M$. Cauchy--Schwarz and $P_{\mathcal S}^{\hat R}\leq2P_\mu^{\hat R}$ give
\begin{equation*}
 \|\hat r_d-r_{\hat R}\|_{L^1(P_{\mathcal S}^{\hat R})}
 \leq\|\hat r_d-r_{\hat R}\|_{L^2(P_{\mathcal S}^{\hat R})}
 \leq\sqrt2\|\hat r_d-r_{\hat R}\|_{L^2(P_\mu^{\hat R})}.
\end{equation*}
Both log scores lie in $[-M,M]$, where exponentiation is $e^M$-Lipschitz. Hence
\begin{equation*}
 \|\hat r_d-r_{\hat R}\|_{L^2(P_\mu^{\hat R})}
 \leq e^M\|\hat\phi_d-\log r_{\hat R}\|_{L^2(P_\mu^{\hat R})}.
\end{equation*}
Under the balanced domain law, $P(Z=1\mid U=u)=r_{\hat R}(u)/(1+r_{\hat R}(u))$. The conditional domain loss at a score $t$ is
\begin{equation*}
 \log(1+e^t)-\frac{r_{\hat R}(u)}{1+r_{\hat R}(u)}t.
\end{equation*}
Its derivative vanishes at $t=\log r_{\hat R}(u)$, and its second derivative on $[-M,M]$ is at least $\kappa_M$. Taylor's theorem and integration therefore give, as in Lemma \ref{lem2.1},
\begin{equation*}
 q_A\geq\frac{\kappa_M}{2}
 \|\hat\phi_d-\log r_{\hat R}\|_{L^2(P_\mu^{\hat R})}^2.
\end{equation*}
Combining the three bounds yields
\begin{equation*}
 \epsilon_{r,A}\leq\sqrt2\|\hat r_d-r_{\hat R}\|_{L^2(P_\mu^{\hat R})}
 \leq\sqrt2e^M\|\hat\phi_d-\log r_{\hat R}\|_{L^2(P_\mu^{\hat R})}
 \leq\frac{2e^M}{\sqrt{\kappa_M}}\sqrt{q_A}.
\end{equation*}
Jensen's inequality proves \eqref{eq:ratio-root-conversion}. This pathwise conversion does not analyze how the domain classifier was trained; its estimation bound in Appendix \ref{app:explicit-rates} conditions only on the independent representation fold.
\end{proof}

\begin{corollary}\label{cor:conditional-rate}
Under Lemma \ref{lem:latent-oracle}, suppose additionally that, for some sequences $s_n,u_m,a_n,b_n$,
\begin{equation*}
 \mb E[\mathfrak C_{n,A}]\leq s_n,\quad\mb E[q_A]\leq u_m,\quad
 \mb E[a_{\hat R}]\leq a_n,\quad\mb E[b_{\hat R}]\leq b_n.
\end{equation*}
Then
\begin{equation*}
 \mb E[\mathcal E_{\mc T}(\hat h,\hat R)]
 \leq4s_n+\frac{4L_{\mc B}e^M}{\sqrt{\kappa_M}}\sqrt{u_m}+a_n+2\mc B b_n.
\end{equation*}
In particular, with $\mc B,M$ fixed, if
$s_n+a_n=\widetilde O(n^{-\beta_{\mc H}/(2\beta_{\mc H}+d)})$ and
$u_m=\widetilde O(m^{-2\beta_{\ddot\Phi}/(2\beta_{\ddot\Phi}+d)})$, then
\begin{equation*}
 \mb E[\mathcal E_{\mc T}(\hat h,\hat R)]
 \leq\widetilde O\!\left(n^{-\beta_{\mc H}/(2\beta_{\mc H}+d)}
     +m^{-\beta_{\ddot\Phi}/(2\beta_{\ddot\Phi}+d)}\right)+2\mc B b_n.
\end{equation*}
Here $\beta_{\ddot\Phi}>0$ is the stipulated learned-space domain-score rate parameter, and $\widetilde O$ suppresses logarithmic factors. The sequences may also depend on the representation-training fold size $n_R$. These stagewise rate bounds are additional conditions. Appendix \ref{app:sufficiency} derives the main theorem's stagewise bounds from strict sufficiency and learned-space regularity; Appendix \ref{app:explicit-rates} separately proves the recovery-based bounds. In particular, an oracle smoothness assumption for $\log r_d^*$ does not establish the same smoothness or estimation rate for $\log r_{\hat R}$.
\end{corollary}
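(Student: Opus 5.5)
This corollary follows from Lemma \ref{lem:latent-oracle} by substituting the assumed stagewise bounds into \eqref{eq:latent-oracle-bound} and \eqref{eq:ratio-root-conversion}; no new probabilistic argument is needed. I would start from
\begin{equation*}
 \mathbb E\mathcal E_{\mathcal T}(\hat h,\hat R)
 \leq4\mathbb E\mathfrak C_{n,A}+2L_{\mathcal B}\mathbb E\epsilon_{r,A}
 +2\mathcal B\mathbb E b_{\hat R}+\mathbb E a_{\hat R}.
\end{equation*}
The first, third and fourth terms are bounded directly by $4s_n$, $2\mathcal B b_n$ and $a_n$.

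For the ratio term I would use \eqref{eq:ratio-root-conversion}, which gives $\mathbb E\epsilon_{r,A}\leq 2e^M\kappa_M^{-1/2}\sqrt{\mathbb E q_A}$. Since $t\mapsto\sqrt t$ is monotone, this is at most $2e^M\kappa_M^{-1/2}\sqrt{u_m}$. Multiplying by $2L_{\mathcal B}$ produces the constant $4L_{\mathcal B}e^M/\sqrt{\kappa_M}$ in the display. Summing the four bounds gives the first inequality. The bounds are full expectations, so dependence on $n_R$ enters only through the sequences, and nothing changes if they depend on it.

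For the rate statement, $\mathcal B$ and $M$ are fixed, so $L_{\mathcal B}=\log(1+e^{\mathcal B})$ and $\kappa_M$ are constants. Then $4s_n+a_n=\widetilde O(n^{-\beta_{\mathcal H}/(2\beta_{\mathcal H}+d)})$. Also, $\sqrt{u_m}=\widetilde O(m^{-\beta_{\ddot\Phi}/(2\beta_{\ddot\Phi}+d)})$, because the square root halves the exponent and keeps the logarithmic factors polylogarithmic. The $b_n$ term is carried along unchanged.

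There is no real obstacle here; the only point requiring care is the use of Jensen/concavity. It has already been applied inside \eqref{eq:ratio-root-conversion}, so the remaining step uses only monotonicity of the square root. I would also note that the lemma's hypotheses ($|\hat\phi_d|\leq M$, Assumption \ref{assum4}) are inherited from ``Under Lemma \ref{lem:latent-oracle}'', so they need no separate check.
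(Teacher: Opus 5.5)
Your proposal is correct and matches the paper, which gives no separate proof and obtains the corollary by substituting the four stagewise bounds into \eqref{eq:latent-oracle-bound}. The ratio term is handled by \eqref{eq:ratio-root-conversion} together with monotonicity of the square root, exactly as you do, and this yields the constant $4L_{\mathcal B}e^M/\sqrt{\kappa_M}$. The one implicit detail is that $s_n,a_n\geq0$, since they dominate expectations of nonnegative quantities; this is what lets you pass from $s_n+a_n=\widetilde O(\cdot)$ to $4s_n+a_n=\widetilde O(\cdot)$.
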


For completeness, a sufficient coordinate comparison can bound the two representation terms without regularity of every $h\in\mc H$. Suppose a chosen representative $R^*$ and the learned $R$ have a common latent range on which
$|h^*(u)-h^*(v)|\leq C_h\|u-v\|^\nu$, $0<\nu\leq1$. Define
$t_R:=\mb E_{P_{\mc S}}\|R(X)-R^*(X)\|^\nu$ and
$\delta_H:=\inf_{h\in\mc H}\|h-h^*\|_\infty$ on this common range. Then
\begin{equation}\label{eq:representation-comparison}
 b_R\leq\frac{C_rC_h}{2}t_R,\qquad
 a_R\leq\delta_H+C_rC_h t_R.
\end{equation}
Indeed, sufficiency and covariate shift give $\eta(X)=\sigma(h^*(R^*(X)))$ in both domains. Set $v_R(X)=\sigma(h^*(R(X)))$, which is measurable with respect to $R(X)$. The tower property gives $\eta_k^R(R(X))=\mathbb E_{P_k}[\eta(X)\mid R(X)]$, for $k\in\{\mathcal S,\mathcal T\}$. Subtract $v_R$ from these two conditional means and apply conditional Jensen's inequality to obtain
\begin{equation*}
\begin{split}
 b_R&\leq\mb E_{P_{\mc T}^R}\mb E_{P_{\mc S}}[|\eta(X)-v_R(X)|\mid R(X)]
       +\mb E_{P_{\mc T}}|\eta(X)-v_R(X)|\\
 &=\mb E_{P_{\mc S}}[r_R(R(X))|\eta(X)-v_R(X)|]
       +\mb E_{P_{\mc S}}[r^*(X)|\eta(X)-v_R(X)|]\\
 &\leq2C_r\mb E_{P_{\mc S}}|\eta(X)-v_R(X)|.
\end{split}
\end{equation*}
Since $\sigma'(t)=\sigma(t)(1-\sigma(t))\leq1/4$,
\begin{equation*}
 |\eta(X)-v_R(X)|
 \leq\tfrac14|h^*(R^*(X))-h^*(R(X))|
 \leq\tfrac14C_h\|R(X)-R^*(X)\|^\nu.
\end{equation*}
This proves $b_R\leq C_rC_h t_R/2$. For $a_R$, the 1-Lipschitz property of the logistic loss in its score gives, for each $h\in\mathcal H$,
\begin{equation*}
\begin{split}
 &\mathcal R_{\mathcal T}(h\circ R)-\mathcal R_{\mathcal T}(h^*\circ R^*)\\
 &\leq\mathbb E_{P_{\mathcal T}}|h(R(X))-h^*(R(X))|
       +\mathbb E_{P_{\mathcal T}}|h^*(R(X))-h^*(R^*(X))|\\
 &\leq\|h-h^*\|_\infty+C_h\mathbb E_{P_{\mathcal T}}\|R(X)-R^*(X)\|^\nu\\
 &\leq\|h-h^*\|_\infty+C_rC_h t_R.
\end{split}
\end{equation*}
Taking the infimum over $h$ proves the second bound in \eqref{eq:representation-comparison}. If an aligned representation bound satisfies $\mathbb E\|\hat R-R^*\|_{L^1(P_{\mathcal S})}\leq\rho_{n_R}$, Jensen's inequality with $0<\nu\leq1$ gives
\begin{equation*}
 \mathbb E[t_{\hat R}]
 \leq\left(\mathbb E\mathbb E_{P_{\mathcal S}}\|\hat R(X)-R^*(X)\|\right)^\nu
 \leq\rho_{n_R}^\nu.
\end{equation*}
Thus $2\mathcal B\mathbb E b_{\hat R}+\mathbb E a_{\hat R}\leq\mathbb E\delta_H+(\mathcal B+1)C_rC_h\rho_{n_R}^\nu$. The recovery analysis below quantifies $\rho_{n_R}$ under Assumptions \ref{assum3}, \ref{assum5} and \ref{ass:explicit-rates}, with an aligned choice of $R^*$. Theorem \ref{Th5} instead uses strict sufficiency of the fitted representation and requires no coordinate recovery.

\subsection{Representation Recovery with Variational Bias}\label{app:representation-recovery}
This section controls representation error through aligned coordinate recovery and propagates it into the target-risk bound. The recovery and discriminator assumptions replace strict sufficiency of the fitted representation, and the variational bias appears explicitly in the resulting rate.

\begin{assumption}\label{assum3}
For the fitted representation and a fixed constant $0<C_{\rm id}<\infty$,
\begin{equation*}
 \|\hat R-R^*\|_{L^1(P_{\mathcal S})}
 \leq C_{\rm id}|\mathcal L(\hat R)-\mathcal L(R^*)|,
\end{equation*}
where $\mathcal L$ is the population representation objective in \eqref{eq2.7}.
\end{assumption}

\begin{assumption}\label{assum5}
For fixed compact cubes $K_X\subset\mathbb R^p$, $K_U\subset\mathbb R^d$ and fixed $\beta_{\mathcal F},\beta_{\mathcal H},B>0$,
\begin{equation*}
\begin{gathered}
 R_j^*|_{K_X}\in\mho^{\beta_{\mathcal F}}(K_X,B),\quad j=1,\ldots,d,\\
 h^*|_{K_U}\in\mho^{\beta_{\mathcal H}}(K_U,B).
\end{gathered}
\end{equation*}
The H\"older classes are those of Definition \ref{def:hc} after affine rescaling.
\par
\end{assumption}

\begin{remark}
For the specified loss on $[-\mathcal B,\mathcal B]\times[0,1]$,
\begin{equation*}
 |\partial_a\ell|=|\sigma(a)-y|\leq1,\qquad
 |\partial_y\ell|=|a|\leq\mathcal B,\qquad
 0\leq\ell(a,y)\leq\log(1+e^{\mathcal B})=L_{\mathcal B}.
\end{equation*}
These are derived properties. Assumption \ref{assum3} is a linear recovery condition. For $Q^\top Q=I_d$, orthogonal invariance of Euclidean distances and of $\gamma_d$ gives
\begin{equation*}
 \mathcal V[QR(X),Y]=\mathcal V[R(X),Y],\qquad
 \mathbb D_f(P_{\mathcal S}^{QR}\|\gamma_d)
 =\mathbb D_f(P_{\mathcal S}^{R}\|\gamma_d),\qquad
\mathcal L(QR)=\mathcal L(R).
\end{equation*}
For the divergence identity, $dP_{\mathcal S}^{QR}/d\gamma_d(w)=(dP_{\mathcal S}^{R}/d\gamma_d)(Q^\top w)$ and the change of variable $w=Qz$ preserves $\gamma_d$.
Thus outputs $QR^*$ with $\|QR^*-R^*\|_{L^1(P_{\mathcal S})}>0$ violate Assumption \ref{assum3}; coordinates must be identified or aligned. Even after alignment, a linear recovery bound is stronger than quadratic growth: if fitted outputs can follow a path with
\begin{equation*}
 \|R_t-R^*\|_{L^1(P_{\mathcal S})}\geq ct,\qquad
 0<\mathcal L(R_t)-\mathcal L(R^*)\leq Ct^2,
\end{equation*}
then $\|R_t-R^*\|_{L^1}/[\mathcal L(R_t)-\mathcal L(R^*)]\geq c/(Ct)\to\infty$. Theorem \ref{Th5} and Lemma \ref{lem:latent-oracle} do not use this recovery condition.
\end{remark}

\begin{assumption}\label{ass:explicit-rates}
Use the cubes of Assumption \ref{assum5}. All constants below are independent of $n_R,m,n$ and the training realization.
\begin{enumerate}[(i)]
\item
\begin{equation*}
\begin{gathered}
 P_k(X\in K_X)=1,\qquad k\in\{\mathcal S,\mathcal T\},\\
 R(K_X)\subseteq K_U\quad(R\in\mathcal F_{n_R}\cup\{R^*\}),
 \qquad\|h^*\|_{\infty,K_U}\leq\mathcal B.
\end{gathered}
\end{equation*}
Almost surely, $\log r_{\hat R}$ has a $\beta_{\ddot\Phi}$-H\"older extension to $K_U$ with a fixed radius and absolute bound $M$.
\item Choose $\mathcal H$, $\ddot\Phi_M$ and $\mathcal F_{n_R}$ by Definition \ref{ass:downstream-sieves}, with respective $(q,\beta,t)$ values
\begin{equation*}
 (d,\beta_{\mathcal H},n),\qquad
 (d,\beta_{\ddot\Phi},m),\qquad
 (p,\beta_{\mathcal F},n_R).
\end{equation*}
The representation class may be restricted for alignment and finite divergence, subject to
\begin{equation*}
 \delta_F:=\inf_{R\in\mathcal F_{n_R}}
 \|R-R^*\|_{\infty,K_X}
 \leq Cn_R^{-\beta_{\mathcal F}/(2\beta_{\mathcal F}+p)}.
\end{equation*}
\item Stage 1 uses \eqref{eq:rep-training} with $n_R$ labeled observations and an independent reference sample $W_1,\ldots,W_{n_R}\overset{\mathrm{i.i.d.}}{\sim}\gamma_d$. Require
\begin{equation*}
\begin{gathered}
 R^*\in\arg\min_{R\in\mathcal F_{n_R}\cup\{R^*\}}\mathcal L(R),\\
 \mathbb D_f(P_{\mathcal S}^R\|\gamma_d)<\infty,
 \qquad R\in\mathcal F_{n_R}\cup\{R^*\},
\end{gathered}
\end{equation*}
and Assumption \ref{assum3} holds almost surely. For every $D\in\mathcal D_{n_R}$ and $u,v\in\mathbb R^d$,
\begin{equation*}
 \|D\|_{\infty,\mathbb R^d}\leq A_D,\qquad
 |D(u)-D(v)|\leq K_D\|u-v\|.
\end{equation*}
The conjugate $f^*$ in \eqref{conjugate-f} is bounded and Lipschitz on $[-A_D,A_D]$, and
\begin{equation*}
 \mathcal D_{\mathcal D_{n_R}}+1
 \leq C(\mathcal D_{\mathcal F_{n_R}}+1),\qquad
 \mathcal S_{\mathcal D_{n_R}}\leq C\mathcal S_{\mathcal F_{n_R}}.
\end{equation*}
The variational approximation bias satisfies
\begin{equation}\label{eq:disc-bias}
\begin{split}
 0\leq\varepsilon_{\mathcal D,n_R}:={}&
 \sup_{R\in\mathcal F_{n_R}\cup\{R^*\}}
 \bigg[\mathbb D_f(P_{\mathcal S}^R\|\gamma_d)\\
 &-\sup_{D\in\mathcal D_{n_R}}\big\{\mathbb E_{P_{\mathcal S}}D(R(X))
 -\mathbb E_{\gamma_d}f^*(D(W))\big\}\bigg]<\infty.
\end{split}
\end{equation}
\end{enumerate}
\end{assumption}

\begin{theorem}\label{thm:representation-recovery}
Suppose Assumptions \ref{ass2.0}--\ref{ass2.1}, \ref{assum1}, \ref{assum3}, \ref{assum5} and \ref{ass:explicit-rates} hold. Use the independent-fold version of Algorithm \ref{alg:al2}, with $n_R\geq4$, $m,n\geq2$. Choose the clipped fully connected ReLU predictor and domain-score classes of Definition \ref{ass:downstream-sieves}, and the admissible fully connected ReLU representation class of Assumption \ref{ass:explicit-rates}(ii), with respective depths and widths
\begin{equation*}
\begin{aligned}
 \mathcal D_{\mathcal H},\mathcal W_{\mathcal H}
 &\asymp n^{\frac{d}{8\beta_{\mathcal H}+4d}}\log(en),\\
 \mathcal D_{\ddot\Phi_M},\mathcal W_{\ddot\Phi_M}
 &\asymp m^{\frac{d}{8\beta_{\ddot\Phi}+4d}}\log(em),\\
 \mathcal D_{\mathcal F_{n_R}},\mathcal W_{\mathcal F_{n_R}}
 &\asymp n_R^{\frac{p}{8\beta_{\mathcal F}+4p}}\log(en_R).
\end{aligned}
\end{equation*}
The representation parameter budget is $\mathcal S_{\mathcal F_{n_R}}=O(n_R^{3p/(8\beta_{\mathcal F}+4p)}\log^3(en_R))$. Choose the fully connected discriminator class $\mathcal D_{n_R}$ with $\mathcal D_{\mathcal D_{n_R}}+1\leq C(\mathcal D_{\mathcal F_{n_R}}+1)$ and $\mathcal S_{\mathcal D_{n_R}}\leq C\mathcal S_{\mathcal F_{n_R}}$.
The discriminator bounds and bias are those of Assumption \ref{ass:explicit-rates}(iii). Then
\begin{equation}\label{eq:complete-polynomial-rates}
\begin{split}
 &\mathbb E\!\left[\mathcal R_{\mathcal T}(\hat h\circ\hat R)
                 -\mathcal R_{\mathcal T}(h^*\circ R^*)\right]
 \leq \widetilde O\!\left(
 n^{-\frac{\beta_{\mathcal H}}{2\beta_{\mathcal H}+d}}
 +m^{-\frac{\beta_{\ddot\Phi}}{2\beta_{\ddot\Phi}+d}}
 +n_R^{-\frac{(\beta_{\mathcal H}\wedge1)\beta_{\mathcal F}}{2\beta_{\mathcal F}+p}}
 \right)
 +O\!\left((\lambda\varepsilon_{\mathcal D,n_R})^{\beta_{\mathcal H}\wedge1}\right).
\end{split}
\end{equation}
The implied constants depend only on the fixed dimensions, radii, score bounds and recovery constants, and are independent of the three fold sizes. The first term combines prediction estimation and approximation, the second estimates the ratio induced by $\hat R$, and the remaining terms control representation error and variational bias. The representation contribution still depends on $p$. Proof and intermediate finite-sample bounds are given in Appendices \ref{pfth6}--\ref{app:explicit-rates}.
\end{theorem}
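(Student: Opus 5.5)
The plan is to start from the general oracle inequality of Lemma \ref{lem:latent-oracle} in the form of Corollary \ref{cor:conditional-rate}, and bound the four stagewise quantities $\mathbb E\mathfrak C_{n,A}$, $\mathbb E q_A$, $\mathbb E a_{\hat R}$ and $\mathbb E b_{\hat R}$. The crude bound $a_{\hat R}\le\delta_H+C_rC_h t_{\hat R}$ in \eqref{eq:representation-comparison} is not used, since it would introduce the nuisance term $t_{\hat R}$. Instead, $a_{\hat R}$ is handled by the representation-recovery split of Appendix \ref{app:explicit-rates}, which separates the loss from replacing $R^*$ by $\hat R$ from the predictor approximation error.

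\textbf{Representation-dependent terms.} Under Assumption \ref{assum5}, $h^*$ is $(\beta_{\mathcal H}\wedge1)$-H\"older on $K_U$, so I will use $\nu=\beta_{\mathcal H}\wedge1$. Then \eqref{eq:representation-comparison} gives
\begin{equation*}
 2\mathcal B\,\mathbb E b_{\hat R}+\mathbb E a_{\hat R}
 \le \mathbb E\delta_H+(\mathcal B+1)C_rC_h\,\rho_{n_R}^{\nu},
\end{equation*}
where $\rho_{n_R}$ bounds $\mathbb E\|\hat R-R^*\|_{L^1(P_{\mathcal S})}$. By Lemma \ref{lem:uniform-network-approximation}, the sieve of Definition \ref{ass:downstream-sieves} gives $\delta_H=O(n^{-\beta_{\mathcal H}/(2\beta_{\mathcal H}+d)})$.

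To bound $\rho_{n_R}$, apply Assumption \ref{assum3} to reduce it to $\mathbb E|\mathcal L(\hat R)-\mathcal L(R^*)|$. Since $R^*$ minimizes $\mathcal L$ over $\mathcal F_{n_R}\cup\{R^*\}$, this difference is nonnegative. Decompose it in the standard way:
\begin{equation*}
 \mathcal L(\hat R)-\mathcal L(R^*)
 \le 2\sup_{R}\bigl|\widehat{\mathcal L}_{n_R}(R)-\mathcal L(R)\bigr|
 +\bigl[\mathcal L(\bar R)-\mathcal L(R^*)\bigr]
 +2\lambda\varepsilon_{\mathcal D,n_R},
\end{equation*}
where $\bar R\in\mathcal F_{n_R}$ is a best uniform approximant of $R^*$ and the last term is the variational gap. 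The three pieces are controlled as follows.
\begin{enumerate}[(a)]
\item The approximation term $\mathcal L(\bar R)-\mathcal L(R^*)$ is $O(\delta_F)$. Distance covariance is Lipschitz in $R$ in $L^1$ on bounded cubes, and the variational divergence term is Lipschitz because $D$ is $K_D$-Lipschitz and $f^*$ is bounded and Lipschitz on $[-A_D,A_D]$.
\item The uniform deviation is split over its two parts. For the squared distance covariance, a V-statistic, use a U-process or Rademacher bound. For the min–max divergence term, use pseudo-dimension bounds for ReLU networks with $\mathcal S\log\mathcal S$ complexity over the composed classes $\mathcal D_{n_R}\circ\mathcal F_{n_R}$. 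With the stated budget this gives $\widetilde O(n_R^{-\beta_{\mathcal F}/(2\beta_{\mathcal F}+p)})$, matching $\delta_F$.
\item The variational gap contributes $\lambda\varepsilon_{\mathcal D,n_R}$. After raising to the power $\nu$ it becomes the term $(\lambda\varepsilon_{\mathcal D,n_R})^{\beta_{\mathcal H}\wedge1}$. Jensen's inequality moves the power $\nu$ outside the expectation.
\end{enumerate}

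\textbf{Estimation terms.} These are conditional on the representation fold. For $\mathfrak C_{n,A}$, conditional on $\mathscr G$, the weights are bounded by $e^M$ and the loss is $1$-Lipschitz on $[-\mathcal B,\mathcal B]$. The contraction inequality therefore reduces $\mathfrak C_{n,A}$ to the Rademacher complexity of $\mathcal H$ composed with the fixed map $\hat R$. Composition with a fixed map does not increase pseudo-dimension, so this is $O(\sqrt{\mathcal S_{\mathcal H}\mathcal D_{\mathcal H}\log\mathcal S_{\mathcal H}\log n/n})$. With $\mathcal S_{\mathcal H}\mathcal D_{\mathcal H}\lesssim n^{d/(2\beta_{\mathcal H}+d)}$ up to logarithms, the bound is $\widetilde O(n^{-\beta_{\mathcal H}/(2\beta_{\mathcal H}+d)})$. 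The same architecture choice balances this against the approximation term.

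For $q_A$, condition on $\hat R$ and use the smoothness of $\log r_{\hat R}$ from Assumption \ref{ass:explicit-rates}(i). The domain logistic loss is strongly convex on $[-M,M]$, which supports a localized, Bernstein-type analysis, as in Lemma \ref{lem2.1}. This gives the fast rate
\begin{equation*}
 \mathbb E q_A=\widetilde O\bigl(m^{-2\beta_{\ddot\Phi}/(2\beta_{\ddot\Phi}+d)}\bigr),
\end{equation*}
and hence the square root in \eqref{eq:ratio-root-conversion} gives the $m$-term. All constants are uniform because the cubes and radii are fixed.

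\textbf{Main obstacle.} I expect the representation-recovery step to be the main difficulty. It requires a uniform deviation bound for the empirical objective $\widehat{\mathcal L}_{n_R}$, which combines a degenerate V-statistic with a nested sup over discriminators. This bound must be independent of the alignment and must not lose more than logarithmic factors. My first approach is to bound the inner sup by a joint Rademacher complexity of the composed network class, and to use a Hoeffding decomposition for the distance covariance.
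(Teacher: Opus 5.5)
Your proposal follows essentially the same route as the paper's proof. It starts from Lemma \ref{lem:latent-oracle}, then uses \eqref{eq:representation-comparison} with Jensen to turn $\mathbb E\|\hat R-R^*\|_{L^1(P_{\mathcal S})}\le\rho_{n_R}$ into $\rho_{n_R}^{\nu}$ for both $a_{\hat R}$ and $b_{\hat R}$ (so, despite your opening remark, you do use that bound, exactly as the paper does); it then splits $\mathcal L(\hat R)-\mathcal L(R^*)$ into ERM, approximation and variational-bias parts, bounds $\mathfrak C_{n,A}$ through pseudo-dimension, and applies a localized fast-rate bound to the domain ERM. The one correction is bookkeeping and does not change the rate. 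Step (a) is $O(\delta_F)$ for the restricted variational functional $D^{\rm pop}_{n_R}$, which is Lipschitz through $K_D$ and whose reference term cancels, but not for the full divergence $\mathbb D_f(P_{\mathcal S}^R\|\gamma_d)$ in $\mathcal L$, which has no Lipschitz modulus in $R$; returning to $\mathcal L$ therefore costs another $\lambda\varepsilon_{\mathcal D,n_R}$, and the paper ends with coefficient $3\lambda$ in $\rho_{n_R}$.
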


\begin{proposition}\label{prop:original-weights-recovery}
Use the independent-fold version of Algorithm \ref{alg:al1}, with $n_R\geq4$, $m,n\geq2$ and $M\geq\max\{|\log c_r|,|\log C_r|\}$. Assume Assumptions \ref{ass2.0}--\ref{ass2.1}, \ref{assum1}, \ref{assum3} and \ref{assum5}, the cube coverage and bounded extension in Assumption \ref{ass:explicit-rates}(i), its predictor and representation sieve choices in (ii), its representation-training conditions in (iii), and part (i) of Assumption \ref{ass:comparison-rates}. Use the predictor architecture stated in Theorem \ref{Th5}, the representation and discriminator architectures stated in Theorem \ref{thm:representation-recovery}, and the raw-score architecture stated in Proposition \ref{th4}. Then
\begin{equation}\label{eq:comparison-representation-rate}
 \mathbb E\!\left[\mathcal R_{\mathcal T}(\hat h\circ\hat R)
                 -\mathcal R_{\mathcal T}(h^*\circ R^*)\right]
 \leq\widetilde O\!\left(
 n^{-\frac{\beta_{\mathcal H}}{2\beta_{\mathcal H}+d}}
 +m^{-\frac{\beta_\Phi}{2\beta_\Phi+p}}
 +n_R^{-\frac{(\beta_{\mathcal H}\wedge1)\beta_{\mathcal F}}{2\beta_{\mathcal F}+p}}
 \right)
 +O\!\left((\lambda\varepsilon_{\mathcal D,n_R})^{\beta_{\mathcal H}\wedge1}\right).
\end{equation}
Original-space weighting gives the exact target-risk identity before representation. The proof in Appendix \ref{app:comparison-proofs} tracks the logarithmic factors.
\end{proposition}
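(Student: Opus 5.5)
The plan is to mirror the proof of Theorem \ref{thm:representation-recovery}, replacing the learned-space ratio analysis by an original-space one. Start from the decomposition \eqref{eq3.1} with weighted risk $\mathcal R_{\hat r}(h\circ\hat R)$ in place of $\mathcal R_{\hat r_d,\hat R}(h)$. Since the original-space identity gives $\mathcal R_{r^*}(h\circ R)=\mathcal R_{\mathcal T}(h\circ R)$ for every fixed $R$, there is no conditional label mismatch term. The pathwise discrepancy is therefore simply
\begin{equation*}
 \sup_{h\in\mathcal H}|\mathcal R_{\mathcal T}(h\circ\hat R)-\mathcal R_{\hat r}(h\circ\hat R)|
 \leq L_{\mathcal B}\|\hat r-r^*\|_{L^1(P_{\mathcal S}(X))}=:L_{\mathcal B}\epsilon_{r,B}.
\end{equation*}
This yields
\begin{equation*}
 \mathbb E\mathcal E_{\mathcal T}(\hat h,\hat R)\leq4\mathbb E\mathfrak C_{n,B}+2L_{\mathcal B}\mathbb E\epsilon_{r,B}+\mathbb E a_{\hat R},
\end{equation*}
where $\mathfrak C_{n,B}$ is the conditional Rademacher complexity of the class $\{\hat r(x)\ell(h(\hat R(x)),y):h\in\mathcal H\}$ given the first two stages. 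The symmetrization argument is unchanged because $D_{\rm pred}$ is independent of $(\hat R,\hat r)$.

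The three terms are controlled as follows.

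\emph{Complexity term.} Since $\hat r\in[e^{-M},e^M]$ is fixed under the conditioning and $\ell$ is $1$-Lipschitz in the score, Talagrand contraction (applied conditionally, absorbing the bounded multiplier) reduces $\mathfrak C_{n,B}$ to $e^M$ times the Rademacher complexity of $\mathcal H\circ\hat R$. This is bounded via pseudo-dimension $\lesssim\mathcal S_{\mathcal H}\mathcal D_{\mathcal H}\log\mathcal S_{\mathcal H}$ exactly as in Theorem \ref{Th5}. Balanced against approximation, this gives $\widetilde O(n^{-\beta_{\mathcal H}/(2\beta_{\mathcal H}+d)})$; the predictor architecture is shared, so this step is identical.

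\emph{Ratio term.} Apply the conversion \eqref{eq:ratio-root-conversion} in the ambient space (the proof of Lemma \ref{lem:latent-oracle} is dimension-free) to get $\mathbb E\epsilon_{r,B}\leq 2e^M\kappa_M^{-1/2}\sqrt{\mathbb E q_B}$, where $q_B$ is the excess domain-logistic risk of $\hat\phi$. Then bound $\mathbb E q_B$ by the standard clipped-ReLU ERM analysis (Lemma \ref{lem2.1}-type localized bound using strong convexity on $[-M,M]$) with the $(p,\beta_\Phi,m)$ sieve and Lemma \ref{lem:uniform-network-approximation}. This gives $\mathbb E q_B=\widetilde O(m^{-2\beta_\Phi/(2\beta_\Phi+p)})$, hence the term $m^{-\beta_\Phi/(2\beta_\Phi+p)}$.

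\emph{Approximation/representation term.} For $\mathbb E a_{\hat R}$, I would reuse \eqref{eq:representation-comparison} verbatim: $a_{\hat R}\leq\delta_H+C_rC_h t_{\hat R}$ with $\nu=\beta_{\mathcal H}\wedge1$, where $\delta_H=\widetilde O(n^{-\beta_{\mathcal H}/(2\beta_{\mathcal H}+d)})$ by Lemma \ref{lem:uniform-network-approximation}. Then $\mathbb E t_{\hat R}\leq\rho_{n_R}^\nu$ follows from Assumption \ref{assum3}, together with the excess-objective bound for $\mathcal L(\hat R)-\mathcal L(R^*)$ from Theorem \ref{thm:representation-recovery}'s proof.

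The main obstacle is this last step, which requires recovering $\rho_{n_R}\lesssim n_R^{-\beta_{\mathcal F}/(2\beta_{\mathcal F}+p)}+\lambda\varepsilon_{\mathcal D,n_R}$ up to logs. Since the representation stage, its architecture and its assumptions are identical to those of Theorem \ref{thm:representation-recovery}, I would invoke that intermediate bound directly rather than reprove it. In that reuse, the key check is that the representation fold is independent of $D_\mu$, so that $\hat r$ and $\hat R$ remain jointly independent of $D_{\rm pred}$. One must also confirm that nothing in the ratio analysis used learned-space regularity, which is why only part (i) of Assumption \ref{ass:comparison-rates} is needed. Summing the three contributions and tracking logs yields \eqref{eq:comparison-representation-rate}.
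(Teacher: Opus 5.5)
Your proposal is correct and follows the paper's proof. It uses the same ingredients: the raw-weight oracle inequality of Lemma \ref{lem:comparison-oracles} (no label-mismatch term), the shared predictor complexity and approximation bounds, and the localized raw-domain ERM bound of Lemma \ref{lem:fast-domain-erm} with the ratio conversion. It also uses the Stage-1-only recovery bound $\rho_{n_R}$ of Lemma \ref{thm:expanded-rates}, passed through \eqref{eq:representation-comparison} and $(x+y)^\nu\leq x^\nu+y^\nu$. The one difference is cosmetic: you bound $\mathfrak C_{n,B}$ by contraction, which needs the usual centering at $\ell(0,y)$ because of the absolute value, while the paper uses a direct empirical-cover argument.
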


\begin{corollary}\label{cor:recover-latent-rate}
Under Theorem \ref{thm:representation-recovery}, suppose
\begin{equation}\label{eq:latent-rate-allocation}
\begin{split}
 m\gtrsim n^{\frac{\beta_{\mathcal H}(2\beta_{\ddot\Phi}+d)}{\beta_{\ddot\Phi}(2\beta_{\mathcal H}+d)}},\qquad
 n_R\gtrsim n^{\frac{\beta_{\mathcal H}(2\beta_{\mathcal F}+p)}{(\beta_{\mathcal H}\wedge1)\beta_{\mathcal F}(2\beta_{\mathcal H}+d)}},\qquad
 (\lambda\varepsilon_{\mathcal D,n_R})^{\beta_{\mathcal H}\wedge1}
 \lesssim n^{-\frac{\beta_{\mathcal H}}{2\beta_{\mathcal H}+d}}\log^3(en).
\end{split}
\end{equation}
Then
\begin{equation}\label{eq:recovered-latent-rate}
 \mathbb E\!\left[\mathcal R_{\mathcal T}(\hat h\circ\hat R)
                 -\mathcal R_{\mathcal T}(h^*\circ R^*)\right]
 \leq \widetilde O\!\left(n^{-\frac{\beta_{\mathcal H}}{2\beta_{\mathcal H}+d}}\right).
\end{equation}
For $n_R\asymp n$, the sufficient representation condition is
$(\beta_{\mathcal H}\wedge1)\beta_{\mathcal F}/(2\beta_{\mathcal F}+p)
\geq\beta_{\mathcal H}/(2\beta_{\mathcal H}+d)$.
When $\beta_{\mathcal H}\geq1$, it reduces to the dimension condition
\begin{equation}\label{eq:recovered-dimension-condition}
 d\geq p\,\beta_{\mathcal H}/\beta_{\mathcal F}.
\end{equation}
For $\beta_{\mathcal H}<1$, \eqref{eq:recovered-dimension-condition} alone is insufficient. The size $n$ counts the prediction fold; the labeled source requirement is $n+n_R$. The bias condition is substantive and fails asymptotically for compact-support KL with fixed $\lambda>0$ and a fixed global discriminator bound. In that setting, Theorem \ref{thm:representation-recovery} retains its bias term. The proof is in Appendix \ref{app:sufficiency}.
\end{corollary}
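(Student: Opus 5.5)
The plan is to take Theorem \ref{thm:representation-recovery} as given and check that each of the four terms on the right of \eqref{eq:complete-polynomial-rates} is at most $\widetilde O(n^{-\beta_{\mathcal H}/(2\beta_{\mathcal H}+d)})$ under the allocation \eqref{eq:latent-rate-allocation}. Since the implied constants in \eqref{eq:complete-polynomial-rates} do not depend on $n,m,n_R$, the four resulting bounds can then be added.

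\textbf{Domain term.} Raising the first condition in \eqref{eq:latent-rate-allocation} to the power $-\beta_{\ddot\Phi}/(2\beta_{\ddot\Phi}+d)<0$ gives
\[
m^{-\beta_{\ddot\Phi}/(2\beta_{\ddot\Phi}+d)}\lesssim n^{-\beta_{\mathcal H}/(2\beta_{\mathcal H}+d)}.
\]
The logarithmic factors hidden in $\widetilde O$ are polylogarithmic in $m$. Here I must ensure they become polylogarithmic in $n$. I would handle this by noting that the domain-score bound is, up to constants, $m^{-\beta_{\ddot\Phi}/(2\beta_{\ddot\Phi}+d)}\log^k(em)$ for some fixed $k$. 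The map $t\mapsto t^{-\alpha}\log^k(et)$ is eventually decreasing, so for $m\ge c\,n^{\gamma}$ this quantity is bounded by its value at $m=cn^\gamma$, which is $n^{-\beta_{\mathcal H}/(2\beta_{\mathcal H}+d)}$ times $\mathrm{polylog}(n)$. This monotonicity step, together with the bounded range of small $m$ where it can fail, is the only delicate point. I expect it to be the main obstacle, though it is only bookkeeping.

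\textbf{Representation term.} The same argument with exponent $-(\beta_{\mathcal H}\wedge1)\beta_{\mathcal F}/(2\beta_{\mathcal F}+p)$ applied to the second condition makes
\[
n_R^{-(\beta_{\mathcal H}\wedge1)\beta_{\mathcal F}/(2\beta_{\mathcal F}+p)}\lesssim n^{-\beta_{\mathcal H}/(2\beta_{\mathcal H}+d)},
\]
again up to polylogarithms in $n$.

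\textbf{Bias term.} The third condition bounds it directly by $n^{-\beta_{\mathcal H}/(2\beta_{\mathcal H}+d)}\log^3(en)$.

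\textbf{Prediction term.} It is already of the required form. Summing the four bounds gives \eqref{eq:recovered-latent-rate}.

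\textbf{The remarks.} For $n_R\asymp n$, the second condition in \eqref{eq:latent-rate-allocation} holds exactly when its exponent is at most one. That is
\[
\beta_{\mathcal H}(2\beta_{\mathcal F}+p)\le(\beta_{\mathcal H}\wedge1)\beta_{\mathcal F}(2\beta_{\mathcal H}+d),
\]
which is the stated sufficient condition. When $\beta_{\mathcal H}\ge1$, we have $\beta_{\mathcal H}\wedge1=1$ and the inequality reads
\[
2\beta_{\mathcal H}\beta_{\mathcal F}+p\beta_{\mathcal H}\le 2\beta_{\mathcal F}\beta_{\mathcal H}+d\beta_{\mathcal F},
\]
which is $d\ge p\beta_{\mathcal H}/\beta_{\mathcal F}$.

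For $\beta_{\mathcal H}<1$, I would exhibit parameters satisfying \eqref{eq:recovered-dimension-condition} but violating the inequality, showing that \eqref{eq:recovered-dimension-condition} alone is insufficient. Take $d=p\beta_{\mathcal H}/\beta_{\mathcal F}$ with $\beta_{\mathcal F}$ large. The left side exceeds the right by $2\beta_{\mathcal H}\beta_{\mathcal F}(1-\beta_{\mathcal H})>0$.

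The final remark concerns fixed $\lambda>0$ with compact-support KL. There, $\varepsilon_{\mathcal D,n_R}$ stays bounded below, since a bounded discriminator cannot attain divergences to $\gamma_d$ for laws supported in $K_U$. The third condition therefore fails for large $n$, and I would state this as an observation rather than prove it in detail.
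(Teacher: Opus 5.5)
Your proposal is correct and matches the paper's proof. It bounds each summand of the explicit bound \eqref{eq:complete-log-rates} by evaluating the eventually decreasing map $t\mapsto t^{-u}\log^v(et)$ at the allocation lower bounds, absorbs the finitely many small $n$ into the constant, and reduces the $n_R\asymp n$ case to the exponent inequality. Your last remark is exactly the bias floor \eqref{eq:kl-discriminator-bias-floor}, which you can cite instead of leaving it as an observation.

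There is one arithmetic slip. For $\beta_{\mathcal H}<1$ and $d=p\beta_{\mathcal H}/\beta_{\mathcal F}$, the gap is $\beta_{\mathcal H}(1-\beta_{\mathcal H})(2\beta_{\mathcal F}+p)$, because the $p$-terms no longer cancel. This is positive for every $\beta_{\mathcal F}$, so your conclusion stands. Rather than taking $\beta_{\mathcal F}$ large, which can make $d<1$, fix an integer $d\geq1$ and set $\beta_{\mathcal F}=p\beta_{\mathcal H}/d$.
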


\subsection{Network Estimates for Downstream Prediction and Representation Recovery}\label{app:explicit-rates}

We derive the approximation and estimation bounds for the network classes used in the risk analysis.

For $t\geq2$, set
\begin{equation}\label{eq:chi-rate}
 \chi_t(v):=\min\left\{1,\sqrt{\frac{(v+1)\log(et)}{t}}\right\},\qquad
 \mathsf r_t(q,\beta):=t^{-\beta/(2\beta+q)}.
\end{equation}
Use the fully connected architecture quantities of Definition \ref{ass:downstream-sieves}, with $1\leq\mathcal D_J+1\leq\mathcal S_J$. There is a universal constant $c_0$ such that we may use the capacity bound
\begin{equation}\label{eq:network-capacity}
 v_J:=c_0\mathcal S_J(\mathcal D_J+1)\log(e\mathcal S_J)
 \quad\text{as an upper bound for }Pdim(J).
\end{equation}
For vector-valued sieves, $v_{\mathcal F_{n_R}}$ bounds each scalar coordinate class. The capacity and entropy proofs are given below. Constants below may depend on fixed dimensions, cube sizes, smoothness radii, $\mathcal B,M,c_r,C_r$, the fixed penalty coefficient $\lambda$, and the stated recovery and discriminator bounds; they do not depend on $n,m,n_R$. Dependence on dimension is displayed in the exponents, and dimension-dependent constants are not claimed uniform as $p,d$ grow.

For a bounded domain-score sieve $J\subset[-M,M]$ and any $t$ inputs $v_1,\ldots,v_t$, write $\|f\|_{\infty,t}:=\max_{i\leq t}|f(v_i)|$. Its empirical covering number obeys
\begin{equation}\label{eq:domain-entropy}
 \log\mathcal N(\delta,\|\cdot\|_{\infty,t},J)
 \leq v_J\log\{Ct(1+M/\delta)\},\qquad0<\delta\leq M.
\end{equation}
The same bound applies to raw-space score sieves and to restrictions of these classes, without a parameter-magnitude bound. These empirical covers are used through symmetrization and localization below.

\begin{lemma}\label{prop:architecture-rate}
Under the independence and score bounds of Lemma \ref{lem:latent-oracle},
\begin{equation}\label{eq:complexity-expanded}
 \mathbb E\mathfrak C_{n,A}\leq C e^M(L_{\mathcal B}+\mathcal B)\chi_n(v_{\mathcal H}).
\end{equation}
For learned-space domain-score ERM, set $J=\ddot\Phi_M$, $\phi_0=\log r_{\hat R}$ and
$A_J:=\inf_{\phi\in J}\|\phi-\phi_0\|_{L^2(P_\mu^{\hat R})}$, conditional on the independent representation fold. Then
\begin{equation}\label{eq:ratio-slow-rate}
 \mathbb E\epsilon_{r,A}\leq C_M\{\sqrt{\mathbb E A_J^2}+\chi_m(v_J)^{1/2}\}.
\end{equation}
For fixed $v_J$, the stochastic term is $O((\log(em)/m)^{1/4})$.
\end{lemma}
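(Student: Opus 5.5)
Both bounds are proved conditionally on the independent earlier stages. The representation and domain-score functions are then fixed, and the relevant sample is i.i.d. Full expectations are taken at the end.

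\emph{Weighted-loss complexity.} Conditional on $\mathscr G$, the class $\{q_h=\hat r_d(\hat R(\cdot))\ell(h(\hat R(\cdot)),\cdot):h\in\mathcal H\}$ is uniformly bounded by $e^ML_{\mathcal B}$. The weight is common to all $h$, and $a\mapsto\ell(a,y)$ is $1$-Lipschitz on $[-\mathcal B,\mathcal B]$. Hence $|q_h-q_{h'}|\le e^M|h-h'|$ pointwise, and an empirical $\delta$-cover of $\mathcal H$ at the points $\hat R(X_i)$ yields an $e^M\delta$-cover of the weighted class. Since $\mathcal H\subset[-\mathcal B,\mathcal B]$ has pseudo-dimension at most $v_{\mathcal H}$ by \eqref{eq:network-capacity}, the standard Haussler/Anthony--Bartlett bound gives
\begin{equation*}
 \log\mathcal N(\delta,\|\cdot\|_{\infty,n},\mathcal H)\le v_{\mathcal H}\log\{Cn(1+\mathcal B/\delta)\},
\end{equation*}
which is the analogue of \eqref{eq:domain-entropy}. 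The absolute Rademacher complexity is controlled in two steps. First, a finite-class (Massart) bound at scale $\delta=\mathcal B/n$ gives a discretization error $\le e^M\mathcal B/n$ plus a maximal term $e^M(L_{\mathcal B}+\mathcal B)\sqrt{2\log(2\mathcal N)/n}$. Second, the resulting quantity $\sqrt{v_{\mathcal H}\log(en)/n}$ is bounded by $\chi_n(v_{\mathcal H})$, and trivially by $e^ML_{\mathcal B}$ when $\chi_n=1$. Taking expectations over $\mathscr G$ proves \eqref{eq:complexity-expanded}.

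\emph{Ratio error.} Apply \eqref{eq:ratio-root-conversion} conditionally: $\epsilon_{r,A}\le(2e^M/\sqrt{\kappa_M})\sqrt{q_A}$. It therefore suffices to bound $\mathbb E q_A$ by $C_M\{\mathbb E A_J^2+\chi_m(v_J)\}$; the inequality $\sqrt{a+b}\le\sqrt a+\sqrt b$ and Jensen then finish the argument. Decompose the excess domain risk of ERM in the usual way:
\begin{equation*}
 q_A\le[\mathcal L_{\mu,\hat R}(\phi_J)-\mathcal L_{\mu,\hat R}(\phi_0)]+2\sup_{\phi\in J}|\widehat{\mathcal L}-\mathcal L_{\mu,\hat R}|(\phi),
\end{equation*}
where $\phi_J$ is a near-best approximant. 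For the approximation term, the conditional domain loss at score $t$ has second derivative $\sigma'(t)\le1/4$ and vanishing first derivative at $\phi_0$, because $\phi_0$ is the population minimizer by the balanced-law identity in the proof of Lemma \ref{lem:latent-oracle}. A second-order Taylor expansion gives the bound $\tfrac18\|\phi_J-\phi_0\|^2_{L^2(P_\mu^{\hat R})}$, which is $\le CA_J^2$. For the stochastic term, the loss $\log(1+e^{-z\phi})$ is $1$-Lipschitz in $\phi$ and bounded by $\log(1+e^M)$ on $J\subset[-M,M]$. The domain fold is i.i.d. from $P_{\mu,Z}$ independently of $\hat R$, and composition with the fixed $\hat R$ does not increase the empirical entropy. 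Symmetrization with \eqref{eq:domain-entropy} and the same finite-cover argument as above therefore gives $\mathbb E\sup|\cdot|\le C_M\chi_m(v_J)$.

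The main obstacle, modest here, is the slow-rate nature of the statement. The plan deliberately avoids localization: the uniform deviation is $O(\chi_m)$ rather than $O(\chi_m^2)$, and this is exactly why only $\chi_m(v_J)^{1/2}$, i.e.\ the $(\log(em)/m)^{1/4}$ rate for fixed $v_J$, is claimed. Care is needed only to keep all constants free of $m,n,n_R$ and to justify measurability and conditioning on $\mathscr G$ and on the representation fold. Both are handled by the independence built into Algorithm \ref{alg:al2}.
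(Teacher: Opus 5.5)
Your proposal is correct and follows essentially the same route as the paper: conditional on the earlier stages, you lift a mesh-$\mathcal B/n$ empirical score cover through the $e^M$-bounded common weight and the $1$-Lipschitz loss, then apply a Massart-type finite-class bound together with the trivial envelope bound to get $\chi_n(v_{\mathcal H})$; for the ratio you use the ERM-plus-symmetrization decomposition with the $\tfrac18\|\phi-\phi_0\|^2$ upper bound of Lemma \ref{lem2.1}, followed by \eqref{eq:ratio-root-conversion} and $\sqrt{x+y}\le\sqrt x+\sqrt y$. The only differences are cosmetic and harmless: you cite the Haussler/Anthony--Bartlett covering bound where the paper rederives it by quantization and Sauer's lemma, and you carry the looser envelope $L_{\mathcal B}+\mathcal B$ instead of $L_{\mathcal B}$ in the maximal term.
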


For the representation-recovery extension, evaluate $\delta_H$ from \eqref{eq:representation-comparison} on $K_U$, use $\delta_F$ from Assumption \ref{ass:explicit-rates}(ii), and define
\begin{equation*}
 \delta_A(R):=\inf_{\phi\in\ddot\Phi_M}\|\phi-\log r_R\|_{\infty,K_U}.
\end{equation*}
Use the Euclidean norm for representation outputs. In this subsection put $\nu:=\beta_{\mathcal H}\wedge1$, and let $C_h$ be the corresponding H\"older modulus of $h^*$ on $K_U$. Let $v_{DF}$ bound the pseudodimension of $\{D\circ R\}$ by applying \eqref{eq:network-capacity} to the composed architecture. Define
\begin{equation}\label{eq:representation-rate-budget}
 \rho_{n_R}:=C_{\rm id}\left[(36+\lambda K_D)\delta_F
 +C\{\chi_{n_R}(dv_{\mathcal F_{n_R}})+\lambda\chi_{n_R}(v_{DF})+\lambda\chi_{n_R}(v_{\mathcal D_{n_R}})\}
 +3\lambda\varepsilon_{\mathcal D,n_R}\right],
\end{equation}
and, for a domain-score class $J$,
\begin{equation*}
 T_J(m):=\frac{(v_J+1)\log(em)}{m}.
\end{equation*}
For the representation proof it is convenient to write
\begin{equation*}
 D_{n_R}^{\rm pop}(R):=\sup_{D\in\mathcal D_{n_R}}
 \{\mathbb E_{P_{\mathcal S}}D(R(X))-\mathbb E_{\gamma_d}f^*(D(W))\}.
\end{equation*}
Thus \eqref{eq:disc-bias} is the uniform difference between the full divergence and this restricted variational objective.
Let $K_f$ be a fixed bound for both $|f^*|$ and its Lipschitz constant on $[-A_D,A_D]$, whose existence is assumed in Assumption \ref{ass:explicit-rates}(iii).

\begin{lemma}\label{thm:expanded-rates}
Under Theorem \ref{thm:representation-recovery}'s conditions, before balancing the network sizes,
\begin{equation}\label{eq:all-terms-expanded}
\begin{split}
 \mathbb E\epsilon_{r,A}&\leq C_M\{\sqrt{\mathbb E\delta_A(\hat R)^2}+\sqrt{T_{\ddot\Phi_M}(m)}\},\\
 \mathbb E\|\hat R-R^*\|_{L^1(P_{\mathcal S})}&\leq\rho_{n_R},\qquad
 \mathbb E b_{\hat R}\leq\tfrac12C_rC_h\rho_{n_R}^{\nu},\qquad
 \mathbb E a_{\hat R}\leq\delta_H+C_rC_h\rho_{n_R}^{\nu}.
\end{split}
\end{equation}
Consequently,
\begin{equation}\label{eq:expanded-oracles}
 \mathbb E\mathcal E_{\mathcal T}(\hat h,\hat R)
 \leq C\left[\chi_n(v_{\mathcal H})+\delta_H+\sqrt{\mathbb E\delta_A(\hat R)^2}
                    +\sqrt{T_{\ddot\Phi_M}(m)}+\rho_{n_R}^{\nu}\right].
\end{equation}
The representation inequalities do not depend on how the density ratio is estimated and also apply to Method \eqref{MeB}.
\end{lemma}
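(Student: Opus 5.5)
The plan is to prove the four inequalities in \eqref{eq:all-terms-expanded} separately and then substitute them into Lemma \ref{lem:latent-oracle}.

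\textbf{Ratio error.} We work conditional on the independent representation fold, so $\hat R$ is fixed and $D_\mu$ is an i.i.d.\ sample from $P_{\mu,Z}^{\hat R}$. Domain-logistic ERM over $J=\ddot\Phi_M$ then has bounded loss, and by Lemma \ref{lem2.1} its curvature on $[-M,M]$ is at least $\kappa_M$, so the excess risk $q_A$ dominates $\frac{\kappa_M}{2}\|\hat\phi_d-\log r_{\hat R}\|_{L^2(P_\mu^{\hat R})}^2$. This variance--excess-risk relation is what makes a localized analysis (local Rademacher complexity, or a peeling argument with the empirical entropy bound \eqref{eq:domain-entropy}) applicable. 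It should give
\begin{equation*}
\mathbb E[q_A\mid\hat R]\lesssim\delta_A(\hat R)^2+T_{\ddot\Phi_M}(m).
\end{equation*}
The approximation part uses $\mathcal L_{\mu,\hat R}(\phi)-\mathcal L_{\mu,\hat R}(\phi_0)\leq\frac18\|\phi-\phi_0\|^2_{L^2}\leq\frac18\delta_A^2$, since the loss has second derivative at most $1/4$. Taking expectations and applying \eqref{eq:ratio-root-conversion} together with Jensen yields the first line. This is the slow-rate bound \eqref{eq:ratio-slow-rate} of Lemma \ref{prop:architecture-rate}, upgraded through localization. I expect this localization step to be the main technical obstacle: one has to check the Bernstein condition uniformly in $\hat R$ and keep the constants free of $m$.

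\textbf{Representation error.} Start from the basic inequality $\widehat{\mathcal L}_{n_R}(\hat R)\leq\widehat{\mathcal L}_{n_R}(\bar R)$, where $\bar R\in\mathcal F_{n_R}$ attains $\delta_F$. Split $\mathcal L(\hat R)-\mathcal L(R^*)$ into four contributions, each matched to a term of $\rho_{n_R}$.
\begin{itemize}
\item \emph{Approximation.} $|\mathcal L(\bar R)-\mathcal L(R^*)|$ is controlled by $(36+\lambda K_D)\delta_F$. The distance covariance is Lipschitz in the representation because $Y$ is bounded, which produces a numeric constant such as $36$. The variational term is $K_D$-Lipschitz through the discriminator.
\item \emph{Uniform deviation.} The empirical distance covariance (a V-statistic) deviates uniformly over $\mathcal F_{n_R}$ by $\chi_{n_R}(dv_{\mathcal F})$. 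The variational objective deviates by $\chi_{n_R}(v_{DF})+\chi_{n_R}(v_{\mathcal D})$, using boundedness and the Lipschitz property of $f^*$.
\item \emph{Variational gap.} Replacing $\mathbb D_f$ by its restricted supremum costs at most $\varepsilon_{\mathcal D,n_R}$ at both $\hat R$ and $\bar R$.
\item \emph{Optimality of $R^*$.} Assumption \ref{ass:explicit-rates}(iii) gives $\mathcal L(\hat R)\geq\mathcal L(R^*)$, so the absolute value in Assumption \ref{assum3} equals the signed difference.
\end{itemize}
Summing, with the bias appearing in total as at most $3\lambda\varepsilon_{\mathcal D,n_R}$, and applying Assumption \ref{assum3} gives $\mathbb E\|\hat R-R^*\|_{L^1}\leq\rho_{n_R}$.

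\textbf{Bounds on $b_{\hat R}$, $a_{\hat R}$ and the final assembly.} These follow directly from \eqref{eq:representation-comparison} with $\nu=\beta_{\mathcal H}\wedge1$, together with the Jensen step $\mathbb E t_{\hat R}\leq\rho_{n_R}^\nu$ shown after \eqref{eq:representation-comparison}. None of this uses the ratio estimator, which is why the bounds also apply to Method \eqref{MeB}. Finally, insert the four bounds and \eqref{eq:complexity-expanded} into \eqref{eq:latent-oracle-bound}, and absorb $e^M$, $L_{\mathcal B}$, $\mathcal B$, $C_r$ and $C_h$ into $C$; this gives \eqref{eq:expanded-oracles}.
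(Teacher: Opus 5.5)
This is essentially the paper's proof. The ratio line is Lemma \ref{lem:fast-domain-erm} applied conditionally on the representation fold and combined with \eqref{eq:ratio-root-conversion}; that lemma is already proved, so the localization you flag as the main obstacle need not be redone. The representation line is the same basic-inequality argument: uniform deviations $C\chi_{n_R}(dv_{\mathcal F_{n_R}})$ and $C\{\chi_{n_R}(v_{DF})+\chi_{n_R}(v_{\mathcal D_{n_R}})\}$, optimality of $R^*$ and Assumption \ref{assum3}, then \eqref{eq:representation-comparison}, Jensen and \eqref{eq:latent-oracle-bound}.

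Two small corrections. First, $\widehat{\mathcal V}_{n_R}$ is the unbiased order-four U-statistic, not a V-statistic; the paper controls its uniform deviation by writing it as a permutation average of means over $\lfloor n_R/4\rfloor$ independent blocks. Second, the $K_D$-Lipschitz bound holds for the restricted objective $D_{n_R}^{\rm pop}$, not for the full divergence. The approximation step $\mathcal L(\bar R)-\mathcal L(R^*)$ therefore itself costs $\lambda\varepsilon_{\mathcal D,n_R}$. That is the third bias term in your total $3\lambda\varepsilon_{\mathcal D,n_R}$, whereas your itemization only accounts for two.
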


\begin{lemma}\label{cor:complete-rates}
For the sieve choices in Theorem \ref{thm:representation-recovery},
\begin{equation}\label{eq:individual-polynomial-rates}
\begin{split}
 \mathbb E\mathfrak C_{n,A},\ \delta_H
   &=\widetilde O(\mathsf r_n(d,\beta_{\mathcal H})),\qquad
 \mathbb E\epsilon_{r,A}=\widetilde O(\mathsf r_m(d,\beta_{\ddot\Phi})),\\
 \mathbb E b_{\hat R}&=\widetilde O(\mathsf r_{n_R}(p,\beta_{\mathcal F})^{\nu})
                 +O((\lambda\varepsilon_{\mathcal D,n_R})^{\nu}),\\
 \mathbb E a_{\hat R}&=\widetilde O(\mathsf r_n(d,\beta_{\mathcal H})
                      +\mathsf r_{n_R}(p,\beta_{\mathcal F})^{\nu})
                 +O((\lambda\varepsilon_{\mathcal D,n_R})^{\nu}).
\end{split}
\end{equation}

\end{lemma}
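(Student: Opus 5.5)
The statement to prove is Lemma \ref{cor:complete-rates}: each term in \eqref{eq:all-terms-expanded} attains the stated polynomial rate once the sieve sizes of Theorem \ref{thm:representation-recovery} are inserted. I will substitute those sizes into Lemmas \ref{prop:architecture-rate} and \ref{thm:expanded-rates} and check each rate separately.

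\textbf{Capacity.} From Definition \ref{ass:downstream-sieves}, a class with parameters $(q,\beta,t)$ has $\mathcal S_J\lesssim t^{3q/(4(2\beta+q))}\log^3(et)$ and $\mathcal D_J+1\lesssim t^{q/(4(2\beta+q))}\log(et)$. The capacity bound \eqref{eq:network-capacity} then gives
\begin{equation*}
 v_J\lesssim t^{q/(2\beta+q)}\log^5(et).
\end{equation*}
Hence
\begin{equation*}
 \chi_t(v_J)\le\sqrt{(v_J+1)\log(et)/t}\lesssim t^{-\beta/(2\beta+q)}\log^3(et)=\widetilde O(\mathsf r_t(q,\beta)).
\end{equation*}
The same computation gives $\sqrt{T_J(t)}=\widetilde O(\mathsf r_t(q,\beta))$.

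\textbf{Prediction terms.} Applying the capacity step with $(d,\beta_{\mathcal H},n)$ to \eqref{eq:complexity-expanded} gives $\mathbb E\mathfrak C_{n,A}=\widetilde O(\mathsf r_n(d,\beta_{\mathcal H}))$. Lemma \ref{lem:uniform-network-approximation} gives $\delta_H=O(\mathsf r_n(d,\beta_{\mathcal H}))$ on $K_U$.

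\textbf{Ratio term.} I will use the first line of \eqref{eq:all-terms-expanded}. By Assumption \ref{ass:explicit-rates}(i), $\log r_{\hat R}$ lies almost surely in a $\beta_{\ddot\Phi}$-H\"older ball of fixed radius with bound $M$. Lemma \ref{lem:uniform-network-approximation} therefore gives $\delta_A(\hat R)\le C\,\mathsf r_m(d,\beta_{\ddot\Phi})$ pathwise, with a constant uniform in the realization. Clipping at $M$ does not increase the error, since the target already lies in $[-M,M]$. So $\sqrt{\mathbb E\delta_A^2}=O(\mathsf r_m(d,\beta_{\ddot\Phi}))$, and $\sqrt{T_{\ddot\Phi_M}(m)}$ has the same order up to logarithms.

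\textbf{Representation terms.} This is the main obstacle: bounding $\rho_{n_R}$ from \eqref{eq:representation-rate-budget}.
\begin{itemize}
\item The term $\delta_F$ is at most $Cn_R^{-\beta_{\mathcal F}/(2\beta_{\mathcal F}+p)}$ by Assumption \ref{ass:explicit-rates}(ii).
\item The term $\chi_{n_R}(dv_{\mathcal F_{n_R}})$ is $\widetilde O(\mathsf r_{n_R}(p,\beta_{\mathcal F}))$ by the capacity step, since $d$ is fixed.
\item The discriminator constraints $\mathcal D_{\mathcal D_{n_R}}+1\le C(\mathcal D_{\mathcal F}+1)$ and $\mathcal S_{\mathcal D_{n_R}}\le C\mathcal S_{\mathcal F}$ make $v_{\mathcal D_{n_R}}$ the same order as $v_{\mathcal F_{n_R}}$.
\item The composed architecture $D\circ R$ has depth at most the sum of the two depths and parameter count at most the sum of the two counts (plus lower-order terms). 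So $v_{DF}$ is also $\lesssim n_R^{p/(2\beta_{\mathcal F}+p)}\log^5(en_R)$.
\end{itemize}
It follows that
\begin{equation*}
 \rho_{n_R}=\widetilde O(\mathsf r_{n_R}(p,\beta_{\mathcal F}))+3C_{\rm id}\lambda\varepsilon_{\mathcal D,n_R}.
\end{equation*}
Since $0<\nu\le1$, subadditivity $(x+y)^\nu\le x^\nu+y^\nu$ gives
\begin{equation*}
 \rho_{n_R}^\nu\le\widetilde O(\mathsf r_{n_R}(p,\beta_{\mathcal F})^\nu)+O((\lambda\varepsilon_{\mathcal D,n_R})^\nu).
\end{equation*}
Substituting this into the bounds on $\mathbb E b_{\hat R}$ and $\mathbb E a_{\hat R}$ in \eqref{eq:all-terms-expanded}, together with $\delta_H=O(\mathsf r_n(d,\beta_{\mathcal H}))$ from the prediction step, yields the last two lines of \eqref{eq:individual-polynomial-rates}. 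Powers of logarithms raised to $\nu$ remain logarithmic and are absorbed in $\widetilde O$.
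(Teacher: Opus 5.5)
Your proposal is correct and follows essentially the same route as the paper: substitute the sieve sizes into \eqref{eq:network-capacity} to get $v_J\lesssim t^{q/(2\beta+q)}\log^5(et)$, hence $\chi_t(v_J)$ and $\sqrt{T_J(t)}$ of order $\mathsf r_t(q,\beta)$ up to logarithms; apply Lemma \ref{lem:uniform-network-approximation} pathwise to $\delta_H$ and $\delta_A(\hat R)$; bound each term of $\rho_{n_R}$; and finish with $(x+y)^\nu\le x^\nu+y^\nu$ in \eqref{eq:all-terms-expanded}. The one step the paper spells out more carefully is the composed class $D\circ R$, where it keeps the $d$-dimensional output as a bottleneck via $u=\varrho(u)-\varrho(-u)$, so that the composite is a genuine fully connected ReLU network whose depth and parameter count are of the order of the sums $(\mathcal D_{\mathcal F_{n_R}}+1)+(\mathcal D_{\mathcal D_{n_R}}+1)$ and $\mathcal S_{\mathcal F_{n_R}}+\mathcal S_{\mathcal D_{n_R}}$; this construction is what justifies your ``plus lower-order terms.''
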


\subsubsection{Capacity, Entropy and Bounded-Class Estimates}
\begin{proof}[Capacity and entropy bounds \eqref{eq:network-capacity}--\eqref{eq:domain-entropy}]
All weights and biases of the fully connected architecture are free parameters, and output clipping is included in the budgets. Let $\Pi_J(k)$ count subgraph sign patterns on $k$ input-threshold pairs. For each fixed layer-width sequence, apply the polynomial sign-pattern bound in \cite[Theorem 7]{bartlett2019nearly} to $(x,z)\mapsto\mathbf1\{f(x)>z\}$. If depths and widths vary within the budgets, there are at most $(\mathcal S_J+1)^{\mathcal D_J+1}$ possible layer-width sequences, since each hidden width is at most $\mathcal S_J$. Multiplying by this number and absorbing it into the bound gives
\begin{equation*}
 \Pi_J(k)\leq(Ck\mathcal S_J)^{C\mathcal S_J(\mathcal D_J+1)},
 \qquad k\geq\mathcal S_J(\mathcal D_J+1).
\end{equation*}
If $k$ points are pseudo-shattered, then
\begin{equation*}
 k\log2\leq C\mathcal S_J(\mathcal D_J+1)\log(Ck\mathcal S_J).
\end{equation*}
Put $x=k/[\mathcal S_J(\mathcal D_J+1)]$. Since $\mathcal D_J+1\leq\mathcal S_J$, the last inequality implies $x\leq C_1\log x+C_2\log(e\mathcal S_J)$. Using $\log x\leq x/(2C_1)+\log(2C_1)-1$ yields $x\leq C\log(e\mathcal S_J)$, and hence \eqref{eq:network-capacity}. The case $k<\mathcal S_J(\mathcal D_J+1)$ satisfies the same bound. Restrictions and scalar coordinate projections cannot increase capacity.

For \eqref{eq:domain-entropy}, quantize $[-M,M]$ into intervals of length at most $\delta$. The evaluation bins on $t$ inputs are determined by at most $t\lceil2M/\delta\rceil$ input-threshold comparisons. Sauer's bound gives at most $\{Ct(1+M/\delta)\}^{v_J}$ bin patterns. Choose an evaluation vector from each occupied bin pattern; vectors in the same pattern differ by at most $\delta$ in $\|\cdot\|_{\infty,t}$. This proves the empirical covering bound, including for restricted sieves by choosing representatives in those sieves.
\end{proof}

\begin{proof}[Proof of Lemma \ref{prop:architecture-rate}]
Let $\mathcal J\subset[-K,K]$, $K>0$, have pseudodimension at most $v$. On $t$ fixed inputs, a mesh-$\eta$ quantization uses at most $t(1+2K/\eta)$ input-threshold comparisons. Sauer's bound therefore gives an empirical sup-norm cover of cardinality
\begin{equation*}
 N_\eta\leq\{et(1+2K/\eta)\}^v\quad(v\geq1),\qquad N_\eta=1\quad(v=0).
\end{equation*}
For its representative vectors $a_j\in[-K,K]^t$ and $u>0$,
\begin{equation*}
\begin{split}
 \mathbb E_\zeta\exp\left(\frac u t\sum_i\zeta_i a_{j,i}\right)
 &\leq\exp\left(\frac{u^2K^2}{2t}\right),\\
 \mathbb E_\zeta\max_{j\leq N_\eta}
 \left|\frac1t\sum_i\zeta_i a_{j,i}\right|
 &\leq\frac{\log(2N_\eta)}u+\frac{uK^2}{2t}.
\end{split}
\end{equation*}
Minimizing over $u$ and adding the quantization error proves
\begin{equation}\label{eq:counting-rademacher}
 \mathbb E_\zeta\sup_{f\in\mathcal J}
 \left|\frac1t\sum_i\zeta_i f(x_i)\right|
 \leq\eta+K\sqrt{\frac{2\{\log2+v\log(et(1+2K/\eta))\}}t}
 \leq CK\chi_t(v),
\end{equation}
where the last inequality uses $\eta=K/t$ and the trivial bound $K$.

Conditional on $\mathscr G$, an empirical score cover for $h(\hat R(X_i))$ at mesh $\mathcal B/n$ induces a weighted-loss cover at mesh $e^M\mathcal B/n$:
\begin{equation*}
 |\hat r_d(\hat R(X_i))\ell(h(\hat R(X_i)),Y_i)
 -\hat r_d(\hat R(X_i))\ell(\bar h(\hat R(X_i)),Y_i)|
 \leq e^M|h(\hat R(X_i))-\bar h(\hat R(X_i))|.
\end{equation*}
The loss envelope is $e^ML_{\mathcal B}$. The preceding finite-cover calculation gives
\begin{equation*}
 \mathfrak C_{n,A}\leq
 \frac{e^M\mathcal B}n+
 e^ML_{\mathcal B}\sqrt{\frac{2\{\log2+v_{\mathcal H}\log(Cn^2)\}}n},
 \qquad \mathfrak C_{n,A}\leq e^ML_{\mathcal B}.
\end{equation*}
These inequalities imply \eqref{eq:complexity-expanded} after averaging over $\mathscr G$.

For the slow domain bound, fix the independent representation fold and put $\phi_0=\log r_{\hat R}$. ERM and symmetrization imply
\begin{equation*}
\begin{split}
 \mathbb E[q_A\mid\hat R]
 &\leq\inf_{\phi\in J}
       \{\mathcal L_{\mu,\hat R}(\phi)-\mathcal L_{\mu,\hat R}(\phi_0)\}
 +2\mathbb E\sup_{\phi\in J}
       |\mathcal L_{\mu,\hat R}(\phi)-\widehat{\mathcal L}_{\mu,\hat R}(\phi)|\\
 &\leq A_J^2/8+CL_M\chi_m(v_J),\qquad L_M=\log(1+e^M).
\end{split}
\end{equation*}
Here Lemma \ref{lem2.1} gives the approximation term; \eqref{eq:counting-rademacher} applies to a score cover followed by the 1-Lipschitz loss. Equation \eqref{eq:ratio-root-conversion} and $\sqrt{x+y}\leq\sqrt x+\sqrt y$ give \eqref{eq:ratio-slow-rate}.
\end{proof}

\subsubsection{A Bounded-Logistic ERM Inequality}
For the following lemma, let $O=(V,Z)$ have the balanced domain law, with optimal score $\phi_0$, $|\phi_0|\leq M$, and let $J\subset[-M,M]$. Define
\begin{equation*}
 F_\phi(O)=\log(1+e^{-Z\phi(V)})-\log(1+e^{-Z\phi_0(V)}),\quad
 q(\phi)=\mathbb E F_\phi,\quad
 \widehat P_mF=\frac1m\sum_{i=1}^mF(O_i).
\end{equation*}
For learned features, $(V,\phi_0)=(\hat R(X),\log r_{\hat R})$; for raw features, $(V,\phi_0)=(X,\phi^*)$.

\begin{lemma}\label{lem:fast-domain-erm}
Let $\hat\phi$ be ERM over $J$, with $\operatorname{Pdim}(J)\leq v_J$, and use $T_J(m)=(v_J+1)\log(em)/m$. Then
\begin{equation}\label{eq:fast-domain-erm}
\begin{split}
 \mathbb E q(\hat\phi)
 &\leq2\inf_{\phi\in J}q(\phi)+C_MT_J(m)\\
 &\leq\frac14\inf_{\phi\in J}\|\phi-\phi_0\|_\infty^2+C_MT_J(m).
\end{split}
\end{equation}
For learned features the assertion is conditional on the independent representation fold. Neither convexity of $J$ nor membership $\phi_0\in J$ is required.
\end{lemma}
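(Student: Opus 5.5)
The plan is to prove \eqref{eq:fast-domain-erm} by a localized (Bernstein-type) analysis of the excess logistic loss, relative to a fixed comparator rather than to $\phi_0$. This way neither convexity of $J$ nor $\phi_0\in J$ is needed. Fix any $\bar\phi\in J$. Since $\hat\phi$ minimizes $\widehat P_m F_\phi$ over $J$, we have $\widehat P_mF_{\hat\phi}\le\widehat P_mF_{\bar\phi}$, so
\begin{equation*}
 q(\hat\phi)\le q(\bar\phi)+(P-\widehat P_m)(F_{\hat\phi}-F_{\bar\phi}).
\end{equation*}
The two structural facts are as follows. First, each $F_\phi$ is bounded by $2M$, because the logistic loss is $1$-Lipschitz and both scores lie in $[-M,M]$. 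Second, there is a variance--excess-risk (Bernstein) condition
\begin{equation*}
 \operatorname{Var}F_\phi\le\mathbb E F_\phi^2\le\mathbb E(\phi-\phi_0)^2(V)\le\frac{2}{\kappa_M}q(\phi).
\end{equation*}
The last inequality is the curvature bound already used in the proof of Lemma \ref{lem:latent-oracle}: the conditional domain loss has second derivative at least $\kappa_M$ on $[-M,M]$ and first derivative zero at $\phi_0(v)$. Hence $\operatorname{Var}(F_\phi-F_{\bar\phi})\le C_M\{q(\phi)+q(\bar\phi)\}$.

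Next I bound the localized empirical process uniformly over $J$ using the empirical covers \eqref{eq:domain-entropy}. Logistic loss is Lipschitz, so a $\|\cdot\|_{\infty,m}$ cover of $J$ at mesh $\delta$ transfers to the class $\{F_\phi\}$. I would use a ratio-type or peeling argument, or equivalently Talagrand/Bousquet concentration with a sub-root fixed point, to show that with probability at least $1-e^{-x}$, for all $\phi\in J$,
\begin{equation*}
 |(P-\widehat P_m)(F_\phi-F_{\bar\phi})|\le\tfrac12\{q(\phi)+q(\bar\phi)\}+C_M\frac{v_J\log(em)+x}{m}.
\end{equation*}
The local Rademacher complexity at radius $r$ is bounded by $C\sqrt{r\,v_J\log(em)/m}+C_Mv_J\log(em)/m$ through Dudley's entropy integral with \eqref{eq:domain-entropy}. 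Its fixed point is therefore of order $T_J(m)$.

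Plugging this into the basic inequality gives $q(\hat\phi)\le q(\bar\phi)+\frac12q(\hat\phi)+\frac12q(\bar\phi)+C_M(T_J(m)+x/m)$, i.e.\ $q(\hat\phi)\le3q(\bar\phi)+\dots$. Getting the constant $2$ needs the slightly sharper split $\epsilon$ versus $1+\epsilon$, choosing the fraction $1/3$ so that the coefficient is $(1+1/3)/(1-1/3)=2$. Integrating the tail in $x$ and taking the infimum over $\bar\phi$ yields the first line; for learned features, everything is done conditionally on $\hat R$, which is independent of $D_\mu$.

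The second line follows from the upper curvature bound. The second derivative of the conditional loss is at most $1/4$, so Taylor's theorem gives $q(\phi)\le\frac18\mathbb E(\phi-\phi_0)^2\le\frac18\|\phi-\phi_0\|_\infty^2$, and multiplying by $2$ gives $\frac14$. I expect the main obstacle to be the localized uniform deviation step. It requires handling the data-dependent empirical covers through symmetrization, together with the peeling over shells of $q$, while keeping only a $\log(em)$ factor rather than $\log^2$.
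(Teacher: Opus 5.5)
Your proposal is correct and follows the paper's skeleton: the variance bound $\mathbb E F_\phi^2\le(2/\kappa_M)q(\phi)$ relative to $\phi_0$, a local complexity with fixed point of order $T_J(m)$, a deterministic comparator $\bar\phi$, tail integration, and the $1/8$ Taylor bound. Where you differ is in how the uniform step is organized.

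You control the comparator-dependent class $\{F_\phi-F_{\bar\phi}\}$ two-sidedly by an $\epsilon$-fraction of $q(\phi)+q(\bar\phi)$, and you obtain $2=(1+\epsilon)/(1-\epsilon)$ at $\epsilon=1/3$. The paper instead applies the one-sided part of \cite[Theorem 3.3]{bartlett2005local} with $K=2$ directly to $\{F_\phi:\phi\in J\}\cup\{0\}$. Its means $q(\phi)$ are nonnegative, so the hypothesis $\mathbb E F^2\le(2/\kappa_M)\mathbb E F$ holds verbatim. This gives $q(\phi)\le2\widehat P_mF_\phi+C_M(T_J(m)+x/m)$ for all $\phi\in J$. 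The comparator then enters only through $\widehat P_mF_{\hat\phi}\le\widehat P_mF_{\bar\phi}$ and the exact identity $\mathbb E\widehat P_mF_{\bar\phi}=q(\bar\phi)$.

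The paper's route needs no two-sided or comparator-dependent localization. The cited theorem does not apply verbatim to your difference class, because its means can be negative. You therefore genuinely need the peeling or weighted version you mention; in exchange, yours is the more portable basic-inequality template. Also, convexity and $\phi_0\in J$ are dispensable in both routes because the Bernstein condition is taken relative to the pointwise minimizer $\phi_0$, not because of your choice of comparator.

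The step you flag as the obstacle needs one ingredient you do not state. Dudley's integral over the covers \eqref{eq:domain-entropy} bounds the Rademacher complexity $\mathfrak R_m$ of $\mathcal E_r=\{F_\phi:\mathbb E F_\phi^2\le r\}$ in terms of its empirical radius $\sup_{F\in\mathcal E_r}\widehat P_mF^2$, not $r$. The paper bridges this by symmetrization and contraction for $u\mapsto u^2$ on $[-2M,2M]$, which gives $\mathbb E\sup_{F\in\mathcal E_r}\widehat P_mF^2\le r+C_M\mathfrak R_m$. With $a=T_J(m)$, it then solves $\mathfrak R_m\le C_M\{a+\sqrt{ar}+\sqrt{a\mathfrak R_m}\}$ to get $\mathfrak R_m\le C_M(\sqrt{ar}+a)$. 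A single-scale cover at mesh $M/m$ suffices for this, since these sup-norm covers already carry a $\log m$ factor and chaining gains only constants.
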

\begin{proof}
If $M=0$, all scores vanish and the assertion is immediate. Suppose $M>0$. Score Lipschitzness and conditional logistic curvature give
\begin{equation}\label{eq:logistic-bernstein}
 q(\phi)\geq0,\qquad |F_\phi|\leq2M,\qquad
 \mathbb E[F_\phi(O)^2]\leq\mathbb E[(\phi(V)-\phi_0(V))^2]
                  \leq\frac2{\kappa_M}q(\phi).
\end{equation}
Put $a=T_J(m)$ and, for $r\geq0$, define only within this proof
\begin{equation*}
\begin{gathered}
 \mathcal E_r:=\{F_\phi:\phi\in J,\ \mathbb E[F_\phi(O)^2]\leq r\}\cup\{0\},\\
 \mf{R}_m(\mathcal E_r\cup(-\mathcal E_r))=\mathbb E\mathbb E_\zeta\sup_{F\in\mathcal E_r}
 \left|\frac1m\sum_{i=1}^m\zeta_iF(O_i)\right|.
\end{gathered}
\end{equation*}
An empirical score cover at mesh $M/m$ induces a loss-difference cover at that mesh. By \eqref{eq:domain-entropy}, representatives chosen in $\mathcal E_r$ have cardinality $N$ satisfying $\log(2N)\leq C_M(v_J+1)\log(em)$. Their empirical squared norms are at most $\sup_{F\in\mathcal E_r}\widehat P_mF^2$. The finite-cover exponential calculation used in \eqref{eq:counting-rademacher}, now using these norms, gives
\begin{equation*}
 \mf{R}_m(\mathcal E_r\cup(-\mathcal E_r))\leq\frac{C_M}{m}
 +C_M\sqrt a\left(\mathbb E\sup_{F\in\mathcal E_r}\widehat P_mF^2\right)^{1/2}.
\end{equation*}
The representatives may depend on $O_1,\ldots,O_m$, but are chosen before the independent signs $\zeta_i$. Thus this conditional Rademacher calculation is valid. Symmetrization and contraction for $u\mapsto u^2$, which is $4M$-Lipschitz on $[-2M,2M]$, yield
\begin{equation*}
\begin{split}
 \mathbb E\sup_{F\in\mathcal E_r}\widehat P_mF^2
 &\leq r+\mathbb E\sup_{F\in\mathcal E_r}\left|\widehat P_mF^2-\mathbb E[F(O)^2]\right|\\
 &\leq r+2\mathbb E\mathbb E_\zeta\sup_{F\in\mathcal E_r}
       \left|\frac1m\sum_i\zeta_iF(O_i)^2\right|\\
 &\leq r+C_M\mf{R}_m(\mathcal E_r\cup(-\mathcal E_r)).
\end{split}
\end{equation*}
Consequently, since $a\geq1/m$,
\begin{equation*}
\begin{split}
 \mf{R}_m(\mathcal E_r\cup(-\mathcal E_r))
 &\leq C_M\{a+\sqrt{ar}+\sqrt{a\mf{R}_m(\mathcal E_r\cup(-\mathcal E_r))}\}\\
 &\leq\tfrac12\mf{R}_m(\mathcal E_r\cup(-\mathcal E_r))+C_M\{a+\sqrt{ar}\},
\end{split}
\end{equation*}
and hence $\mf{R}_m(\mathcal E_r\cup(-\mathcal E_r))\leq C_M\{\sqrt{ar}+a\}$.

To apply \cite[Theorem 3.3, part 1]{bartlett2005local}, take the excess-loss class $\{F_\phi:\phi\in J\}\cup\{0\}$ and the functional $T(F)=\mathbb E[F(O)^2]$. Equation \eqref{eq:logistic-bernstein} verifies
\begin{equation*}
 \operatorname{Var}(F(O))\leq T(F)\leq(2/\kappa_M)\mathbb E[F(O)].
\end{equation*}
Choose a sufficiently large $C_M$ in $\psi(r)=C_M(\sqrt{ar}+a)$. This function is nonnegative and nondecreasing, $\psi(r)/\sqrt r$ is nonincreasing, and it dominates $(2/\kappa_M)\mf{R}_m(\mathcal E_r\cup(-\mathcal E_r))$. Its fixed point $r_\psi$ satisfies $r_\psi\leq C_Ma$: indeed, with $u=\sqrt{r_\psi/a}$, the equation becomes $u^2=C_M(u+1)$, whose positive solution is bounded by a constant depending only on $M$. The cited theorem with $K=2$ therefore gives, for each $x>0$, with probability at least $1-e^{-x}$,
\begin{equation*}
 q(\phi)\leq2\widehat P_mF_\phi+C_M\{a+x/m\}\qquad\text{for all }\phi\in J.
\end{equation*}
For any deterministic $\bar\phi\in J$, ERM gives $\widehat P_mF_{\hat\phi}\leq \widehat P_mF_{\bar\phi}$. Thus, for $W=q(\hat\phi)-2\widehat P_mF_{\bar\phi}-C_Ma$,
\begin{equation*}
 \mathbb P\{W>C_Mx/m\}\leq e^{-x},\qquad
 \mathbb EW\leq\mathbb EW_+
 =\int_0^\infty\mathbb P\{W>u\}\,du\leq C_M/m.
\end{equation*}
Taking expectations, using $\mathbb E\widehat P_mF_{\bar\phi}=q(\bar\phi)$ and infimizing over $\bar\phi$ proves the first inequality of \eqref{eq:fast-domain-erm}. Lemma \ref{lem2.1} gives $q(\phi)\leq\|\phi-\phi_0\|_\infty^2/8$, proving the second. Infima not attained are handled by comparators within an arbitrary positive tolerance. All arguments hold conditional on the independent representation fold, with constants uniform in that fold.
\end{proof}

\subsubsection{Uniform Representation Bounds and Variational Bias}
\begin{proof}[Representation bounds in Lemma \ref{thm:expanded-rates}]
Let $D_U=\operatorname{diam}(K_U)$ and $O_i=(X_i,Y_i)$. Set
\begin{equation*}
 K_R(O_1,\ldots,O_4)=K((R(X_1),Y_1),\ldots,(R(X_4),Y_4)).
\end{equation*}
The ordered-pair kernel in Appendix \ref{Au_results} has absolute coefficient totals $3,9,6$. Since $|Y_i-Y_j|\leq1$,
\begin{equation}\label{eq:kernel-envelope-lipschitz}
 |K_R|\leq(3+9+6)D_U=18D_U,\qquad
 |K_R-K_{R'}|\leq36\max_{i\leq4}\|R(X_i)-R'(X_i)\|.
\end{equation}
The second bound follows termwise from
$|\|R(X_i)-R(X_j)\|-\|R'(X_i)-R'(X_j)\||
\leq2\max_{i\leq4}\|R(X_i)-R'(X_i)\|$.

Put $b=\lfloor n_R/4\rfloor$. For each permutation $\pi$ define the block average
\begin{equation*}
 B_\pi(R)=\frac1b\sum_{j=1}^b
 K_R(O_{\pi(4j-3)},\ldots,O_{\pi(4j)}).
\end{equation*}
Each four-element subset appears $b\,4!(n_R-4)!$ times in the sum over permutations. Kernel symmetry and unbiasedness therefore imply
\begin{equation*}
 \widehat{\mathcal V}_{n_R}[R(X),Y]
 =\frac1{n_R!}\sum_\pi B_\pi(R),\qquad
 \mathbb E B_\pi(R)=\mathcal V[R(X),Y].
\end{equation*}
Convexity of the supremum and symmetrization over the $b$ independent blocks give
\begin{equation*}
\begin{split}
 \mathbb E\sup_R|\widehat{\mathcal V}_{n_R}-\mathcal V|
 &\leq\frac1{n_R!}\sum_\pi\mathbb E\sup_R|B_\pi(R)-\mathbb EB_\pi(R)|\\
 &\leq2\mathbb E\mathbb E_\zeta\sup_R
 \left|\frac1b\sum_{j=1}^b\zeta_jK_R(O_{4j-3},\ldots,O_{4j})\right|.
\end{split}
\end{equation*}
Quantize each representation coordinate on the $4b$ inputs at mesh $\eta/\sqrt d$. Sauer's bound gives a joint evaluation cover with
\begin{equation*}
 \log N_\eta\leq d v_{\mathcal F_{n_R}}\log(Cb/\eta),\qquad0<\eta\leq1.
\end{equation*}
The Euclidean error on each input is at most $\eta$, so the kernel cover error is at most $36\eta$. The finite-cover calculation in \eqref{eq:counting-rademacher} consequently yields
\begin{equation*}
 \mathbb E_\zeta\sup_R\left|\frac1b\sum_j\zeta_jK_R\right|
 \leq36\eta+18D_U
       \sqrt{\frac{2\{\log2+d v_{\mathcal F_{n_R}}\log(Cb/\eta)\}}b}.
\end{equation*}
Choose $\eta=1/b$, use $b\geq n_R/7$ for $n_R\geq4$, and combine with the envelope bound. This proves
\begin{equation}\label{eq:ustat-uniform-rate}
 \mathbb E\sup_R|\widehat{\mathcal V}_{n_R}[R(X),Y]-\mathcal V[R(X),Y]|
 \leq C\chi_{n_R}(d v_{\mathcal F_{n_R}}).
\end{equation}

For the variational penalty, write $P_{n_R}^{\mathcal S}$ and $P_{n_R}^{\gamma}$ for empirical averaging over the source and reference observations within this proof. Then
\begin{equation*}
\begin{split}
 &\sup_R|\widehat{\mathbb D}_{f,n_R}(R)-D_{n_R}^{\rm pop}(R)|\\
 &\leq\sup_{R,D}|(P_{n_R}^{\mathcal S}-P_{\mathcal S})(D\circ R)|
       +\sup_D|(P_{n_R}^{\gamma}-\gamma_d)(f^*\circ D)|.
\end{split}
\end{equation*}
For the first class use envelope $A_D$ and capacity $v_{DF}$. For the second, an empirical mesh-$\eta$ cover for $D$ induces a mesh-$K_f\eta$ cover for $f^*\circ D$, with envelope $K_f$. Symmetrization and \eqref{eq:counting-rademacher} give
\begin{equation}\label{eq:variational-uniform-rate}
 \mathbb E\sup_R|\widehat{\mathbb D}_{f,n_R}(R)-D_{n_R}^{\rm pop}(R)|
 \leq C\{\chi_{n_R}(v_{DF})+\chi_{n_R}(v_{\mathcal D_{n_R}})\}.
\end{equation}
This empirical-cover bound holds for every realized reference sample, including Gaussian observations.

Within this proof put
\begin{equation*}
 B(R)=\mathbb D_f(P_{\mathcal S}^R\|\gamma_d)-D_{n_R}^{\rm pop}(R),\qquad
 0\leq B(R)\leq\varepsilon_{\mathcal D,n_R}.
\end{equation*}
The inequalities follow from Fenchel's inequality and \eqref{eq:disc-bias}. For KL, let $r=dP_{\mathcal S}^R/d\gamma_d$, using finiteness of the assumed divergence. For every admissible $D$,
\begin{equation*}
\begin{split}
 \mathbb D_{\rm KL}(P_{\mathcal S}^R\|\gamma_d)
 -\{\mathbb E_{P_{\mathcal S}}D(R(X))-\mathbb E_{\gamma_d}e^{D(W)-1}\}
 &=\int[r\log r-rD+e^{D-1}]\,d\gamma_d\\
 &\geq\int_{K_U^c}e^{D-1}\,d\gamma_d
 \geq e^{-A_D-1}\gamma_d(K_U^c).
\end{split}
\end{equation*}
Here $r=0$ on $K_U^c$, $0\log0=0$, and the integrand is nonnegative by Fenchel's inequality. Infimizing over $D$ and then taking the supremum over $R$ proves
\begin{equation}\label{eq:kl-discriminator-bias-floor}
 \varepsilon_{\mathcal D,n_R}\geq e^{-A_D-1}\gamma_d(K_U^c)>0.
\end{equation}

Define $Z_{n_R}=\sup_{R\in\mathcal F_{n_R}}|\widehat{\mathcal L}_{n_R}(R)-\mathcal L(R)|$ within this proof. Equations \eqref{eq:ustat-uniform-rate}--\eqref{eq:variational-uniform-rate} imply
\begin{equation*}
 \mathbb EZ_{n_R}\leq C\{\chi_{n_R}(dv_{\mathcal F_{n_R}})
 +\lambda\chi_{n_R}(v_{DF})+\lambda\chi_{n_R}(v_{\mathcal D_{n_R}})\}
 +\lambda\varepsilon_{\mathcal D,n_R}.
\end{equation*}
For $R_0\in\mathcal F_{n_R}$ with $\|R_0-R^*\|_{\infty,K_X}\leq\delta_F+\epsilon$, kernel Lipschitzness and cancellation of the reference term give
\begin{equation*}
\begin{split}
 |\mathcal V[R_0(X),Y]-\mathcal V[R^*(X),Y]|&\leq36(\delta_F+\epsilon),\\
 |D_{n_R}^{\rm pop}(R_0)-D_{n_R}^{\rm pop}(R^*)|&\leq K_D(\delta_F+\epsilon),\\
 \mathcal L(R_0)-\mathcal L(R^*)&\leq
 (36+\lambda K_D)(\delta_F+\epsilon)+\lambda\varepsilon_{\mathcal D,n_R}.
\end{split}
\end{equation*}
Population optimality of $R^*$, empirical optimality of $\hat R$ and the recovery condition give
\begin{equation*}
\begin{split}
 0\leq\mathcal L(\hat R)-\mathcal L(R^*)
 &=\mathcal L(\hat R)-\widehat{\mathcal L}_{n_R}(\hat R)
 +\widehat{\mathcal L}_{n_R}(\hat R)-\widehat{\mathcal L}_{n_R}(R_0)\\
 &\quad+\widehat{\mathcal L}_{n_R}(R_0)-\mathcal L(R_0)
       +\mathcal L(R_0)-\mathcal L(R^*)\\
 &\leq2Z_{n_R}+(36+\lambda K_D)(\delta_F+\epsilon)
               +\lambda\varepsilon_{\mathcal D,n_R},\\
 \mathbb E\|\hat R-R^*\|_{L^1(P_{\mathcal S})}
 &\leq C_{\rm id}\mathbb E[\mathcal L(\hat R)-\mathcal L(R^*)]
 \leq\rho_{n_R}\quad(\epsilon\downarrow0).
\end{split}
\end{equation*}
Thus the bias coefficient in \eqref{eq:representation-rate-budget} is $2\lambda+\lambda=3\lambda$.
Equations \eqref{eq:representation-comparison} and Jensen's inequality give
\begin{equation*}
 \mathbb E b_{\hat R}\leq\tfrac12C_rC_h\rho_{n_R}^\nu,\qquad
 \mathbb E a_{\hat R}\leq\delta_H+C_rC_h\rho_{n_R}^\nu.
\end{equation*}
For the ratio term, apply Lemma \ref{lem:fast-domain-erm} conditional on the representation fold. The localized ERM bound and the ratio conversion imply
\begin{equation*}
\begin{split}
 \mathbb E[q_A\mid\hat R]&\leq C_M\{\delta_A(\hat R)^2+T_{\ddot\Phi_M}(m)\},\\
 \mathbb E\epsilon_{r,A}&\leq C_M\{\sqrt{\mathbb E\delta_A(\hat R)^2}
                                      +\sqrt{T_{\ddot\Phi_M}(m)}\}.
\end{split}
\end{equation*}
These are all the assertions of \eqref{eq:all-terms-expanded}. Substitution into \eqref{eq:latent-oracle-bound} proves \eqref{eq:expanded-oracles}.
\end{proof}

\subsubsection{Uniform Approximation and Explicit Rates}
\begin{lemma}\label{lem:uniform-network-approximation}
For a fixed scalar H\"older ball of index $\beta>0$ on a fixed $q$-dimensional cube, the clipped sieve in Definition \ref{ass:downstream-sieves} satisfies
\begin{equation*}
 \sup_f\inf_{g\in J}\|g-f\|_\infty\leq C\mathsf r_t(q,\beta).
\end{equation*}
The supremum is over targets in that ball with values in the clipping interval. Coordinatewise application gives the same order for a fixed number of outputs.
\end{lemma}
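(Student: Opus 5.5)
The plan is to build the approximant in two steps: first approximate on the unit cube $[0,1]^q$, then transfer the result to the fixed cube by an affine change of variables. After that we clip the output and check that everything fits within the budgets of Definition \ref{ass:downstream-sieves}.

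\emph{Affine transfer.} Write the cube as $K=a+\mathrm{diag}(\ell)[0,1]^q$ and set $f=g\circ A^{-1}$ with $A(x)=a+\mathrm{diag}(\ell)x$. Since $A$ is a fixed affine map, $g$ lies in a H\"older ball on $[0,1]^q$ whose radius is inflated by a constant depending only on the side lengths. This affine map can be absorbed into the first layer of any network without adding depth, width or parameters. Clipping is harmless as well: $\operatorname{clip}_K(t)=\varrho(t+K)-\varrho(t-K)-K$ costs one extra hidden layer of width $2$, which is absorbed into the constants $C_D,C_W$. Because the target takes values in $[-K,K]$ and $\operatorname{clip}_K$ is $1$-Lipschitz, clipping never increases the sup-norm error. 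So it suffices to approximate $g$ on $[0,1]^q$ in sup norm by an unclipped network.

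\emph{Main approximation step.} Here we invoke the known depth--width approximation theorem for H\"older functions with ReLU networks. One version is \cite{lu2021deep}: for any $N,L\in\mathbb N$, a network of width $C_1(N+2)\log_2(8N)$ and depth $C_2(L+2)\log_2(4L)$ achieves sup error $C_3 B\,N^{-2\beta/q}L^{-2\beta/q}$ on $[0,1]^q$, with constants depending only on $q$ and $\beta$. Another version is the Yarotsky/Shen et al.\ bounds cited in the related work \citep{yarotsky2018optimal,shen2019deep}. We take $N=L=\lceil t^{q/(4(2\beta+q))}\rceil$. Then $(NL)^{-2\beta/q}= t^{-\beta/(2\beta+q)}=\mathsf r_t(q,\beta)$. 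The resulting width and depth are $\lesssim t^{q/(4(2\beta+q))}\log(et)$, which matches \eqref{eq:sieve-architecture} once $C_D$ and $C_W$ are large. The parameter count of a fully connected network with those budgets is $\le C(\mathcal D_J+1)\mathcal W_J^2$, which agrees with the stated $\mathcal S_J$.

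\emph{Architecture matching, and the expected obstacle.} I expect the main obstacle to be exact architecture matching rather than the rate. The cited theorems produce networks of a specific depth and width, while $J$ has prescribed $\mathcal D_J$ and $\mathcal W_J$. To handle this, we pad narrower layers with zero weights and extend depth with identity layers implemented by $\varrho(x)-\varrho(-x)$, which doubles the width. The fully connected class then contains the constructed network, and this is consistent with counting all parameter positions in $\mathcal S_J$. We also need the theorem's constants to be uniform over the H\"older ball, which holds because they depend only on $(q,\beta,B)$. This gives $\sup_f\inf_{g\in J}\|g-f\|_\infty\le C\mathsf r_t(q,\beta)$. For a fixed number of outputs, we place the coordinate networks in parallel, which multiplies the width by a constant.
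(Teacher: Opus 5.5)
Your proposal matches the paper's proof essentially step for step. The shared steps are: affine rescaling to $[0,1]^q$; a width--depth ReLU approximation theorem with $N_1=N_2=\lceil t^{q/(4(2\beta+q))}\rceil$, so that $(N_1N_2)^{-2\beta/q}\le\mathsf r_t(q,\beta)$ with width and depth $O(t^{q/(4(2\beta+q))}\log(et))$; the count $\mathcal S_J\le C(\mathcal D_J+1)\mathcal W_J^2$; a fixed-size ReLU clip that cannot increase the error; zero-padding; and coordinatewise parallelization.

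The one thing to fix is the source of the key bound. \cite{lu2021deep} is stated for $C^s$ with integer $s$, so it does not give the rate $(NL)^{-2\beta/q}$ for non-integer $\beta$ or for $\beta<1$. It also does not literally cover integer $\beta$ under the paper's convention that only the order-$(\beta-1)$ derivatives are Lipschitz. The paper instead invokes the H\"older version, Corollary~3.1 of \cite{jiao2023deep}, which has exactly the width $N_1\lceil\log_2(8N_1)\rceil$ and depth $N_2\lceil\log_2(8N_2)\rceil+2q$ form you need. With that substitution your argument goes through unchanged.
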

\begin{proof}
After affine rescaling of the fixed input cube, apply \cite[Corollary 3.1]{jiao2023deep}. With the convention $\alpha=\lceil\beta\rceil-1$ of Definition \ref{def:hc} and H\"older radius $B_0$, for any positive integers $N_1,N_2$ there is a ReLU network $g$ such that
\begin{equation*}
\begin{split}
 \mathcal W&\leq38(\alpha+1)^2 3^q q^{\alpha+1}N_1\lceil\log_2(8N_1)\rceil,\\
 \mathcal D_{\rm net}&\leq21(\alpha+1)^2N_2\lceil\log_2(8N_2)\rceil+2q,\\
 \|g-f\|_{\infty,[0,1]^q}
 &\leq19B_0(\alpha+1)^2q^{\alpha+(\beta\vee1)/2}(N_1N_2)^{-2\beta/q}.
\end{split}
\end{equation*}
This is uniform on the entire cube. In particular, it does not require the input law to be absolutely continuous. Choose
\begin{equation*}
 N_1=N_2=\left\lceil t^{\frac{q}{4(2\beta+q)}}\right\rceil.
\end{equation*}
Since $t\geq2$ and $q,\beta$ are fixed,
\begin{equation*}
\begin{split}
 N_i\lceil\log_2(8N_i)\rceil
 &\leq Ct^{\frac{q}{4(2\beta+q)}}\log(et),\quad i=1,2,\\
 (N_1N_2)^{-2\beta/q}&\leq t^{-\beta/(2\beta+q)}.
\end{split}
\end{equation*}
These bounds give the depth, width and approximation rate in \eqref{eq:sieve-architecture}, for sufficiently large fixed $C_D,C_W$. Count every parameter position, including zero-valued ones, to obtain
\begin{equation*}
 \mathcal S\leq C(\mathcal D_{\rm net}+1)\mathcal W^2
 \leq Ct^{\frac{3q}{4(2\beta+q)}}\log^3(et).
\end{equation*}
Affine input rescaling changes only the fixed constants. If the clipping interval is $[a,b]$, then $\operatorname{clip}_{[a,b]}(z)=a+\varrho(z-a)-\varrho(z-b)$ and
\begin{equation*}
 |\operatorname{clip}_{[a,b]}(g(x))-f(x)|\leq|g(x)-f(x)|
 \quad\text{whenever }f(x)\in[a,b].
\end{equation*}
This adds a fixed number of ReLU layers, which the budgets include. A construction with some zero coefficients belongs to the fully connected class; the class itself imposes no restriction on their number. Parallelizing a fixed number of coordinate networks and clipping coordinatewise proves the vector assertion, with the Euclidean error bounded by the square root of that number times the largest coordinate error. The width constant $C_W$ includes the factor $3^q$. For the admissible representation restriction, retention of this approximation order is the explicit condition in Assumption \ref{ass:explicit-rates}(ii).
\end{proof}

\begin{proof}[Proof of Lemma \ref{cor:complete-rates} and Theorem \ref{thm:representation-recovery}]
For each of the following choices apply \eqref{eq:sieve-architecture}:
\begin{equation*}
 \begin{array}{c|ccc}
 J&q&\beta&t\\\hline
 \mathcal H&d&\beta_{\mathcal H}&n\\
 \ddot\Phi_M&d&\beta_{\ddot\Phi}&m\\
 \mathcal F_{n_R}&p&\beta_{\mathcal F}&n_R.
 \end{array}
\end{equation*}
Direct substitution into \eqref{eq:network-capacity} and the definition of $T_J$ gives
\begin{equation*}
\begin{split}
 \mathcal D_J+1,\mathcal W_J
 &\leq Ct^{\frac{q}{4(2\beta+q)}}\log(et),\qquad
 \mathcal S_J\leq Ct^{\frac{3q}{4(2\beta+q)}}\log^3(et),\\
 v_J&\leq Ct^{q/(2\beta+q)}\log^5(et),\\
 \chi_t(v_J)&\leq Ct^{-\beta/(2\beta+q)}\log^3(et),\\
 T_J(t)&\leq Ct^{-2\beta/(2\beta+q)}\log^6(et).
\end{split}
\end{equation*}
For $D\circ R$, concatenate the two architectures, including representation clipping. Keep the intermediate $d$-dimensional output as a bottleneck: replace $u=R(x)$ by $(\varrho(u),\varrho(-u))$ and the discriminator's input affine map $Au+b$ by $[A,-A](\varrho(u),\varrho(-u))+b$. Since $u=\varrho(u)-\varrho(-u)$, this computes exactly $D\circ R$. Doubling the representation's output rows and the discriminator's input columns adds only a fixed factor to their total parameter counts and one hidden layer. Thus the composed fully connected architecture has total parameter count at most $C(\mathcal S_{\mathcal F_{n_R}}+\mathcal S_{\mathcal D_{n_R}})$ and depth at most $C((\mathcal D_{\mathcal F_{n_R}}+1)+(\mathcal D_{\mathcal D_{n_R}}+1))$. Assumption \ref{ass:explicit-rates}(iii) therefore yields
\begin{equation*}
 v_{DF}+v_{\mathcal D_{n_R}}+dv_{\mathcal F_{n_R}}
 \leq Cn_R^{p/(2\beta_{\mathcal F}+p)}\log^5(en_R).
\end{equation*}
Lemma \ref{lem:uniform-network-approximation} and the retained representation approximant give
\begin{equation*}
 \delta_H\leq C\mathsf r_n(d,\beta_{\mathcal H}),\quad
 \delta_A(\hat R)\leq C\mathsf r_m(d,\beta_{\ddot\Phi})\ \text{a.s.},\quad
 \delta_F\leq C\mathsf r_{n_R}(p,\beta_{\mathcal F}).
\end{equation*}
The bound for $\delta_A(\hat R)$ uses only actual fitted-output regularity, with constants uniform in the representation fold. It follows that
\begin{equation*}
\begin{split}
 \mathbb E\mathfrak C_{n,A}&\leq C\mathsf r_n(d,\beta_{\mathcal H})\log^3(en),\\
 \mathbb E\epsilon_{r,A}&\leq C\mathsf r_m(d,\beta_{\ddot\Phi})\log^3(em),\\
 \rho_{n_R}&\leq C\{\mathsf r_{n_R}(p,\beta_{\mathcal F})\log^3(en_R)
                                     +\lambda\varepsilon_{\mathcal D,n_R}\},\\
 \rho_{n_R}^\nu&\leq C\{\mathsf r_{n_R}(p,\beta_{\mathcal F})^\nu\log^{3\nu}(en_R)
                                     +(\lambda\varepsilon_{\mathcal D,n_R})^\nu\}.
\end{split}
\end{equation*}
The last inequality uses $(x+y)^\nu\leq x^\nu+y^\nu$ for $x,y\geq0$ and $0<\nu\leq1$. Substituting into \eqref{eq:all-terms-expanded} proves every line of \eqref{eq:individual-polynomial-rates}. Substitution into Lemma \ref{lem:latent-oracle} gives
\begin{equation}\label{eq:complete-log-rates}
\begin{split}
 \mathbb E\!\left[\mathcal R_{\mathcal T}(\hat h\circ\hat R)
                 -\mathcal R_{\mathcal T}(h^*\circ R^*)\right]
 \leq C\bigg[&n^{-\frac{\beta_{\mathcal H}}{2\beta_{\mathcal H}+d}}\log^3(en)
 +m^{-\frac{\beta_{\ddot\Phi}}{2\beta_{\ddot\Phi}+d}}\log^3(em)\\
 &+n_R^{-\frac{(\beta_{\mathcal H}\wedge1)\beta_{\mathcal F}}{2\beta_{\mathcal F}+p}}
      \log^{3(\beta_{\mathcal H}\wedge1)}(en_R)
 +(\lambda\varepsilon_{\mathcal D,n_R})^{\beta_{\mathcal H}\wedge1}\bigg].
\end{split}
\end{equation}
Suppressing the displayed logarithmic factors proves \eqref{eq:complete-polynomial-rates}.
\end{proof}

\subsection{Proof of the Sufficient-Representation Theorem and Recovery Sample Allocation}\label{app:sufficiency}
\begin{proof}[Proof of Proposition \ref{prop:learned-sufficiency}]
Fix a realization of the representation-training data, reference sample and training randomness for which source sufficiency holds. Original-space covariate shift, implied by Assumptions \ref{ass2.0}--\ref{ass2.1}, gives the common class probability $\eta$ defined after \eqref{eq2.02}. Source sufficiency implies $\eta(X)=\eta_{\mathcal S}^{\hat R}(\hat R(X))$ $P_{\mathcal S}$-almost surely. Since $P_{\mathcal T}(X)\ll P_{\mathcal S}(X)$, the same identity holds under the target feature law. Therefore
\begin{equation*}
 \eta_{\mathcal T}^{\hat R}(\hat R(X))
 =\mathbb E_{P_{\mathcal T}}[\eta(X)\mid\hat R(X)]
 =\eta_{\mathcal S}^{\hat R}(\hat R(X))
 \quad P_{\mathcal T}\text{-a.s.}
\end{equation*}
Because labels are binary, this also gives target sufficiency and equality of the entire conditional label distributions. Integrating the absolute difference proves $b_{\hat R}=0$. The argument holds for almost every training realization, so the expected mismatch term in Lemma \ref{lem:latent-oracle} is zero.
\end{proof}

\begin{proof}[Proof of Theorem \ref{Th5}]
Condition first on the independent representation-training data, reference sample and training randomness. Proposition \ref{prop:learned-sufficiency} gives $b_{\hat R}=0$ and $h_{\hat R}^*\circ\hat R=h^*\circ R^*$ almost surely under both domains. The bound on $h_{\hat R}^*$ and the clipped predictor class imply the score bounds of Assumption \ref{assum4}. Consequently,
\begin{equation}\label{eq:sufficient-approximation}
\begin{split}
 a_{\hat R}
 &=\inf_{h\in\mathcal H}\{
 \mathcal R_{\mathcal T}(h\circ\hat R)
 -\mathcal R_{\mathcal T}(h_{\hat R}^*\circ\hat R)\}\\
 &\leq\inf_{h\in\mathcal H}\|h-h_{\hat R}^*\|_{\infty,K_U}
 \leq C n^{-\frac{\beta_{\mathcal H}}{2\beta_{\mathcal H}+d}}.
\end{split}
\end{equation}
The first inequality uses the logistic loss's score Lipschitz constant one, and the last follows from the uniform learned-score regularity and the predictor sieve.

The bounded-class estimate \eqref{eq:complexity-expanded}, with the architecture in Definition \ref{ass:downstream-sieves}, gives
\begin{equation*}
 \mathbb E\mathfrak C_{n,A}
 \leq C n^{-\frac{\beta_{\mathcal H}}{2\beta_{\mathcal H}+d}}\log^3(en).
\end{equation*}
For the domain classifier apply \eqref{eq:fast-domain-erm} and \eqref{eq:domain-entropy} with $J=\ddot\Phi_M$ and $\phi_0=\log r_{\hat R}$, conditional on the representation fold. The uniform learned-ratio approximation and architecture bounds yield
\begin{equation*}
 \mathbb E[q_A\mid\hat R]
 \leq C m^{-\frac{2\beta_{\ddot\Phi}}{2\beta_{\ddot\Phi}+d}}\log^6(em).
\end{equation*}
The constant is uniform in that fold, so averaging and the conversion \eqref{eq:ratio-root-conversion} give
\begin{equation*}
 \mathbb E\epsilon_{r,A}
 \leq C m^{-\frac{\beta_{\ddot\Phi}}{2\beta_{\ddot\Phi}+d}}\log^3(em).
\end{equation*}
Substituting these estimates, \eqref{eq:sufficient-approximation} and $b_{\hat R}=0$ into Lemma \ref{lem:latent-oracle} gives
\begin{equation}\label{eq:strict-sufficient-log-rate}
\begin{split}
 \mathbb E\!\left[\mathcal R_{\mathcal T}(\hat h\circ\hat R)
                 -\mathcal R_{\mathcal T}(h^*\circ R^*)\right]
 \leq C\left[
 n^{-\frac{\beta_{\mathcal H}}{2\beta_{\mathcal H}+d}}\log^3(en)
 +m^{-\frac{\beta_{\ddot\Phi}}{2\beta_{\ddot\Phi}+d}}\log^3(em)
 \right],
\end{split}
\end{equation}
Here $C$ depends only on the fixed dimensions, cubes, regularity radii, score bounds and overlap constants, and is uniform in the fold sizes and training realization. This proves \eqref{eq:strict-sufficient-rate}.
\end{proof}

\begin{proof}[Proof of Corollary \ref{cor:recover-latent-rate}]
Use the following abbreviations only within this proof:
\begin{equation*}
 \nu=\beta_{\mathcal H}\wedge1,\qquad
 a=\frac{\beta_{\mathcal H}}{2\beta_{\mathcal H}+d},\qquad
 b=\frac{\beta_{\ddot\Phi}}{2\beta_{\ddot\Phi}+d},\qquad
 c=\frac{\nu\beta_{\mathcal F}}{2\beta_{\mathcal F}+p}.
\end{equation*}
In \eqref{eq:complete-log-rates}, the four summands have orders
\begin{equation*}
 n^{-a}\log^3(en),\quad m^{-b}\log^3(em),\quad
 n_R^{-c}\log^{3\nu}(en_R),\quad(\lambda\varepsilon_{\mathcal D,n_R})^\nu.
\end{equation*}
For $u>0$ and $v\geq0$,
\begin{equation*}
 \frac{d}{dt}\{t^{-u}\log^v(et)\}
 =t^{-u-1}\log^v(et)\left[-u+\frac v{\log(et)}\right]\leq0
 \quad\text{when }\log(et)\geq v/u.
\end{equation*}
Choose fixed $c_m,c_R>0$ such that $m\geq c_mn^{a/b}$ and $n_R\geq c_Rn^{a/c}$. Evaluation at these lower bounds yields, for sufficiently large $n$,
\begin{equation*}
 m^{-b}\log^3(em)\leq C n^{-a}\log^3(en),\qquad
 n_R^{-c}\log^{3\nu}(en_R)\leq C n^{-a}\log^{3\nu}(en).
\end{equation*}
Since $\nu\leq1$, both terms are bounded by $C n^{-a}\log^3(en)$. The bias condition therefore gives
\begin{equation*}
 \mathbb E\!\left[\mathcal R_{\mathcal T}(\hat h\circ\hat R)
                 -\mathcal R_{\mathcal T}(h^*\circ R^*)\right]
 \leq C n^{-\frac{\beta_{\mathcal H}}{2\beta_{\mathcal H}+d}}\log^3(en).
\end{equation*}
This proves \eqref{eq:recovered-latent-rate}; enlarging the constant covers the finitely many smaller sample sizes. If $n_R\asymp n$, $c\geq a$ ensures the representation term has the required order. For $\beta_{\mathcal H}\geq1$, $\nu=1$ and this condition is equivalent to
\begin{equation*}
 \beta_{\mathcal F}(2\beta_{\mathcal H}+d)\geq\beta_{\mathcal H}(2\beta_{\mathcal F}+p)
 \quad\Longleftrightarrow\quad d\geq p\beta_{\mathcal H}/\beta_{\mathcal F}.
\end{equation*}
For $0<\beta_{\mathcal H}<1$, $\nu=\beta_{\mathcal H}$ and
\begin{equation*}
 c\geq a
 \quad\Longleftrightarrow\quad
 \beta_{\mathcal F}(2\beta_{\mathcal H}+d)\geq2\beta_{\mathcal F}+p.
\end{equation*}
For compact-support KL with fixed $\lambda>0$ and $A_D$, \eqref{eq:kl-discriminator-bias-floor} gives
\begin{equation*}
(\lambda\varepsilon_{\mathcal D,n_R})^\nu\geq[\lambda e^{-A_D-1}\gamma_d(K_U^c)]^\nu>0,
\end{equation*}
whereas $n^{-a}\log^3(en)\to0$. The bias condition of the corollary then cannot hold for all sufficiently large $n$.

\end{proof}

The sufficient-representation conditions admit a concrete family with covariate shift in both relevant and nuisance features. Let $X=(U,V)\in[0,1]^d\times[0,1]^{p-d}$, let the two feature laws be $P_k^U\otimes P_k^V$ with smooth densities bounded above and away from zero, and let $P_k(Y=1\mid U,V)=\sigma(h^*(U))$ in both domains. If Stage 1 outputs $\hat R(X)=T(U)$, where $T$ and $T^{-1}$ have uniform smoothness bounds and map the latent cube bijectively to itself, then
\begin{equation}\label{eq:sufficient-family}
 h_{\hat R}^*=h^*\circ T^{-1},\qquad
 r_{\hat R}=\frac{dP_{\mathcal T}^U}{dP_{\mathcal S}^U}\circ T^{-1},\qquad
 r^*(X)=\frac{dP_{\mathcal T}^U}{dP_{\mathcal S}^U}(U)
          \frac{dP_{\mathcal T}^V}{dP_{\mathcal S}^V}(V).
\end{equation}
With sufficiently smooth factors this family satisfies the learned-space regularity conditions, and a nonconstant nuisance ratio gives strict variance reduction in Proposition \ref{prop:weight-variance}. The fitted transformation $T$ may depend on the independent representation fold. This family realizes the sufficient-representation conditions and strict true-weight variance reduction.

\subsection{Proofs of the Comparison Results}\label{app:comparison-proofs}
The comparison bounds reuse the proposed method's ERM decomposition and network estimates, with raw-space change of measure and the appropriate input dimensions.
\begin{lemma}\label{prop:composition-smoothness}
Under Assumption \ref{assum5}, cube coverage $R^*(K_X)\subset K_U$, and a bounded extension of $h^*$ on $K_U$, the score $g^*:=h^*\circ R^*$ has H\"older index $\min\{\beta_{\mathcal H},\beta_{\mathcal F},\beta_{\mathcal H}\beta_{\mathcal F}\}$, with a radius depending only on the fixed dimensions, cubes and original smoothness radii. Thus the choice \eqref{eq:composition-index} satisfies Assumption \ref{ass:comparison-rates}(ii)'s regularity requirement.
\end{lemma}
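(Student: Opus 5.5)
The plan is to prove the composition Hölder statement by splitting into the two regimes $\beta_{\mathcal H}\le1$ or $\beta_{\mathcal F}\le1$ versus both indices above one. Throughout, write $g^*=h^*\circ R^*$ on $K_X$, using the bounded Hölder extension of $h^*$ to $K_U$ from Assumption \ref{assum5} and the coverage $R^*(K_X)\subset K_U$, so that $g^*$ is well defined and bounded by $\|h^*\|_{\infty,K_U}$. After the affine rescalings in Definition \ref{def:hc} we may work on unit cubes; this only changes radii by factors depending on the fixed cubes.

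\emph{Low-order case.} Suppose $\min\{\beta_{\mathcal H},\beta_{\mathcal F}\}\le1$, or more generally that the target index $\beta:=\min\{\beta_{\mathcal H},\beta_{\mathcal F},\beta_{\mathcal H}\beta_{\mathcal F}\}\le1$. Put $\nu_{\mathcal H}=\beta_{\mathcal H}\wedge1$ and $\nu_{\mathcal F}=\beta_{\mathcal F}\wedge1$. Each $R_j^*$ is then $\nu_{\mathcal F}$-Hölder with a radius controlled by $B$: if $\beta_{\mathcal F}>1$, the bounded gradient on a convex cube gives Lipschitz continuity. Hence $\|R^*(x)-R^*(y)\|\le\sqrt d\,C\|x-y\|^{\nu_{\mathcal F}}$, and similarly $h^*$ is $\nu_{\mathcal H}$-Hölder on $K_U$. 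Therefore
$|g^*(x)-g^*(y)|\le C\|x-y\|^{\nu_{\mathcal H}\nu_{\mathcal F}}$, and $\nu_{\mathcal H}\nu_{\mathcal F}\ge\beta$ whenever $\beta\le1$. Checking this inequality in the subcases ($\beta_{\mathcal H}\le1$ only, $\beta_{\mathcal F}\le1$ only, or both) is immediate. Because the cube is bounded, a larger exponent also yields the smaller exponent $\beta$ with a diameter-dependent radius.

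\emph{High-order case.} If $\beta_{\mathcal H},\beta_{\mathcal F}>1$, then $\beta=\min\{\beta_{\mathcal H},\beta_{\mathcal F}\}$. Write $\alpha=\lceil\beta\rceil-1$ and $\tilde r=\beta-\alpha$. I will use the Faà di Bruno formula: every $\partial^s g^*$ with $\|s\|_1\le\alpha$ is a finite sum, with combinatorial coefficients depending only on $p,d,\alpha$, of products $(\partial^{t}h^*)\circ R^*\cdot\prod_k\partial^{s_k}R^*_{j_k}$ with $\|t\|_1\le\|s\|_1$ and $\|s_k\|_1\le\|s\|_1$. All derivative orders appearing are at most $\alpha\le\lceil\beta_{\mathcal H}\rceil-1$ and $\le\lceil\beta_{\mathcal F}\rceil-1$, so each factor exists and is bounded by $B$. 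This gives the sup bounds. For the Hölder seminorm at order $\alpha$, bound the difference of each product telescopically, one factor at a time.
\begin{itemize}
\item A factor $\partial^{s_k}R_j^*$ of order below $\lceil\beta_{\mathcal F}\rceil-1$ is Lipschitz.
\item A factor $\partial^{s_k}R_j^*$ of top order is $(\beta_{\mathcal F}-\lfloor\cdot\rfloor)$-Hölder; in either case it is at least $\tilde r$-Hölder on the bounded cube.
\item A factor $(\partial^t h^*)\circ R^*$ has modulus (Lipschitz or $\tilde r_{\mathcal H}$-Hölder) composed with the Lipschitz map $R^*$, again at least $\tilde r$-Hölder.
\end{itemize}
Summing these bounds gives a radius depending only on $p,d,\alpha,B$ and the cube diameters.

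\emph{Main obstacle.} The step needing the most care is the exponent bookkeeping when $\beta$ is an integer or $\beta_{\mathcal H}\neq\beta_{\mathcal F}$. With the convention $\alpha=\lceil\beta\rceil-1$, integer $\beta$ requires Lipschitz top derivatives. I would handle this by noting that every factor appearing has either one more bounded derivative, hence is Lipschitz, or exactly the matching Hölder exponent. I would also use $\|x-y\|^{a}\le \operatorname{diam}^{a-b}\|x-y\|^{b}$ for $a\ge b$ to downgrade exponents.

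Finally, combining both cases with the bound $\|g^*\|_\infty\le\mathcal B$ shows $g^*$ lies in a Hölder ball of index \eqref{eq:composition-index} with a fixed radius. This is exactly the regularity requirement of Assumption \ref{ass:comparison-rates}(ii).
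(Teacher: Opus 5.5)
Your proposal is correct and follows essentially the same route as the paper. The paper also splits at $\min\{\beta_{\mathcal H},\beta_{\mathcal F}\}\le 1$, where it uses the composed exponent $(\beta_{\mathcal H}\wedge1)(\beta_{\mathcal F}\wedge1)$, and otherwise uses the iterated chain rule with a telescoping product bound and downgrades exponents on the bounded cubes. It additionally handles points where $R^*$ reaches $\partial K_U$ by applying the formulas to $(1-\epsilon)R^*+\epsilon u_0$ with $u_0$ interior and letting $\epsilon\downarrow0$. Also, your top-order exponent for $R^*$ should read $\beta_{\mathcal F}-\lceil\beta_{\mathcal F}\rceil+1$ rather than $\beta_{\mathcal F}-\lfloor\beta_{\mathcal F}\rfloor$; the two differ exactly in the integer case you flag.
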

\begin{proof}
Put $q=\min\{\beta_{\mathcal H},\beta_{\mathcal F}\}$. If $q\leq1$, bounded regularity radii on the fixed cubes imply
\begin{equation*}
\begin{split}
 |h^*(R^*(x))-h^*(R^*(x'))|
 &\leq C\|R^*(x)-R^*(x')\|^{\beta_{\mathcal H}\wedge1}\\
 &\leq C\|x-x'\|^{(\beta_{\mathcal H}\wedge1)(\beta_{\mathcal F}\wedge1)},\\
 (\beta_{\mathcal H}\wedge1)(\beta_{\mathcal F}\wedge1)
 &=\min\{\beta_{\mathcal H},\beta_{\mathcal F},\beta_{\mathcal H}\beta_{\mathcal F}\}.
\end{split}
\end{equation*}
The first inequality uses H\"older continuity when $\beta_{\mathcal H}\leq1$, and bounded first derivatives when $\beta_{\mathcal H}>1$; the inner modulus follows in the same way.

If $q>1$, put $\alpha=\lceil q\rceil-1$ and $r=q-\alpha\in(0,1]$. Decreasing both indices to $q$ on their fixed cubes gives bounded derivatives through order $\alpha$, with order-$\alpha$ derivatives $r$-H\"older. Lower-order derivatives are Lipschitz by integration of the next derivative along line segments. For $1\leq j\leq\alpha$, the iterated chain rule is
\begin{equation*}
 D^j(h^*\circ R^*)(x)[v_1,\ldots,v_j]
 =\sum_{\pi\in\Pi_j}
 D^{|\pi|}h^*(R^*(x))
 \left[D^{|B|}R^*(x)[v_i:i\in B]:B\in\pi\right],
\end{equation*}
where $\Pi_j$ is the finite set of partitions of $\{1,\ldots,j\}$. The formula follows by induction: differentiating a term either adds a singleton block to the outer derivative or adds the new direction to one existing inner block.
Every factor in each component of an order-$\alpha$ derivative is bounded and has an $r$-H\"older modulus in $x$. For the outer factors use Lipschitz continuity of $R^*$; for lower-order derivative factors use $\|x-x'\|\leq C\|x-x'\|^r$ on the fixed cube. For bounded scalar factors,
\begin{equation*}
 \left|\prod_{i=1}^ka_i-\prod_{i=1}^kb_i\right|
 \leq\sum_{i=1}^k|a_i-b_i|
       \prod_{l<i}|a_l|\prod_{l>i}|b_l|.
\end{equation*}
Consequently,
\begin{equation*}
 \max_{j\leq\alpha}\|D^j(h^*\circ R^*)\|_\infty\leq C,\qquad
 \|D^\alpha(h^*\circ R^*)(x)-D^\alpha(h^*\circ R^*)(x')\|
 \leq C\|x-x'\|^r.
\end{equation*}
For boundary values of $R^*$, first replace it by $(1-\epsilon)R^*+\epsilon u_0$, where $u_0$ is an interior point of $K_U$. Apply the formulas in the interior and let $\epsilon\downarrow0$; continuity of all derivatives on the compact cubes preserves the bounds. Hence the composite has index $q$. Since both original indices exceed one, $q$ equals the claimed minimum. This argument also covers integer $q$, for which $r=1$.
\end{proof}

For the remaining statements define $\epsilon_{r,X}:=\|\hat r-r^*\|_{L^1(P_{\mathcal S}(X))}$ and
$a_X:=\inf_{g\in\ddot{\mathcal H}}\mathcal R_{\mathcal T}(g)-\mathcal R_{\mathcal T}(h^*\circ R^*)$.
The empirical raw-weight risk is $\widehat{\mathcal R}_r(f)=n^{-1}\sum_i r(X_i)\ell(f(X_i),Y_i)$. Conditional on the first stages, let
\begin{equation*}
\begin{split}
 \mathfrak C_{n,B}&:=\mathbb E_{D_{\rm pred},\zeta}\!\left[
 \sup_{h\in\mathcal H}\left|\frac1n\sum_i\zeta_i\hat r(X_i)
       \ell(h(\hat R(X_i)),Y_i)\right|\;\middle|\;\mathscr G\right],\\
 \mathfrak C_{n,D}&:=\mathbb E_{D_{\rm pred},\zeta}\!\left[
 \sup_{g\in\ddot{\mathcal H}}\left|\frac1n\sum_i\zeta_i\hat r(X_i)
       \ell(g(X_i),Y_i)\right|\;\middle|\;\mathscr G\right].
\end{split}
\end{equation*}
Here $\mathscr G$ includes the representation-training data for Method \eqref{MeB}, and only weight-training data and randomness for Method \eqref{MeD}.

\begin{lemma}\label{lem:comparison-oracles}
Under original-space covariate shift, with $h^*\circ R^*$ the target Bayes score, prediction scores bounded by $\mathcal B$, fitted weights at most $e^M$, and independent prediction folds, the weighted ERMs satisfy
\begin{equation}\label{eq:comparison-oracles}
\begin{split}
 \mathbb E\mathcal E_{\mathcal T}(\hat h,\hat R)
 &\leq4\mathbb E\mathfrak C_{n,B}
          +2L_{\mathcal B}\mathbb E\epsilon_{r,X}+\mathbb E a_{\hat R},\\
 \mathbb E[\mathcal R_{\mathcal T}(\hat h)-\mathcal R_{\mathcal T}(h^*\circ R^*)]
 &\leq4\mathbb E\mathfrak C_{n,D}
          +2L_{\mathcal B}\mathbb E\epsilon_{r,X}+a_X.
\end{split}
\end{equation}
\end{lemma}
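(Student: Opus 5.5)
The plan is to reuse the decomposition from the proof of Lemma \ref{lem:latent-oracle} almost verbatim, replacing the latent change of measure by the original-space one. The key simplification is that original-space covariate shift gives the exact identity $\mathcal R_{r^*}(f)=\mathcal R_{\mathcal T}(f)$ for every bounded measurable score $f$ on $\mathcal X$. This covers both $f=h\circ\hat R$ for fixed $\hat R$ and $f=g\in\ddot{\mathcal H}$, so no conditional label mismatch term $b$ appears. For any $f$ with $|f|\leq\mathcal B$ and any weight $\hat r$ we have
\begin{equation*}
 |\mathcal R_{\mathcal T}(f)-\mathcal R_{\hat r}(f)|
 =|\mathcal R_{r^*}(f)-\mathcal R_{\hat r}(f)|
 \leq L_{\mathcal B}\,\mathbb E_{P_{\mathcal S}}|r^*(X)-\hat r(X)|
 =L_{\mathcal B}\epsilon_{r,X},
\end{equation*}
because $0\leq\ell\leq L_{\mathcal B}$. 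Taking the supremum over the relevant class gives the pathwise discrepancy bound $\Delta_B,\Delta_D\leq L_{\mathcal B}\epsilon_{r,X}$. This bound needs no independence.

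Next, for Method \eqref{MeB} and any comparator $h\in\mathcal H$, I will write the six-term telescoping expansion \eqref{eq3.1} with $\mathcal R_{\hat r_d,\hat R}$ replaced by $\mathcal R_{\hat r}(\cdot\circ\hat R)$. The empirical-ERM comparison term is nonpositive. Two terms are bounded by $G_n:=\sup_{h}|\mathcal R_{\hat r}(h\circ\hat R)-\widehat{\mathcal R}_{\hat r}(h\circ\hat R)|$, two more by $L_{\mathcal B}\epsilon_{r,X}$, and the last is $\mathcal R_{\mathcal T}(h\circ\hat R)-\mathcal R_{\mathcal T}(h^*\circ R^*)$. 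Infimizing over $h$ turns the last term into $a_{\hat R}$, which gives $\mathcal E_{\mathcal T}\leq 2G_n+2L_{\mathcal B}\epsilon_{r,X}+a_{\hat R}$.

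Method \eqref{MeD} is handled identically, with $g\in\ddot{\mathcal H}$ in place of $h\circ\hat R$. Its approximation term is $a_X$, which is deterministic because $\ddot{\mathcal H}$ is fixed.

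The remaining step is the symmetrization bound $\mathbb E[G_n\mid\mathscr G]\leq2\mathfrak C_{n,B}$, and likewise with $\mathfrak C_{n,D}$. I will condition on $\mathscr G$. For \eqref{MeB}, $\mathscr G$ contains the representation fold and the weight fold; for \eqref{MeD}, it contains only the weight data. Under this conditioning $\hat r$ (and $\hat R$) are fixed, bounded by $e^M$ (and $\mathcal B$-bounded scores respectively), while $D_{\rm pred}$ is still i.i.d. from $P_{\mathcal S}$ because the folds are independent. I then introduce a ghost sample, apply Jensen, insert Rademacher signs, and use the triangle inequality, exactly as in Lemma \ref{lem:latent-oracle}. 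Taking full expectations yields both lines of \eqref{eq:comparison-oracles}.

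The only point I expect to need care is measurability and conditioning: $\hat r$ and $\hat R$ must be $\mathscr G$-measurable, and $D_{\rm pred}$ must be independent of $\mathscr G$. The independent-fold design of Algorithm \ref{alg:al1} guarantees this.
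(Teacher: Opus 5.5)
Your proposal is correct and follows essentially the same route as the paper. Both use the exact raw-weight identity $\mathcal R_{r^*}(f)=\mathcal R_{\mathcal T}(f)$, which gives a discrepancy of at most $L_{\mathcal B}\epsilon_{r,X}$ with no $b$ term, and the ERM comparison chain, written compactly in the paper rather than as your six-term expansion, which yields $2L_{\mathcal B}\epsilon_{r,X}+2G_n$ plus $a_{\hat R}$ or $a_X$. Both then finish with the conditional ghost-sample symmetrization, which bounds $\mathbb E[G_n\mid\mathscr G]$ by $2\mathfrak C_{n,B}$ or $2\mathfrak C_{n,D}$.
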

\begin{proof}
For every bounded ambient score $f$, covariate shift implies
\begin{equation*}
\begin{split}
 \mathcal R_{\mathcal T}(f)
 &=\int\mathbb E_{P_{\mathcal S}}[\ell(f(x),Y)\mid X=x]\,dP_{\mathcal T}(X)(x)
 =\mathcal R_{r^*}(f),\\
 |\mathcal R_{\mathcal T}(f)-\mathcal R_{\hat r}(f)|
 &\leq\mathbb E_{P_{\mathcal S}}[|r^*(X)-\hat r(X)|\ell(f(X),Y)]
 \leq L_{\mathcal B}\epsilon_{r,X}.
\end{split}
\end{equation*}
Within this proof take $\mathcal J=\{h\circ\hat R:h\in\mathcal H\}$ or $\mathcal J=\ddot{\mathcal H}$, and put
$G_n(\mathcal J)=\sup_{f\in\mathcal J}|\mathcal R_{\hat r}(f)-\widehat{\mathcal R}_{\hat r}(f)|$.
For an ERM $\hat f$ and any $f\in\mathcal J$,
\begin{equation*}
\begin{split}
 \mathcal R_{\mathcal T}(\hat f)-\mathcal R_{\mathcal T}(f)
 &\leq2L_{\mathcal B}\epsilon_{r,X}
       +\mathcal R_{\hat r}(\hat f)-\mathcal R_{\hat r}(f)\\
 &\leq2L_{\mathcal B}\epsilon_{r,X}+2G_n(\mathcal J)
       +\widehat{\mathcal R}_{\hat r}(\hat f)-\widehat{\mathcal R}_{\hat r}(f)\\
 &\leq2L_{\mathcal B}\epsilon_{r,X}+2G_n(\mathcal J).
\end{split}
\end{equation*}
Taking an approximation infimum gives
\begin{equation*}
 \mathcal R_{\mathcal T}(\hat f)-\mathcal R_{\mathcal T}(h^*\circ R^*)
 \leq2L_{\mathcal B}\epsilon_{r,X}+2G_n(\mathcal J)
 +\begin{cases}a_{\hat R},&\mathcal J=\mathcal H\circ\hat R,\\a_X,&\mathcal J=\ddot{\mathcal H}.\end{cases}
\end{equation*}
The ghost-sample symmetrization proved after \eqref{eq_errdecomp}, now with raw weights, yields
\begin{equation*}
 \mathbb E[G_n(\mathcal H\circ\hat R)\mid\mathscr G]\leq2\mathfrak C_{n,B},\qquad
 \mathbb E[G_n(\ddot{\mathcal H})\mid\mathscr G]\leq2\mathfrak C_{n,D}.
\end{equation*}
Averaging proves both lines of \eqref{eq:comparison-oracles}.
\end{proof}

\subsubsection{Proof of Proposition \ref{th4}}\label{pfth4}
\begin{proof}
The weighted-cover argument uses only the envelope $e^M$ of the fitted weights, so it applies to raw weights. The predictor architecture in Theorem \ref{Th5} gives
\begin{equation*}
 \mathbb E\mathfrak C_{n,B}\leq C n^{-\beta_{\mathcal H}/(2\beta_{\mathcal H}+d)}\log^3(en),\qquad
 \mathbb E a_{\hat R}\leq C n^{-\beta_{\mathcal H}/(2\beta_{\mathcal H}+d)}.
\end{equation*}
The second inequality is \eqref{eq:sufficient-approximation} for the same fitted representation. For raw domain ERM, use $(q,\beta,t)=(p,\beta_\Phi,m)$ in Lemmas \ref{lem:uniform-network-approximation} and \ref{lem:fast-domain-erm}. Then
\begin{equation*}
\begin{split}
 \inf_{\phi\in\Phi_M}\|\phi-\phi^*\|_\infty
 &\leq C m^{-\beta_\Phi/(2\beta_\Phi+p)},\\
 \mathbb E[\mathcal L_r(\hat\phi)-\mathcal L_r(\phi^*)]
 &\leq C m^{-2\beta_\Phi/(2\beta_\Phi+p)}\log^6(em),\\
 \mathbb E\epsilon_{r,X}
 &\leq\frac{2e^M}{\sqrt{\kappa_M}}
       \sqrt{\mathbb E[\mathcal L_r(\hat\phi)-\mathcal L_r(\phi^*)]}
 \leq C m^{-\beta_\Phi/(2\beta_\Phi+p)}\log^3(em).
\end{split}
\end{equation*}
Here the ratio conversion uses $P_\mu=(P_{\mathcal S}(X)+P_{\mathcal T}(X))/2$, as in the proof of \eqref{eq:ratio-root-conversion}. Substitution into the first line of \eqref{eq:comparison-oracles} yields
\begin{equation*}
 \mathbb E\mathcal E_{\mathcal T}(\hat h,\hat R)
 \leq C\{n^{-\beta_{\mathcal H}/(2\beta_{\mathcal H}+d)}\log^3(en)
          +m^{-\beta_\Phi/(2\beta_\Phi+p)}\log^3(em)\}.
\end{equation*}
This proves \eqref{eq:strict-comparison-representation-rate}.
\end{proof}

\subsubsection{Proof of Proposition \ref{prop:original-weights-recovery}}\label{pf:original-weights-recovery}
\begin{proof}
Within this proof put $\nu=\beta_{\mathcal H}\wedge1$. Use the raw-ratio estimate from the preceding proof and the predictor-cover estimate. The recovery proof in Lemma \ref{thm:expanded-rates} uses only Stage 1, and gives
\begin{equation*}
\begin{split}
 \mathbb E\mathfrak C_{n,B}&\leq C\mathsf r_n(d,\beta_{\mathcal H})\log^3(en),\\
 \mathbb E\epsilon_{r,X}&\leq C\mathsf r_m(p,\beta_\Phi)\log^3(em),\\
 \mathbb E a_{\hat R}&\leq C\{\mathsf r_n(d,\beta_{\mathcal H})
       +\mathsf r_{n_R}(p,\beta_{\mathcal F})^\nu\log^{3\nu}(en_R)
       +(\lambda\varepsilon_{\mathcal D,n_R})^\nu\}.
\end{split}
\end{equation*}
Substitution into \eqref{eq:comparison-oracles} gives
\begin{equation*}
\begin{split}
 \mathbb E\mathcal E_{\mathcal T}(\hat h,\hat R)\leq C\{&
 n^{-\beta_{\mathcal H}/(2\beta_{\mathcal H}+d)}\log^3(en)
 +m^{-\beta_\Phi/(2\beta_\Phi+p)}\log^3(em)\\
 &+n_R^{-\nu\beta_{\mathcal F}/(2\beta_{\mathcal F}+p)}\log^{3\nu}(en_R)
 +(\lambda\varepsilon_{\mathcal D,n_R})^\nu\}.
\end{split}
\end{equation*}
This proves \eqref{eq:comparison-representation-rate}; learned-ratio regularity was not used.
\end{proof}

\subsubsection{Proof of Proposition \ref{contl gen}}\label{pfcnge}
\begin{proof}
Apply the network estimates with $(q,\beta,t)=(p,\beta_{\ddot{\mathcal H}},n)$ to the ambient predictor sieve. Then
\begin{equation*}
\begin{split}
 \mathbb E\mathfrak C_{n,D}
 &\leq C n^{-\beta_{\ddot{\mathcal H}}/(2\beta_{\ddot{\mathcal H}}+p)}\log^3(en),\\
 a_X&\leq\inf_{g\in\ddot{\mathcal H}}\|g-h^*\circ R^*\|_{\infty,K_X}
 \leq C n^{-\beta_{\ddot{\mathcal H}}/(2\beta_{\ddot{\mathcal H}}+p)}.
\end{split}
\end{equation*}
The raw-ratio estimate from Proposition \ref{th4}'s proof remains unchanged. The second line of \eqref{eq:comparison-oracles} therefore gives
\begin{equation*}
 \mathbb E[\mathcal R_{\mathcal T}(\hat h)-\mathcal R_{\mathcal T}(h^*\circ R^*)]
 \leq C\{n^{-\beta_{\ddot{\mathcal H}}/(2\beta_{\ddot{\mathcal H}}+p)}\log^3(en)
       +m^{-\beta_\Phi/(2\beta_\Phi+p)}\log^3(em)\}.
\end{equation*}
This proves \eqref{eq:comparison-ambient-rate}.
\end{proof}

\subsubsection{Proof of Corollary \ref{cor:dimension-advantage}}
\begin{proof}
With the common index $\beta=\beta_{\mathcal H}$ and $m\asymp n$, put $a_d=\beta/(2\beta+d)$ and $a_p=\beta/(2\beta+p)$ within this proof. Since
\begin{equation*}
 a_d-a_p=\frac{\beta(p-d)}{(2\beta+d)(2\beta+p)}>0,\qquad
 n^{-a_d}+m^{-a_d}\asymp n^{-a_d},\qquad
 n^{-a_d}+m^{-a_p}\asymp n^{-a_p},
\end{equation*}
Theorem \ref{Th5} and Propositions \ref{th4}, \ref{contl gen} give exactly the three displayed upper bounds in \eqref{eq:dimension-advantage}.
\end{proof}

\subsubsection{Proof of Proposition \ref{prop:weight-variance}}
\begin{proof}
Fix the fitted representation and predictor, and put $U=\hat R(X)$. Source sufficiency $Y\Vbar X\mid U$ implies, for $j=1,2$,
\begin{equation*}
 \mathbb E_{P_{\mathcal S}}[(r^*(X))^j\mid U,Y]
 =\mathbb E_{P_{\mathcal S}}[(r^*(X))^j\mid U].
\end{equation*}
For $j=1$, \eqref{eq:target-risk-bridge} identifies this conditional mean as $r_{\hat R}(U)$. For $j=2$, subtracting the squared means gives
\begin{equation*}
 \operatorname{Var}_{\mathcal S}(r^*(X)\mid U,Y)
 =\operatorname{Var}_{\mathcal S}(r^*(X)\mid U).
\end{equation*}
The loss is $(U,Y)$-measurable, hence
\begin{equation*}
\begin{split}
 \mathbb E_{P_{\mathcal S}}[r^*(X)\ell(h(U),Y)\mid U,Y]
 &=r_{\hat R}(U)\ell(h(U),Y),\\
 \operatorname{Var}_{\mathcal S}(r^*(X)\ell(h(U),Y)\mid U,Y)
 &=\ell(h(U),Y)^2\operatorname{Var}_{\mathcal S}(r^*(X)\mid U).
\end{split}
\end{equation*}
The two weighted losses have the same mean by total expectation. Total variance gives
\begin{equation*}
\begin{split}
 &\operatorname{Var}_{\mathcal S}[r^*(X)\ell(h(U),Y)]\\
 &=\operatorname{Var}_{\mathcal S}[r_{\hat R}(U)\ell(h(U),Y)]
 +\mathbb E_{P_{\mathcal S}}[\ell(h(U),Y)^2
                         \operatorname{Var}_{\mathcal S}(r^*(X)\mid U)].
\end{split}
\end{equation*}
All second moments are finite by the weight and score bounds. Moreover,
\begin{equation*}
 \mathbb E_{P_{\mathcal S}}[\ell(h(U),Y)^2\operatorname{Var}_{\mathcal S}(r^*(X)\mid U)]
 \geq\log^2(1+e^{-\mathcal B})\,
        \mathbb E_{P_{\mathcal S}}\operatorname{Var}_{\mathcal S}(r^*(X)\mid U).
\end{equation*}
The last expression is positive exactly when the conditional weight variance is positive on a set of positive source probability. This proves both the identity and its strictness assertion.
\end{proof}

\subsection{Persistent Error from an Insufficient Representation}
An explicit fixed-class example illustrates the obstruction even if Assumption \ref{assum3} is retained. Let the source feature law be uniform on $[0,1]^p$, $p\geq2$, and let the target feature density relative to it be
$r^*(x)=1+\varepsilon(2x_1-1)$, $0<\varepsilon<1$.
In both domains set $P(Y=1\mid X=x)=\eta^*(x_1):=\sigma(\tau(x_1-1/2))$, $\tau>0$.
Then $R^*(x)=x_1$ is a shared sufficient smooth representation, $h^*(u)=\tau(u-1/2)$ is bounded and smooth, and $\log r^*$ is smooth with bounded positive overlap. Thus oracle smoothness, bounded overlap and covariate shift hold. Without a sieve approximation condition, choose the fixed representation class $\mathcal F=\{R(x)=x_2\}$. This class is a valid fixed ReLU class, and Stage 1 returns $\hat R=R$ for every $n_R$.
In the source domain $R(X)$ is independent of $Y$, while $R^*(X)$ is not. Both source feature marginals are uniform on $[0,1]$, so their finite KL divergences to the standard Gaussian are identical. Consequently
$\mathcal L(R)-\mathcal L(R^*)=\mathcal V[X_1,Y]>0$ and
$\|R-R^*\|_{L^1(P_{\mathcal S})}=1/3$.
Assumption \ref{assum3} therefore holds with a fixed constant at least $1/(3\mathcal V[X_1,Y])$, but supplies no vanishing recovery rate.
The learned-space feature marginals coincide, so $r_R=1$, whereas
\begin{equation*}
 b_R=\varepsilon\int_0^1(2u-1)\eta^*(u)\,du>0.
\end{equation*}
Let $\mathsf H(t)=-t\log t-(1-t)\log(1-t)$ denote binary entropy and $q_T=\mb E_{P_{\mathcal T}}\eta^*(X_1)$. Since $X_2$ is independent of $(X_1,Y)$ in the target domain, even an unrestricted predictor using $R$ has excess risk
\begin{equation*}
 \mathsf H(q_T)-\mb E_{P_{\mathcal T}}\mathsf H(\eta^*(X_1))>0
\end{equation*}
by strict concavity. Thus $a_R$ is bounded below by a positive constant for every prediction sample size and every predictor sieve. One can estimate $r_R$ perfectly and still retain both representation errors. This example fails the learned-sufficiency condition of Theorem \ref{Th5}. It shows that smooth oracle functions, bounded true weights and the loss-to-coordinate comparison alone do not establish a vanishing representation error. Approximation of the chosen representation, population objective optimality, alignment, and controlled variational estimation must be supplied separately, as in Assumption \ref{ass:explicit-rates}.

\subsection{Proofs of the Prediction-Risk Comparison and the Learnable Model}\label{app:fitted-risk-proofs}

\subsubsection{Proof of Theorem \ref{thm:fitted-risk-comparison}}
The argument uses a smooth finite-dimensional M-estimation expansion \citep[Chapter~5]{van2000asymptotic}, with explicit remainder bounds to obtain convergence of expected excess risk.
\begin{proof}
Within this proof put
\begin{equation*}
\begin{gathered}
 s(X,Y):=(\sigma(h_{\theta_{\mathcal T}}(U))-Y)z(U),\qquad U=R(X),\\
 w_A(X)=r_R(U),\quad w_B(X)=r^*(X),\qquad
 \hat w_A(X)=\hat r_d(U),\quad \hat w_B(X)=\hat r(X),\\
 P_n g=\frac1n\sum_{i=1}^n g(X_i,Y_i).
\end{gathered}
\end{equation*}
For every bounded measurable $g$, source sufficiency and covariate shift imply
\begin{equation}\label{eq:score-change-measure}
\begin{split}
 \mathbb E_{P_{\mathcal S}}[w_A g(U,Y)]
 &=\mathbb E_{P_{\mathcal S}}[w_B g(U,Y)]
 =\mathbb E_{P_{\mathcal T}}g(U,Y),\\
 \mathbb E_{P_{\mathcal S}}[w_A s]
 &=\mathbb E_{P_{\mathcal S}}[w_B s]=0.
\end{split}
\end{equation}
The first equality follows from
$\mathbb E_{P_{\mathcal S}}[r^*(X)\mid U,Y]=r_R(U)$;
the second line is the interior first-order condition at $\theta_{\mathcal T}$.
Since $\sigma'(t)\geq\kappa_{\mathcal B}:=\sigma(\mathcal B)(1-\sigma(\mathcal B))>0$
on $[-\mathcal B,\mathcal B]$,
\begin{equation*}
 \mathbf H_{\mathcal T}\succeq
 \kappa_{\mathcal B}\mathbb E_{P_{\mathcal T}}[z(U)z(U)^\top]\succ0.
\end{equation*}
The same lower bound on the Hessian throughout $\Theta$ gives uniqueness of $\theta_{\mathcal T}$.

Fix $j\in\{A,B\}$ and put $\Delta_n=\hat\theta_j-\theta_{\mathcal T}$.
Let $g_0>0$ be the smallest eigenvalue of
$\mathbb E_{P_{\mathcal S}}[z(U)z(U)^\top]$; positivity follows from
$\mathbb E_{P_{\mathcal T}}zz^\top\preceq C_r\mathbb E_{P_{\mathcal S}}zz^\top$.
For $\|z\|\leq K_z$, entrywise Hoeffding bounds and
$\|\mathbf A\|_{\rm op}\leq k\max_{a,b}|(\mathbf A)_{ab}|$ for
$\mathbf A\in\mathbb R^{k\times k}$ give
\begin{equation}\label{eq:empirical-gram-event}
\begin{gathered}
 G_n:=\{\lambda_{\min}(P_nzz^\top)\geq g_0/2\},\\
 P(G_n^c)\leq2k^2
       \exp\!\left(-\frac{ng_0^2}{8k^2K_z^4}\right).
\end{gathered}
\end{equation}
On $G_n$, the empirical weighted Hessian is at least
$c\kappa_{\mathcal B}g_0 I_k/2$. Convexity of $\Theta$ and the constrained
minimizer's first-order inequality therefore yield
\begin{equation}\label{eq:parameter-score-control}
\begin{split}
 \tfrac12c\kappa_{\mathcal B}g_0\|\Delta_n\|^2
 &\leq
 \{P_n[\hat w_j(\sigma(h_{\hat\theta_j}(U))-Y)z(U)]
      -P_n\hat w_j s\}^\top\Delta_n\\
 &\leq-(P_n\hat w_j s)^\top\Delta_n
 \leq\|P_n\hat w_j s\|\,\|\Delta_n\|.
\end{split}
\end{equation}

For a bounded mean-zero i.i.d. scalar sequence $\xi_i$,
\begin{equation}\label{eq:fourth-moment-average}
 \mathbb E\left(\frac1n\sum_i\xi_i\right)^4
 =\frac{\mathbb E\xi_1^4}{n^3}
  +\frac{3(n-1)}{n^3}(\mathbb E\xi_1^2)^2=O(n^{-2}).
\end{equation}
Applying this coordinatewise conditional on the weights, using their uniform
bounds and fixed $k$, gives
\begin{equation*}
\begin{split}
 \mathbb E\|P_n\hat w_j s
              -\mathbb E_{P_{\mathcal S}}[\hat w_j s\mid\hat w_j]\|^4
     &\leq Cn^{-2},\\
 \|\mathbb E_{P_{\mathcal S}}[\hat w_j s\mid\hat w_j]\|
     &=\|\mathbb E_{P_{\mathcal S}}[(\hat w_j-w_j)s\mid\hat w_j]\|
       \leq C\|\hat w_j-w_j\|_{L^2(P_{\mathcal S}(X))}.
\end{split}
\end{equation*}
The latent $L^2$ norm equals its pullback norm on $X$.
Equation \eqref{eq:weight-fourth-moment}, \eqref{eq:parameter-score-control},
compactness of $\Theta$, and \eqref{eq:empirical-gram-event} imply
\begin{equation}\label{eq:parameter-fourth-moment}
 \mathbb E\|P_n\hat w_j s\|^4=O(n^{-2}),\qquad
 \mathbb E\|\Delta_n\|^4=O(n^{-2}).
\end{equation}
In particular, $\mathbb E\|\Delta_n\|^2=O(n^{-1})$.
Since $\theta_{\mathcal T}\in\operatorname{int}\Theta$,
\begin{equation*}
 P\{\hat\theta_j\notin\operatorname{int}\Theta\}
 \leq \frac{\mathbb E\|\Delta_n\|^4}
            {\operatorname{dist}(\theta_{\mathcal T},\partial\Theta)^4}
       =O(n^{-2}).
\end{equation*}

To identify the leading term, conditional independence and boundedness give
\begin{equation}\label{eq:weight-score-remainder}
\begin{split}
 \mathbb E\|P_n(\hat w_j-w_j)s\|^2
 &\leq C(1+n^{-1})
       \mathbb E\|\hat w_j-w_j\|_{L^2(P_{\mathcal S}(X))}^2\\
 &\leq C(1+n^{-1})
       \left(\mathbb E\|\hat w_j-w_j\|_{L^2(P_{\mathcal S}(X))}^4\right)^{1/2}
       =o(n^{-1}).
\end{split}
\end{equation}
Define the integrated empirical Hessian
\begin{equation*}
 \bar{\mathbf H}_n=
 \int_0^1P_n\!\left[
 \hat w_j\sigma'(h_{\theta_{\mathcal T}+t\Delta_n}(U))zz^\top\right]dt .
\end{equation*}
Boundedness and Lipschitz continuity of $\sigma'$ imply
\begin{equation*}
\begin{split}
 \|\bar{\mathbf H}_n-\mathbf H_{\mathcal T}\|_{\rm op}
 \leq C\|\Delta_n\|
 &+C P_n|\hat w_j-w_j|\\
 &+\|(P_n-P_{\mathcal S})
      [w_j\sigma'(h_{\theta_{\mathcal T}}(U))zz^\top]\|_{\rm op}.
\end{split}
\end{equation*}
The last term converges to zero in probability by the entrywise law of large
numbers. Also
$\mathbb E P_n|\hat w_j-w_j|
 \leq(\mathbb E\|\hat w_j-w_j\|_{L^2(P_{\mathcal S}(X))}^2)^{1/2}\to0$.
Thus $\bar{\mathbf H}_n\to\mathbf H_{\mathcal T}$ in probability.
These matrices are uniformly bounded, so
\begin{equation*}
 \mathbb E\|\bar{\mathbf H}_n-\mathbf H_{\mathcal T}\|_{\rm op}^4=o(1),\qquad
 \mathbb E\|(\bar{\mathbf H}_n-\mathbf H_{\mathcal T})\Delta_n\|^2
 \leq
 \left(\mathbb E\|\bar{\mathbf H}_n-\mathbf H_{\mathcal T}\|_{\rm op}^4
       \mathbb E\|\Delta_n\|^4\right)^{1/2}=o(n^{-1}).
\end{equation*}
When $\hat\theta_j$ is interior, its score equation gives
\begin{equation*}
 \Delta_n+\mathbf H_{\mathcal T}^{-1}P_nw_js
 =-\mathbf H_{\mathcal T}^{-1}
       \{P_n(\hat w_j-w_j)s+
          (\bar{\mathbf H}_n-\mathbf H_{\mathcal T})\Delta_n\}.
\end{equation*}
On the complementary event the left side is uniformly bounded, and that
event has probability $O(n^{-2})$. Hence
\begin{equation}\label{eq:parameter-L2-expansion}
 \mathbb E\|\Delta_n+\mathbf H_{\mathcal T}^{-1}P_nw_js\|^2=o(n^{-1}).
\end{equation}
By \eqref{eq:score-change-measure},
$\mathbb E(P_nw_js)(P_nw_js)^\top=\mathbf V_j/n$.
Cauchy--Schwarz applied to \eqref{eq:parameter-L2-expansion} now gives
\begin{equation*}
 \mathbb E[\Delta_n^\top\mathbf H_{\mathcal T}\Delta_n]
 =\frac1n\operatorname{tr}(\mathbf H_{\mathcal T}^{-1}\mathbf V_j)+o(n^{-1}).
\end{equation*}
The third derivative of $\mathcal R_{\mathcal T}(h_\theta\circ R)$ is
uniformly bounded on $\Theta$, because $z$ and $\sigma''$ are bounded.
Taylor's formula and \eqref{eq:parameter-fourth-moment} yield
\begin{equation*}
\begin{split}
 \left|\mathbb E[
 \mathcal R_{\mathcal T}(h_{\hat\theta_j}\circ R)
 -\mathcal R_{\mathcal T}(h_{\theta_{\mathcal T}}\circ R)]
 -\tfrac12\mathbb E[\Delta_n^\top\mathbf H_{\mathcal T}\Delta_n]\right|
 &\leq C\mathbb E\|\Delta_n\|^3\\
 &\leq C(\mathbb E\|\Delta_n\|^4)^{3/4}
 =O(n^{-3/2}).
\end{split}
\end{equation*}
This proves \eqref{eq:fitted-risk-expansion}.

Finally, sufficiency gives
$\mathbb E[(r^*)^2\mid U,Y]=\mathbb E[(r^*)^2\mid U]$
and $\mathbb E[r^*\mid U]=r_R(U)$. Therefore
\begin{equation*}
\begin{split}
 \mathbf V_B-\mathbf V_A
 &=\mathbb E_{P_{\mathcal S}}[
   \{\mathbb E[(r^*)^2\mid U]-r_R(U)^2\}ss^\top]\\
 &=\mathbb E_{P_{\mathcal S}}[
       \operatorname{Var}_{\mathcal S}(r^*(X)\mid U)ss^\top]\succeq0,\\
 \operatorname{tr}\{\mathbf H_{\mathcal T}^{-1}(\mathbf V_B-\mathbf V_A)\}
 &=\mathbb E_{P_{\mathcal S}}[
   \operatorname{Var}_{\mathcal S}(r^*(X)\mid U)
   (\sigma(h_{\theta_{\mathcal T}}(U))-Y)^2
   z(U)^\top\mathbf H_{\mathcal T}^{-1}z(U)] .
\end{split}
\end{equation*}
For binary $Y$ and finite scores, the squared residual is strictly positive.
Positive definiteness of $\mathbf H_{\mathcal T}$ proves strict positivity
under \eqref{eq:fitted-strict-condition}.
\end{proof}

\subsubsection{Proof of Proposition \ref{prop:learnable-coordinate}}
\begin{proof}
For $j\ne j^*$, $X_j$ is independent of $Y$ under the source law, so
$\mathcal V[X_j,Y]=0$.
On the input cube, $x_j=\varrho(x_j)$ is realized by one hidden unit and
an affine identity output. Definition \ref{def:dnn} gives
$\mathcal W_{\mathcal F}=\max\{p,1,1\}=p$ and
$\mathcal S_{\mathcal F}=1(p+1)+1(1+1)=p+3$.
For $U=X_{j^*}$, write $q=\mathbb E_{P_{\mathcal S}}Y=1/2$ within this proof,
and $\eta^*(u)=1/2+a(u-1/2)$.
For independent copies $(U_i,Y_i)$ and $U,U'$ of the source variables,
\begin{equation*}
\begin{aligned}
 \mathbb E[|U_1-U_2||Y_1-Y_2|]
 &=2\mathbb E[|U-U'|\eta^*(U)]
      -2\mathbb E[|U-U'|\eta^*(U)\eta^*(U')],\\
 \mathbb E[|U_1-U_2||Y_1-Y_3|]
 &=(1-2q)\mathbb E[|U-U'|\eta^*(U)]+q\mathbb E|U-U'|,\\
 \mathbb E|U-U'|\,\mathbb E|Y_1-Y_2|
 &=2q(1-q)\mathbb E|U-U'|.
\end{aligned}
\end{equation*}
The distance-moment identity in Lemma \ref{lem:dcov-kernel} therefore gives
\begin{equation*}
\begin{split}
 \mathcal V[U,Y]
 &=-2\mathbb E[|U-U'|(\eta^*(U)-q)(\eta^*(U')-q)]\\
 &=-2a^2\int_0^1\!\!\int_0^1|u-v|(u-\tfrac12)(v-\tfrac12)\,du\,dv
 =\frac{a^2}{30}=:\Delta>0.
\end{split}
\end{equation*}
To verify the integral, put $g(u)=u-1/2$.
Since $\int_0^1g=0$ and
$|u-v|=\int_0^1|\mathbf1_{\{u>t\}}-\mathbf1_{\{v>t\}}|^2dt$,
\begin{equation*}
 \int_0^1\!\!\int_0^1|u-v|g(u)g(v)\,du\,dv
 =-2\int_0^1\left(\int_0^t g(u)\,du\right)^2dt
 =-\tfrac12\int_0^1t^2(1-t)^2dt=-\frac1{60}.
\end{equation*}

All $X_j$ have the same uniform source law, whose KL divergence to
$\gamma_1$ is finite:
\begin{equation*}
 \mathbb D_{\rm KL}(\operatorname{Unif}[0,1]\|\gamma_1)
 =\int_0^1\{\tfrac12\log(2\pi)+u^2/2\}\,du
 =\tfrac12\log(2\pi)+\tfrac16.
\end{equation*}
Consequently, for $R(x)=x_j$ with $j\ne j^*$,
\begin{equation*}
 \mathcal L(R)-\mathcal L(R^*)=\Delta,\qquad
 \|R-R^*\|_{L^1(P_{\mathcal S})}
 =\int_0^1\!\!\int_0^1|u-v|\,du\,dv=\tfrac13 .
\end{equation*}
For $j=j^*$ both sides vanish. This proves \eqref{eq:coordinate-recovery}
with $C_{\rm id}=1/(3\Delta)=10/a^2$.
The restricted population discriminator objective is also identical for
all candidates. Thus its difference from the full divergence is a common
additive constant, including when the KL bias floor
\eqref{eq:kl-discriminator-bias-floor} is positive.

We next control the empirical selection. Let $P_{n_R,j}$ be the empirical
law of the $j$th coordinate on the representation fold, $F_{n_R,j}$ its
distribution function, and $Q_{n_R}$ the common empirical reference law.
With $\widehat{\mathbb D}_j=
 \sup_D\{P_{n_R,j}D-Q_{n_R}f^*(D)\}$,
\begin{equation}\label{eq:coordinate-penalty-cancellation}
\begin{split}
 |\widehat{\mathbb D}_j-\widehat{\mathbb D}_{j^*}|
 &\leq\sup_D|(P_{n_R,j}-P_{n_R,j^*})D|\\
 &\leq K_D\int_0^1|F_{n_R,j}(u)-F_{n_R,j^*}(u)|\,du\\
 &\leq K_D\{\|F_{n_R,j}-\operatorname{id}\|_\infty
               +\|F_{n_R,j^*}-\operatorname{id}\|_\infty\}.
\end{split}
\end{equation}
Here $\operatorname{id}(u)=u$. The second inequality follows by integration by parts: Lipschitz $D$
is absolutely continuous on $[0,1]$, $|D'|\leq K_D$ almost everywhere,
and the two probability measures have equal mass. The common reference
term cancels before taking any probability bound.

For completeness, an elementary grid bound controls these distribution
functions. For $0<t<1$ and $L=\lceil2/t\rceil$, monotonicity gives
\begin{equation*}
 \|F_{n_R,j}-\operatorname{id}\|_\infty
 \leq\max_{0\leq l\leq L}|F_{n_R,j}(l/L)-l/L|+L^{-1}.
\end{equation*}
Hoeffding's inequality at each grid point, followed by a union bound, yields
\begin{equation}\label{eq:coordinate-cdf-bound}
 P\left\{\max_{j\leq p}\|F_{n_R,j}-\operatorname{id}\|_\infty>t\right\}
 \leq2p(L+1)e^{-n_Rt^2/2}.
\end{equation}

The order-four distance-covariance kernel satisfies $|K|\leq18$
for $X_j\in[0,1]$ and $Y\in\{0,1\}$, by
\eqref{eq:kernel-envelope-lipschitz}. Put $b=\lfloor n_R/4\rfloor$.
The U-statistic is the average, over all permutations, of the averages
of $b$ disjoint kernels. Convexity of the exponential and the bounded
variable moment-generating-function bound imply
\begin{equation*}
 \mathbb E\exp\{s(\widehat{\mathcal V}_{n_R}-\mathcal V)\}
 \leq\exp\{s^2\,36^2/(8b)\},\qquad s\in\mathbb R.
\end{equation*}
Indeed, each kernel has range length at most $36$, and the $b$ kernels
in a fixed permutation are independent. Optimizing the exponential
Markov bound and applying it to both signs gives
\begin{equation}\label{eq:coordinate-ustat-bound}
 P\left\{\max_{j\leq p}|\widehat{\mathcal V}_{n_R}[X_j,Y]
                          -\mathcal V[X_j,Y]|>\Delta/4\right\}
 \leq2p\exp(-b\Delta^2/10368).
\end{equation}
Take
\begin{equation*}
 t=\frac{\Delta}{8(1+\lambda K_D)},\quad L=\lceil2/t\rceil,\quad
 c=\min\{\Delta^2/82944,t^2/2\},\quad C=2(L+2).
\end{equation*}
Outside the two exceptional events in
\eqref{eq:coordinate-cdf-bound}--\eqref{eq:coordinate-ustat-bound},
for every $j\ne j^*$,
\begin{equation*}
\begin{split}
 \widehat{\mathcal L}_{n_R}(x\mapsto x_j)
 -\widehat{\mathcal L}_{n_R}(R^*)
 &\geq \Delta-2(\Delta/4)-2\lambda K_Dt
 \geq\Delta/4>0 .
\end{split}
\end{equation*}
Since $b\geq n_R/8$ for $n_R\geq4$, a union bound proves
$P\{\hat j\ne j^*\}\leq Cp e^{-cn_R}$, for any rule resolving empirical ties.

Finally, $(1-\rho)^p\leq r^*\leq(1+\rho)^p$.
On $\{\hat j=j^*\}$,
\begin{equation*}
 r_{\hat R}(u)=1+\tau_{j^*}(2u-1),\qquad
 h_{\hat R}^*(u)=
 \log\frac{\frac12+a(u-\frac12)}{\frac12-a(u-\frac12)}.
\end{equation*}
These functions have bounded derivatives of every fixed order on $[0,1]$,
uniformly in $|\tau_{j^*}|\leq\rho<1$.
For any fixed H\"older indices the required extensions, radii and score
bounds therefore exist. Conditional on a representation-fold realization
with $\hat j=j^*$, the proof of Theorem \ref{Th5} applies with uniform
constants and $d=1$; the other folds remain independent.
On $\{\hat j\ne j^*\}$, bounded prediction scores give
$0\leq\mathcal E_{\mathcal T}\leq L_{\mathcal B}$.
Splitting expectation over these events gives
\begin{equation*}
 \mathbb E\mathcal E_{\mathcal T}
 \leq\widetilde O\!\left(
 n^{-\frac{\beta_{\mathcal H}}{2\beta_{\mathcal H}+1}}
 +m^{-\frac{\beta_{\ddot\Phi}}{2\beta_{\ddot\Phi}+1}}\right)
 +L_{\mathcal B}P\{\hat j\ne j^*\},
\end{equation*}
which is \eqref{eq:coordinate-neural-rate}.
\end{proof}

\subsubsection{Proof of Proposition \ref{prop:coordinate-comparison}}
\begin{proof}
The target marginal density of $T_j=2X_j-1$ on $[-1,1]$ is
$(1+\tau_jt)/2$, and
\begin{equation*}
 \mathbb E_{P_{\mathcal T}}T_j=\tau_j/3.
\end{equation*}
Projection onto $[-\rho,\rho]$ cannot increase distance to $\tau_j$.
Apply \eqref{eq:fourth-moment-average} to the centered bounded variables
$3T_j-\tau_j$. For a constant independent of $m$,
\begin{equation}\label{eq:coordinate-ratio-moments}
 \mathbb E|\hat\tau_j-\tau_j|^4\leq C m^{-2}.
\end{equation}
Each fitted and true factor is in $[1-\rho,1+\rho]$.
The telescoping product identity gives
\begin{equation*}
\begin{split}
 \|\hat r-r^*\|_\infty
 &\leq(1+\rho)^{p-1}\sum_{j=1}^p|\hat\tau_j-\tau_j|,\\
 \mathbb E\|\hat r-r^*\|_{L^2(P_{\mathcal S})}^4
 &\leq(1+\rho)^{4(p-1)}p^3\sum_{j=1}^p
             \mathbb E|\hat\tau_j-\tau_j|^4
 \leq C m^{-2}.
\end{split}
\end{equation*}
For any coordinate $j$ the latent error satisfies
\begin{equation*}
 \|[1+\hat\tau_j(2u-1)]-[1+\tau_j(2u-1)]\|_{L^2(\operatorname{Unif}[0,1])}^2
 =|\hat\tau_j-\tau_j|^2/3.
\end{equation*}
Thus \eqref{eq:weight-fourth-moment} holds for the oracle coordinate when
$m/n\to\infty$; all weight bounds are fixed for fixed $p,\rho$.
This verifies the weight condition by an explicit estimator.

With $U=X_{j^*}$ and $T=2U-1$,
\begin{equation*}
 \eta^*(U)=\tfrac12+\tfrac a2 T,\qquad
 q_{\mathcal T}=\tfrac12+\tfrac a2\,\mathbb E_{P_{\mathcal T}}T
               =\tfrac12+a\tau_{j^*}/6.
\end{equation*}
Both $1/2$ and $q_{\mathcal T}$ lie strictly inside
$[\sigma(-\mathcal B),\sigma(\mathcal B)]$.
For any positive weights $w_i$ the unique empirical minimizing constant has
\begin{equation}\label{eq:constant-weighted-erm}
 \sigma(\hat h)=
 \Pi_{[\sigma(-\mathcal B),\sigma(\mathcal B)]}
       \left(\frac{\sum_iw_iY_i}{\sum_iw_i}\right),
\end{equation}
as follows by differentiating
$\sum_iw_i\{\log(1+e^t)-Y_it\}$.
Theorem \ref{thm:fitted-risk-comparison} applies with $z(u)=1$,
$\Theta=[-\mathcal B,\mathcal B]$, and
$\mathbf H_{\mathcal T}=q_{\mathcal T}(1-q_{\mathcal T})$.

Under the source law $T$ is uniform on $[-1,1]$. Abbreviate
$\tau=\tau_{j^*}$ within this calculation. Conditional on $T$,
\begin{equation*}
 \mathbb E[(Y-q_{\mathcal T})^2\mid T]
 =\eta^*(U)(1-2q_{\mathcal T})+q_{\mathcal T}^2
 =\tfrac14+\frac{a^2\tau^2}{36}-\frac{a^2\tau}{6}T.
\end{equation*}
Using $\mathbb ET=0$, $\mathbb ET^2=1/3$, and $\mathbb ET^3=0$,
\begin{equation*}
\begin{split}
 \mathbf V_A
 &=\mathbb E\left[(1+\tau T)^2
       \left(\tfrac14+\frac{a^2\tau^2}{36}-\frac{a^2\tau}{6}T\right)\right]\\
 &=\left(\tfrac14+\frac{a^2\tau^2}{36}\right)(1+\tau^2/3)
       -a^2\tau^2/9\\
 &=\tfrac14+\frac{(1-a^2)\tau^2}{12}+\frac{a^2\tau^4}{108}.
\end{split}
\end{equation*}
The source nuisance coordinates are mutually independent and independent of
$(U,Y)$, so
\begin{equation*}
 \mathbf V_B
 =\mathbf V_A\prod_{j\ne j^*}
       \mathbb E_{P_{\mathcal S}}\bigl[(1+\tau_j(2X_j-1))^2\bigr]
 =\mathbf V_A\prod_{j\ne j^*}(1+\tau_j^2/3).
\end{equation*}
Since $q_{\mathcal T}(1-q_{\mathcal T})=1/4-a^2\tau^2/36$,
equation \eqref{eq:fitted-risk-expansion} gives the displayed constants
when the oracle coordinate is used.

Let $\tilde h_A$ minimize the constant-score empirical risk with weights
$1+\hat\tau_{j^*}(2X_{ij^*}-1)$ on the same domain and prediction folds.
Equation \eqref{eq:constant-weighted-erm} gives
$\hat h_A=\tilde h_A$ on $\{\hat j=j^*\}$.
Their risks lie in $[0,L_{\mathcal B}]$, hence
\begin{equation*}
 \left|\mathbb E\mathcal R_{\mathcal T}(\hat h_A\circ\hat R)
      -\mathbb E\mathcal R_{\mathcal T}(\tilde h_A\circ R^*)\right|
 \leq L_{\mathcal B}Cp e^{-cn_R}=o(n^{-1}).
\end{equation*}
Raw-weight and unit-weight constant estimators do not depend on the selected
coordinate. This proves the first two lines of
\eqref{eq:coordinate-risk-comparison} for the learned representation.

For unit weights, $n^{-1}\sum_iY_i\to1/2$ in probability, so
\eqref{eq:constant-weighted-erm} gives $\hat h_C\to0$.
Bounded scores imply convergence of expected target risks.
For any constant score $h(u)=\log(q/(1-q))$, $q\in(0,1)$,
\begin{equation*}
\begin{split}
 \mathcal R_{\mathcal T}(h\circ R^*)
 &=-q_{\mathcal T}\log q-(1-q_{\mathcal T})\log(1-q),\\
 \mathcal R_{\mathcal T}(0)
 -\mathcal R_{\mathcal T}(h_{\mathcal H,\mathcal T}^*\circ R^*)
 &=q_{\mathcal T}\log(2q_{\mathcal T})
   +(1-q_{\mathcal T})\log(2(1-q_{\mathcal T})).
\end{split}
\end{equation*}
This equals the asserted Bernoulli KL divergence and is positive when
$\tau_{j^*}\ne0$. The product in the raw-weight constant exceeds one
exactly when some $\tau_j\ne0$ for $j\ne j^*$.
For all three estimators, changing the benchmark from the best constant to
the Bayes score adds the same deterministic approximation error
\begin{equation*}
 \mathcal R_{\mathcal T}(h_{\mathcal H,\mathcal T}^*\circ R^*)
 -\mathcal R_{\mathcal T}(h^*\circ R^*).
\end{equation*}
\end{proof}

\subsection{Preliminary Lemmas}\label{sec:prl}
\begin{definition}[Rademacher complexity]
Let $\{\zeta_i\}_{i =1}^n$ be $n$ i.i.d.  Rademacher random variables, each uniformly taking  values $-1$ or $1$ with probability $1/2$. Let $\{X_i\}_{i =1}^n$ be $n$  i.i.d. random variables on $\mathcal X$, 
and let $\mathcal H$ be a set of measurable functions defined  on $\mathcal X$.  
Then, the empirical Rademacher complexity of $\mathcal H$ is defined as
$$
\hat{\mf{R}}_n(\mathcal{H})
= \mathbb{E}_{\zeta}\left[\sup_{h \in \mathcal{H}} \frac{1}{n}\sum_{i = 1}^n \zeta_i  h(X_i)  \right].
$$    
And the Rademacher complexity of $\mc{H}$ is the expectation of $\hat{\mf{R}}_{n}(\mc{H})$, that is, 
\begin{equation*}
   \mf{R}_n(\mathcal{H}) = \mb{E}_{X}[\hat{\mf{R}}_n(\mathcal{H})].
\end{equation*}
\end{definition}

\begin{definition}[$\epsilon$-covering and Covering number]\label{epsilon cover}
Let $(\mathcal{H}, \bar{d})$ be a metric space, where the metric $\bar{d}: \mathcal{H} \times \mathcal{H} \to \mathbb{R}^+$. For any $\epsilon > 0 $, 
 we define $\{h_i \in \mathcal{H}, i = 1, \dots, N\}$ as an $\epsilon$-cover of $\mathcal{H}$,
  if for each $h \in \mathcal{H}$, there exists a $h_i$ such that $\bar{d}(h, h_i) \leq \epsilon.$
The $\epsilon$-covering number of $\mathcal{H}$ is the minimal cardinality of an $\epsilon$-cover:
    \begin{equation*}
        \mathcal{N}(\epsilon, \bar{d}, \mathcal{H}) = \min \{|\mathcal{C}|: \text{ $\mathcal{C}$ is an $\epsilon$-cover of $\mathcal{H}$}\}.
    \end{equation*}   
\end{definition}

\begin{definition}[Uniform covering number]
 Given $x = (x_1,\dots,x_n) \in \mathcal{X}^n$, denote $\mathcal{H}_{x} = \{(h(x_1), \dots,h(x_n)) : h \in \mathcal{H} \}$ as a subset of $\mathbb{R}^n$.
Let $\mathcal{N}(\delta,\|\cdot\|_{\infty},\mathcal{H}_{x})$ be the covering number of $\mc{H}_{x}$
with radius $\delta>0$ and metric $\|\cdot\|_{\infty}$.
Then, the uniform covering number $\mathcal{N}_{n}(\delta,\|\cdot\|_{\infty},\mathcal{H})$ is defined as
    \begin{equation*}
       \mathcal{N}_{n}(\delta,\|\cdot\|_{\infty},\mathcal{H}):= \sup\{{\mathcal{N}(\delta,\|\cdot\|_{\infty},\mathcal{H}_{x})} :x \in \mathcal{X}^n\}.
    \end{equation*}   
\end{definition}

\begin{definition}[Vapnik-Chervonenkis (VC)-dimension]
    Let $\mc{H}$ be a function class that maps $\mc{X}$ to $\{\pm 1\}$. For any non-negative integer $n$, the growth function of $\mc{H}$ is defined as 
    \begin{equation*}
        \Pi_{\mc{H}}(n) = \max_{\{x_i\}_{i =1}^n \subseteq \mc{X}} |\{(h(x_1), \dots, h(x_n)) : h \in \mc{H}\}|.
    \end{equation*}
    The set $A = \{x_i\}_{i =1}^n$ is said to be shattered by $\mc H$  if $|\{(h(x_1), \dots, h(x_n)) : h \in \mc{H}\}| = 2^n$. 
    The VC-dimension of $\mc{H}$, denoted by $VCdim(\mc H)$, 
    is the supremum of the cardinalities of sets shattered by $\mc{H}$, that is,
    $VCdim(\mc H) = \sup\{n : \Pi_{\mc{H}}(n) = 2^n\}$, with value $+\infty$ when these cardinalities are unbounded.
\end{definition}

\begin{definition}[Pseudo dimension]
Let $\mc{H}$ be a set of real valued functions from input space $\mc{X}$. The set $A = \{x_i\}_{i =1}^n$ is pseudo-shattered by $\mc{H}$ if there exists a fixed vector $w\in\mathbb R^n$ such that, for every $b\in\{\pm1\}^n$, some $h\in\mc H$ satisfies $b_i(h(x_i)-w_i)>0$ for all $i$. The pseudo dimension is
    \begin{equation*}
        Pdim(\mc H):=\sup\{|A|: A\text{ is pseudo-shattered by }\mc H\}.
    \end{equation*}
Here, $|\cdot|$ is the cardinality of a set.    
\end{definition}

\begin{lemma}\label{lemE10-01}
Let $\Phi$ be a real-valued function class and let $\phi^*$ be a fixed function on $\mathcal X$. On any fixed sample $X_1,\ldots,X_n$, the ordinary empirical Rademacher complexity satisfies
\begin{equation*}
 \hat{\mathfrak R}_n(\Phi)=\hat{\mathfrak R}_n(\Phi-\phi^*),\qquad
 \Phi-\phi^*:=\{\phi-\phi^*:\phi\in\Phi\}.
\end{equation*}
\end{lemma}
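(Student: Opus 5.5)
The identity follows from unfolding the definition of the empirical Rademacher complexity and using that the Rademacher signs are centered. On the fixed sample $X_1,\ldots,X_n$, every element of $\Phi-\phi^*$ has the form $\phi-\phi^*$ with $\phi\in\Phi$. Hence, for each realization of the signs $\zeta=(\zeta_1,\ldots,\zeta_n)$,
\begin{equation*}
 \sup_{\phi\in\Phi}\frac1n\sum_{i=1}^n\zeta_i\bigl(\phi(X_i)-\phi^*(X_i)\bigr)
 =\sup_{\phi\in\Phi}\frac1n\sum_{i=1}^n\zeta_i\phi(X_i)
 -\frac1n\sum_{i=1}^n\zeta_i\phi^*(X_i).
\end{equation*}
The subtracted term does not depend on $\phi$, so it comes out of the supremum. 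This holds pointwise in $\zeta$, including when both suprema equal $+\infty$.

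Next I take $\mathbb E_\zeta$ of both sides. Since $\phi^*$ is fixed and the sample is fixed, the extracted term is a finite linear combination of the $\zeta_i$ with deterministic coefficients $\phi^*(X_i)$. Each $\zeta_i$ has mean zero, so
\begin{equation*}
 \mathbb E_\zeta\frac1n\sum_{i=1}^n\zeta_i\phi^*(X_i)=0 .
\end{equation*}
Linearity of expectation then gives $\hat{\mathfrak R}_n(\Phi-\phi^*)=\hat{\mathfrak R}_n(\Phi)$.

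The only step that needs care is justifying linearity when the supremum term is not integrable. The expectation over $\zeta$ is a finite average over $2^n$ sign patterns, and the extracted term is finite for every pattern because $\phi^*(X_i)$ are real numbers. So the identity holds in the extended reals: if one side is $+\infty$, so is the other.

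The argument relies on the unsigned definition stated for $\hat{\mathfrak R}_n$, without an absolute value. With an absolute value inside the supremum, as in $\mathfrak C_{n,A}$, the identity can fail, and one would get only the triangle-inequality bound with the extra term $\mathbb E_\zeta|n^{-1}\sum_i\zeta_i\phi^*(X_i)|$. The lemma as stated uses the ordinary complexity, so no such term arises.
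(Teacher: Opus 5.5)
Your proposal is correct and follows the paper's proof: pull the $\phi$-independent term $\frac1n\sum_i\zeta_i\phi^*(X_i)$ out of the supremum and use $\mathbb E_\zeta\zeta_i=0$, with the same caveat that the identity is for the unsigned complexity and can fail with an absolute value inside the supremum. Your observation that the $\zeta$-expectation is a finite average over sign patterns, so the identity also holds in the extended reals, is a valid detail that the paper leaves implicit.
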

\begin{proof}
The shift is independent of the choice of $\phi$, and each Rademacher sign has mean zero. Thus
\begin{equation*}
\begin{split}
 \hat{\mathfrak R}_n(\Phi-\phi^*)
 &=\mathbb E_\zeta\!\left[\sup_{\phi\in\Phi}\frac1n\sum_{i=1}^n\zeta_i\phi(X_i)
                 -\frac1n\sum_{i=1}^n\zeta_i\phi^*(X_i)\right]\\
 &=\hat{\mathfrak R}_n(\Phi)
   -\frac1n\sum_{i=1}^n\phi^*(X_i)\mathbb E_\zeta\zeta_i
 =\hat{\mathfrak R}_n(\Phi).
\end{split}
\end{equation*}
This identity concerns the non-absolute complexity defined in Appendix \ref{sec:prl}; it need not hold with an absolute value inside the supremum.
\end{proof}

\begin{lemma}\label{lem2.1}
For a balanced source--target domain mixture with source label $Z=-1$ and target label $Z=1$, use $r^*$ from Section \ref{problem setting}, $\phi^*=\log r^*$, and $|\phi|,|\phi^*|\leq M$. For $\mathcal L_r(\phi)=\mb E\log(1+e^{-Z\phi(X)})$,
\begin{equation}\label{eq2.4}
 \frac{\kappa_M}{2}\|\phi-\phi^*\|_{L^2(P_\mu)}^2
 \leq\mathcal L_r(\phi)-\mathcal L_r(\phi^*)
 \leq\frac18\|\phi-\phi^*\|_{L^2(P_\mu)}^2.
\end{equation}
The same result applies conditional on a fixed learned representation, replacing the distributions and ratio by their pushforwards.
\end{lemma}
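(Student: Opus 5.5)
The plan is a pointwise (conditional) computation followed by integration over $P_\mu$. Under the balanced mixture, the conditional domain-label law is $P(Z=1\mid X=x)=\pi(x):=r^*(x)/(1+r^*(x))$. For a real score $t$, the conditional logistic risk is
\begin{equation*}
 g_x(t):=\mathbb E[\log(1+e^{-Zt})\mid X=x]=\log(1+e^{t})-\pi(x)t .
\end{equation*}
This is verified by writing $\log(1+e^{-t})=\log(1+e^t)-t$ for $Z=1$. Hence $\mathcal L_r(\phi)=\int g_x(\phi(x))\,dP_\mu(x)$, and the target is a two-sided bound on $g_x(\phi(x))-g_x(\phi^*(x))$ in terms of $(\phi(x)-\phi^*(x))^2$.

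First I check the first-order condition at $t_0:=\phi^*(x)=\log r^*(x)$. We have $g_x'(t)=\sigma(t)-\pi(x)$, and $\sigma(\log r^*)=r^*/(1+r^*)=\pi(x)$, so $g_x'(t_0)=0$. Next, $g_x''(t)=\sigma'(t)=\sigma(t)(1-\sigma(t))$. This is at most $1/4$ for all $t$. On $[-M,M]$ it is at least $\kappa_M=e^M/(1+e^M)^2$, because $\sigma'$ is even and decreasing in $|t|$.

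Since $|\phi(x)|,|\phi^*(x)|\le M$, the segment between them lies in $[-M,M]$. Taylor's theorem with the integral (or Lagrange) remainder then gives, for some $\xi$ on that segment,
\begin{equation*}
 g_x(\phi(x))-g_x(\phi^*(x))=\tfrac12\sigma'(\xi)\,(\phi(x)-\phi^*(x))^2 ,
\end{equation*}
which lies between $\frac{\kappa_M}{2}(\phi-\phi^*)^2$ and $\frac18(\phi-\phi^*)^2$. Integrating against $P_\mu$ yields \eqref{eq2.4}.

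Measurability and integrability are immediate from the boundedness of the scores. The upper bound does not even need $|\phi|\le M$.

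The only point needing care is the conditional-law identity, namely that the balanced construction \eqref{ratio_data} gives $dP(Z=1, X\in\cdot)=\tfrac12 dP_{\mathcal T}(X)$ and $dP_\mu=\tfrac12(dP_{\mathcal S}+dP_{\mathcal T})$. It follows that
\begin{equation*}
 \pi=\frac{dP_{\mathcal T}}{dP_{\mathcal S}+dP_{\mathcal T}}=\frac{r^*}{1+r^*}\quad P_\mu\text{-a.e.},
\end{equation*}
using $P_{\mathcal T}\ll P_{\mathcal S}$ from Assumption~\ref{assum1}.

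The pushforward version is identical. I replace $X$ by $R(X)$ for a fixed $R$ (independent of the domain fold), $P_\mu$ by $P_\mu^R$, and $r^*$ by $r_R$, whose bounds come from \eqref{eq:target-risk-bridge}.
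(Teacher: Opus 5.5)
Your proposal is correct and follows the paper's proof essentially step for step. You condition on $X$ to get the loss $\log(1+e^{t})-\pi(x)t$ with $\pi=r^*/(1+r^*)$, use that its derivative vanishes at $\phi^*(x)$, apply the curvature bounds $\kappa_M\le\sigma'\le 1/4$ on $[-M,M]$ in a second-order Taylor expansion, and integrate against $P_\mu$. Your explicit derivation of the conditional domain-label law from the balanced construction is a small detail the paper leaves implicit.
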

\begin{proof}
Given $X=x$, the target domain probability is $\pi(x)=r^*(x)/(1+r^*(x))$. The conditional domain loss is
$F_x(t)=\log(1+e^t)-\pi(x)t$, with $F_x'(\phi^*(x))=0$ and
$\kappa_M\leq F_x''(t)=\sigma(t)(1-\sigma(t))\leq1/4$ for $|t|\leq M$. Taylor's theorem, including its factor $1/2$, gives the two bounds pointwise for the conditional loss. Integrating with respect to $P_\mu$ proves \eqref{eq2.4}. The cancellation of the linear term occurs after conditioning on $X$, not for a single realized domain label.
\end{proof}

\begin{lemma}\label{lem2.2}
If $|\hat\phi|,|\phi^*|\leq M$, $\hat r=e^{\hat\phi}$, and $r^*=e^{\phi^*}$, then
\begin{equation*}
 \|\hat r-r^*\|_{L^2(P_\mu)}\leq e^M\|\hat\phi-\phi^*\|_{L^2(P_\mu)}.
\end{equation*}
\end{lemma}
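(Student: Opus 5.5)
The plan is to argue pointwise and then integrate. Fix $x$, write $a=\hat\phi(x)$ and $b=\phi^*(x)$, and note that both lie in $[-M,M]$. The map $t\mapsto e^t$ has derivative $e^t\leq e^M$ on this interval. The mean value theorem then gives
\begin{equation*}
 |\hat r(x)-r^*(x)|=|e^{a}-e^{b}|=e^{\xi}|a-b|\leq e^M|\hat\phi(x)-\phi^*(x)|
\end{equation*}
for some $\xi$ between $a$ and $b$. The point $\xi$ stays in $[-M,M]$ because that interval is convex. Since the bound holds at every $x$, squaring and integrating against $P_\mu$ gives $\|\hat r-r^*\|_{L^2(P_\mu)}^2\leq e^{2M}\|\hat\phi-\phi^*\|_{L^2(P_\mu)}^2$. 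Taking square roots proves the claim.

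The only care needed is that the hypothesis $|\hat\phi|,|\phi^*|\leq M$ should hold wherever the functions are evaluated, at least $P_\mu$-almost everywhere. For $\hat\phi$ this is guaranteed by the clipping defining $\Phi_M$. For $\phi^*$ it follows from $M\geq\max\{|\log c_r|,|\log C_r|\}$ in Assumption \ref{assum1}. Exceptional null sets do not affect the $L^2(P_\mu)$ norms.

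The same argument applies verbatim to the learned-space pair $(\hat\phi_d,\log r_{\hat R})$ under $P_\mu^{\hat R}$, using \eqref{eq:target-risk-bridge} for the bound on $\log r_{\hat R}$. This is the form used in the proof of Lemma \ref{lem:latent-oracle}.

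I expect no real obstacle; the step most worth stating explicitly is the uniform derivative bound on $[-M,M]$.
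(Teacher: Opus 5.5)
Your proof is correct and is essentially the paper's argument: the exponential is $e^M$-Lipschitz on $[-M,M]$, so you square the pointwise bound and integrate against $P_\mu$, and the learned-space version follows by replacing $(r^*,\phi^*,P_\mu)$ with $(r_{\hat R},\log r_{\hat R},P_\mu^{\hat R})$. One small aside: $|\phi^*|\leq M$ is already a hypothesis of the lemma, so it needs no external justification, and in any case the condition $M\geq\max\{|\log c_r|,|\log C_r|\}$ is imposed in Lemma \ref{lem:latent-oracle} and Assumption \ref{ass:learned-sufficient}, not in Assumption \ref{assum1}.
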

\begin{proof}
The exponential function is $e^M$-Lipschitz on $[-M,M]$. Square the pointwise inequality and integrate. For learned features, replace $r^*,\phi^*,P_\mu$ by $r_{\hat R},\log r_{\hat R},P_\mu^{\hat R}$, respectively. The fitted-score bound is explicit; boundedness of the true ratio alone does not imply it.
\end{proof}

 \subsection{Auxiliary Results}\label{Au_results}
We use the squared distance-covariance functional and unbiased estimator of \cite{szekely2007measuring,huo2016fast}, and the variational divergence penalty used in \cite{huang2024deep}. The proofs below verify the kernel normalization and variational identity used in our analysis.

\noindent
\textit{Sufficient and Disentanglement Representation.}
Sufficiency means $Y\Vbar X\mid R^*(X)$. Component independence is separate; matching a standard Gaussian reference encourages it but does not by itself preserve label information. Use the source feature pushforward $P_{\mc S}^R$ and Gaussian reference $\gamma_d$ defined in Section \ref{problem setting}. The population representation objective motivating \eqref{eq:rep-training} is
\begin{equation}\label{eq2.7}
 \mc L(R):=-\mc V[R(X),Y]+\lambda\mb D_f(P_{\mc S}^R\|\gamma_d),\qquad\lambda\geq0.
\end{equation}

The dependence reward is evaluated in the source domain. We use squared distance covariance under finite first moments; it is nonnegative and vanishes exactly at independence \citep{szekely2007measuring}. It does not satisfy a data-processing inequality under every measurable transformation: for $c>0$, $\mc V[cX,Y]=c\mc V[X,Y]$, so scaling alone can increase it. Hence dependence maximization alone does not characterize sufficiency; the sufficiency and recovery conditions are stated separately in Assumptions \ref{ass:learned-sufficient} and \ref{ass:explicit-rates}.
For a normalized strictly convex $f$, zero divergence means equality of the feature law and the Gaussian reference, which is stronger than component independence. The reference penalty encourages this match in the source domain; it neither assumes exact Gaussianity nor forces the two domains to have the same latent marginal. The general finite-sample bound of Lemma \ref{lem:latent-oracle} requires no Gaussian reference or global coordinate recovery. We adopt the representation-training strategy of \cite{huang2024deep}, while retaining its resulting representation quality explicitly in $a_{\hat R}$ and $b_{\hat R}$.

\noindent
\textit{Distance Covariance \citep{szekely2007measuring}.} 
Let $U\in\mathbb R^d$ and $Y\in\mathbb R^q$, where $q$ is the response-embedding dimension in this paragraph ($q=1$ for binary labels), and assume $\mb E\|U\|+\mb E\|Y\|<\infty$. Write $\chi_U$, $\chi_Y$, and $\chi_{U,Y}$ for their characteristic functions. With $\mathrm i^2=-1$, for example $\chi_{U,Y}(t,s)=\mb E\exp(\mathrm i(t^\top U+s^\top Y))$. Here $\mc V$ denotes squared distance covariance:
\begin{equation*}
 \mc V[U,Y]=\int_{\mathbb R^d\times\mathbb R^q}
 \frac{|\chi_{U,Y}(t,s)-\chi_U(t)\chi_Y(s)|^2}
 {c_dc_q\|t\|^{d+1}\|s\|^{q+1}}\,dt\,ds,
 \qquad c_j:=\frac{\pi^{(j+1)/2}}{\Gamma((j+1)/2)}.
\end{equation*}
Here $\Gamma$ is the Euler gamma function. For the unbiased estimator below, also assume $\mb E[\|U\|\|Y\|]<\infty$, which is automatic for bounded binary labels under the preceding first-moment condition. For $n\geq4$ i.i.d. observations, the order-four U-statistic is \citep{huo2016fast}
\begin{equation*}
 \widehat{\mc V}_n=\binom n4^{-1}\sum_{1\leq i_1<i_2<i_3<i_4\leq n}
 K((U_{i_1},Y_{i_1}),\ldots,(U_{i_4},Y_{i_4})).
\end{equation*}
For four observations set $a_{ij}=\|u_i-u_j\|$, $b_{ij}=\|y_i-y_j\|$, $a_{ii}=b_{ii}=0$. Then
\begin{equation*}
 K=\frac14\sum_{i\ne j}a_{ij}b_{ij}
   -\frac14\sum_{i=1}^4\Big(\sum_{j\ne i}a_{ij}\Big)\Big(\sum_{j\ne i}b_{ij}\Big)
   +\frac1{24}\Big(\sum_{i\ne j}a_{ij}\Big)\Big(\sum_{i\ne j}b_{ij}\Big),
\end{equation*}
where all pair sums run over ordered pairs in $\{1,2,3,4\}$. The estimator can be negative in a finite sample even though the population functional is nonnegative. For binary scalar labels, $|y_i-y_j|$ is 0 or 1. If labels are embedded as one-hot vectors, the corresponding Euclidean distance is 0 or $\sqrt2$; the same chosen encoding is used throughout representation training.
\begin{lemma}\label{lem:dcov-kernel}
Under the stated moment conditions, $\mathbb EK=\mathcal V[U,Y]$, and hence $\mathbb E\widehat{\mathcal V}_n=\mathcal V[U,Y]$. Also, for $c>0$,
\begin{equation*}
 \mathcal V[cU,Y]=c\mathcal V[U,Y],\qquad
 \mathcal V[U,Y]=0\ \Longleftrightarrow\ U\Vbar Y.
\end{equation*}
\end{lemma}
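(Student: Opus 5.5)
The plan is to reduce all three assertions to the classical distance-moment representation of squared distance covariance. I would first establish
\begin{equation*}
 \mathcal V[U,Y]=\mathbb E[a_{12}b_{12}]+\mathbb E[a_{12}]\,\mathbb E[b_{34}]-2\,\mathbb E[a_{12}b_{13}],
\end{equation*}
where $(U_i,Y_i)$, $i=1,\ldots,4$, are i.i.d.\ copies, $a_{ij}=\|U_i-U_j\|$ and $b_{ij}=\|Y_i-Y_j\|$. This is the main obstacle. Following \citet{szekely2007measuring}, the key tool is the integral identity
\begin{equation*}
 \int_{\mathbb R^j}\frac{1-\cos(t^\top x)}{c_j\|t\|^{j+1}}\,dt=\|x\|,\qquad x\in\mathbb R^j .
\end{equation*}
The steps are as follows.
\begin{enumerate}[(i)]
\item Expand $|\chi_{U,Y}-\chi_U\chi_Y|^2$ as an expectation over independent copies.
\item Replace each product of exponentials by its real part, a product of cosines.
\item Rewrite the resulting combination as a sum of products of terms $1-\cos$; the constant terms cancel because the combination is centered.
\item Integrate termwise using the identity above.
\end{enumerate}
The Fubini/Tonelli interchange in step (iv) needs a uniform justification. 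I would first integrate over the truncated region $\epsilon<\|t\|,\|s\|<1/\epsilon$ and then let $\epsilon\downarrow0$ by dominated convergence. The dominating function is $\|U_i-U_j\|\,\|Y_k-Y_l\|$, which is integrable under the assumed conditions $\mathbb E\|U\|+\mathbb E\|Y\|<\infty$ and $\mathbb E\|U\|\|Y\|<\infty$.

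Given this representation, unbiasedness of $K$ is a counting argument over ordered pairs in $\{1,2,3,4\}$; by exchangeability each expectation reduces to one of the three basic moments.
\begin{itemize}
\item The first term of $K$ contains $12$ ordered pairs, so it contributes $\tfrac14\cdot12\,\mathbb E[a_{12}b_{12}]=3\,\mathbb E[a_{12}b_{12}]$.
\item In the second term, for each $i$ the product $(\sum_{j\ne i}a_{ij})(\sum_{k\ne i}b_{ik})$ has $3$ diagonal terms ($j=k$) and $6$ off-diagonal terms. Summing over the four values of $i$ and multiplying by $\tfrac14$ gives $-3\,\mathbb E[a_{12}b_{12}]-6\,\mathbb E[a_{12}b_{13}]$.
\item The third term contains $144$ pairs of ordered pairs. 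Of these, $24$ coincide as unordered pairs, $96$ share exactly one index, and $24$ are disjoint. Dividing by $24$ gives $\mathbb E[a_{12}b_{12}]+4\,\mathbb E[a_{12}b_{13}]+\mathbb E a_{12}\,\mathbb E b_{34}$.
\end{itemize}
Adding the three contributions reproduces the representation exactly. Linearity of expectation over the $\binom n4$ subsets then gives $\mathbb E\widehat{\mathcal V}_n=\mathcal V[U,Y]$.

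For scaling, replacing $U$ by $cU$ multiplies every $a_{ij}$ by $c$. Each term of the moment formula is linear in the $a$'s, so $\mathcal V[cU,Y]=c\,\mathcal V[U,Y]$; equivalently, one can change variables $t\mapsto t/c$ in the integral definition.

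For the independence characterization, the integrand in the characteristic-function definition is nonnegative, and the weight is strictly positive away from the coordinate axes. Independence gives $\chi_{U,Y}=\chi_U\chi_Y$ and hence $\mathcal V=0$. Conversely, $\mathcal V=0$ forces $\chi_{U,Y}=\chi_U\chi_Y$ Lebesgue-almost everywhere. Characteristic functions are continuous, so the equality holds everywhere, and uniqueness of characteristic functions yields $U\Vbar Y$.
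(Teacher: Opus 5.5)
Your counting, scaling and independence arguments coincide with the paper's proof. It obtains $\mathbb EK=A-2B+C$ from the same tallies $3A$, $3A+6B$ and $A+4B+C$, with $A=\mathbb E a_{12}b_{12}$, $B=\mathbb E a_{12}b_{13}$ and $C=\mathbb E a_{12}\,\mathbb E b_{12}$. The one difference is that the paper simply cites the distance-moment identity $\mathcal V[U,Y]=A-2B+C$ from Sz\'ekely et al., whereas you derive it, and step (ii) of that derivation is false as stated. The real part of $e^{\mathrm i\alpha}e^{\mathrm i\beta}$ is $\cos\alpha\cos\beta-\sin\alpha\sin\beta$, not a product of cosines. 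Moreover, the $\sin\cdot\sin$ contributions to $|\chi_{U,Y}(t,s)-\chi_U(t)\chi_Y(s)|^2$ do not cancel pointwise. The cosine-only combination is even in $t$ for fixed $s$, but the squared modulus need not be. For example, take $d=q=1$, $Y=U$ uniform on $\{0,1,2\}$ and $t_0=2\pi/3$. Then $\chi_{U,Y}(t,s)=\chi_U(t+s)$ and $\chi_U(\pm t_0)=\chi_U(2t_0)=0$. So the squared modulus is $0$ at $(t_0,t_0)$ but $1$ at $(-t_0,t_0)$, and no pointwise cosine expression can match it.

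The repair uses your own truncation. Each $\sin\cdot\sin$ term is odd in $t$ for fixed $s$. The weight $\|t\|^{-(d+1)}\|s\|^{-(q+1)}$ and the region $\{\epsilon<\|t\|<1/\epsilon,\ \epsilon<\|s\|<1/\epsilon\}$ are invariant under $t\mapsto-t$. The integrand is bounded on that region and the region has finite measure, so Fubini applies and these terms integrate to zero. The reduction to cosines must therefore be made under the truncated integral, not pointwise. Then write $\cos\alpha\cos\beta=1-(1-\cos\alpha)-(1-\cos\beta)+(1-\cos\alpha)(1-\cos\beta)$. Both the constants and the single-factor terms cancel with coefficients $1+1-2$, not just the constants. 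This holds because $U_1-U_2$ has the same law in all three expectations, and $Y_1-Y_2$, $Y_3-Y_4$ and $Y_1-Y_3$ share a common law. The truncated integrals of the remaining products increase to $\|U_i-U_j\|\,\|Y_k-Y_l\|$, so your limit argument then yields the identity. With this correction your proof is a self-contained version of the paper's.
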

\begin{proof}
For independent copies $(U_i,Y_i)$, use $a_{ij}=\|U_i-U_j\|$ and $b_{ij}=\|Y_i-Y_j\|$ within this proof, and put
\begin{equation*}
 A=\mathbb E a_{12}b_{12},\quad B=\mathbb E a_{12}b_{13},\quad
 C=\mathbb E a_{12}b_{34}=\mathbb E a_{12}\mathbb E b_{12}.
\end{equation*}
These quantities are finite by the moment assumptions. The distance-moment identity for squared distance covariance \cite{szekely2007measuring} is
\begin{equation*}
 \mathcal V[U,Y]=A-2B+C.
\end{equation*}
In the first kernel sum there are 12 ordered pairs. In each row product there are three identical-pair terms and six terms sharing one observation. In the product of the two full pair sums, each of the 12 first pairs has two second pairs with the same two indices, eight with one common index, and two with disjoint indices. Exchangeability therefore gives
\begin{equation*}
\begin{split}
 \mathbb E\!\left[\tfrac14\sum_{i\ne j}a_{ij}b_{ij}\right]&=3A,\\
 \mathbb E\!\left[\tfrac14\sum_i\Big(\sum_{j\ne i}a_{ij}\Big)
                                      \Big(\sum_{j\ne i}b_{ij}\Big)\right]&=3A+6B,\\
 \mathbb E\!\left[\tfrac1{24}\Big(\sum_{i\ne j}a_{ij}\Big)
                                 \Big(\sum_{i\ne j}b_{ij}\Big)\right]&=A+4B+C.
\end{split}
\end{equation*}
Subtracting proves $\mathbb EK=A-2B+C$. Every four-observation subset has this expectation, proving unbiasedness. Replacing $U$ by $cU$ multiplies every $a_{ij}$, and thus $A,B,C$, by $c$, proving homogeneity.
The defining characteristic-function integral is nonnegative. If it is zero, continuity makes $\chi_{U,Y}(t,s)=\chi_U(t)\chi_Y(s)$ everywhere: a nonzero value off the coordinate axes would give a positive integral on a neighborhood, and equality on the axes holds identically. Uniqueness of characteristic functions then gives independence. Conversely, independence makes the integrand zero.
\end{proof}
In particular, if $\mathcal V[X,Y]>0$ and $c>1$, then $\mathcal V[cX,Y]>\mathcal V[X,Y]$. Thus an unrestricted data-processing inequality for squared distance covariance cannot be used to prove population optimality of $R^*$.

\noindent
 \textit{$f$-divergence \citep{ali1966general}.}
Let $\mu,\nu$ be probability measures with $\mu\ll\nu$. Let $f$ be a proper lower-semicontinuous convex function on $[0,\infty)$, finite and differentiable on $(0,\infty)$, and normalized by $f(1)=0$; extend $f$ to $+\infty$ on negative arguments. Then
\begin{equation}\label{f_divergence}
 \mb D_f(\mu\|\nu):=\int f\!\left(\frac{d\mu}{d\nu}\right)d\nu.
\end{equation}
This is nonnegative by Jensen's inequality. With $f$ strictly convex, equality holds if and only if $\mu=\nu$. Its Fenchel conjugate is
\begin{equation}\label{conjugate-f}
 f^*(t):=\sup_{x\geq0}\{tx-f(x)\},\qquad t\in\mathbb R.
\end{equation}
\begin{lemma}[Variational form, cf. \cite{huang2024deep}]
Suppose additionally that $r=d\mu/d\nu>0$ $\nu$-almost everywhere, $\mb D_f(\mu\|\nu)<\infty$, and $f'(r)$ belongs to the admissible measurable discriminator class with finite expectations below. Then
\begin{equation}\label{f_divergence2}
 \mb D_f(\mu\|\nu)=\sup_D\{\mb E_\mu D-\mb E_\nu f^*(D)\},
\end{equation}
where the supremum is over measurable discriminators for which both expectations are finite, and equality is attained at $D=f'(r)$.
\end{lemma}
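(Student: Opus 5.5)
We prove the two directions separately. For the upper bound, fix any admissible $D$ and apply the Fenchel--Young inequality pointwise: $f^*(D)\geq D r - f(r)$ for every value of $r=d\mu/d\nu\geq0$, directly from the definition \eqref{conjugate-f}. Integrating against $\nu$, and using that both expectations are finite and that $\mb D_f(\mu\|\nu)<\infty$, gives
\begin{equation*}
 \mb E_\nu f^*(D)\geq\int Dr\,d\nu-\int f(r)\,d\nu
 =\mb E_\mu D-\mb D_f(\mu\|\nu).
\end{equation*}
Here $\int Dr\,d\nu=\mb E_\mu D$ is the change of measure $d\mu=r\,d\nu$. Rearranging and taking the supremum over $D$ yields $\sup_D\{\mb E_\mu D-\mb E_\nu f^*(D)\}\leq\mb D_f(\mu\|\nu)$.

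For the reverse inequality and attainment, evaluate the objective at $D_0=f'(r)$, which is admissible by hypothesis and well defined $\nu$-a.e.\ because $r>0$ $\nu$-a.e.\ and $f$ is differentiable on $(0,\infty)$. The key step is the pointwise identity $f^*(f'(x))=xf'(x)-f(x)$ for $x>0$. Since $f$ is convex and differentiable at $x$, its tangent line at $x$ supports it, so $f(y)\geq f(x)+f'(x)(y-x)$ for all $y\geq0$. Hence $y\mapsto f'(x)y-f(y)$ attains its supremum over $y\geq0$ at $y=x$. The boundary point $y=0$ is handled by lower semicontinuity and the convex inequality extended to $y=0$. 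Substituting $x=r(\cdot)$ gives $f^*(D_0)=rD_0-f(r)$ $\nu$-a.e. Integrating, which is legitimate because all three integrals are finite, gives $\mb E_\mu D_0-\mb E_\nu f^*(D_0)=\int f(r)\,d\nu=\mb D_f(\mu\|\nu)$. This proves equality in \eqref{f_divergence2} together with attainment at $D_0$.

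The main obstacle is measure-theoretic bookkeeping rather than the convex analysis. First, the Fenchel--Young step must be justified for a convex function taking value $+\infty$ on negative arguments, which is why the supremum in \eqref{conjugate-f} is restricted to $x\geq0$. Second, the integrals must not be split in an $\infty-\infty$ form; the finiteness hypotheses on $\mb D_f$ and on both discriminator expectations are exactly what rule this out. Third, the support of $f(0)$ at $y=0$ must be checked in the tangent-line argument. We would treat these explicitly and leave the rest as routine.
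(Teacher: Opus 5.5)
Your proposal is correct and follows the paper's proof: pointwise Fenchel's inequality integrated against $\nu$ gives the upper bound, and the supporting-line inequality $f(y)\geq f(r)+f'(r)(y-r)$ for $r>0$ gives $f^*(f'(r))=rf'(r)-f(r)$ and hence attainment at $D=f'(r)$. The bookkeeping you flag is exactly what the finiteness and positivity hypotheses cover. One small point: lower semicontinuity is not needed at the endpoint $y=0$, since convexity along the segment $[0,x]$ already gives the tangent inequality there when $f(0)<\infty$, and it holds trivially when $f(0)=+\infty$.
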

\begin{proof}
For every admissible $D$, the Radon--Nikodym identity and Fenchel's inequality give
\begin{equation*}
 \mathbb E_\mu D-\mathbb E_\nu f^*(D)
 =\int\{rD-f^*(D)\}\,d\nu
 \leq\int f(r)\,d\nu=\mathbb D_f(\mu\|\nu).
\end{equation*}
For $r>0$, convexity gives $f(x)\geq f(r)+f'(r)(x-r)$ for every $x\geq0$. Consequently,
\begin{equation*}
 f^*(f'(r))=rf'(r)-f(r),\qquad
 \mathbb E_\mu f'(r)-\mathbb E_\nu f^*(f'(r))=\mathbb D_f(\mu\|\nu).
\end{equation*}
The assumed integrability makes both expectations well defined, so the upper bound is attained. If the discriminator class is restricted and does not contain $f'(r)$, only the upper bound follows; the gap is the variational bias retained in \eqref{eq:disc-bias}.
\end{proof}
For the empirical penalty in \eqref{eq:rep-training}, use the variational expression with a specified discriminator-network class $\mc D$ for which the displayed supremum is finite ($\mc D=\mc D_{n_R}$ in the recovery analysis), representation-fold observations $D_{\rm rep}=\{(X_i,Y_i)\}_{i=1}^{n_R}$, and independent reference observations $W_i\sim\gamma_d$:
\begin{equation*}
 \widehat{\mb D}_{f,n_R}(R):=\sup_{D\in\mc D}
 \left\{\frac1{n_R}\sum_{i=1}^{n_R}D(R(X_i))
          -\frac1{n_R}\sum_{i=1}^{n_R}f^*(D(W_i))\right\}.
\end{equation*}
The restricted variational expression defines the empirical training penalty. It differs from the full divergence between the discrete empirical feature measure and the continuous Gaussian reference. Equation \eqref{eq:disc-bias} quantifies the population variational approximation bias in the recovery analysis.

\subsection{\texorpdfstring{Experimental Details}{Experimental Details}}\label{experiment}

Figure \ref{fig:density ratio} illustrates density-ratio estimation in a Gaussian example. Systematic evaluations of domain overlap, latent dimension, representation sufficiency, density-ratio error and run-to-run uncertainty remain future work.

\noindent
\textit{The dataset details:}
\begin{table}[tbp]
    \centering
    \renewcommand{\arraystretch}{1.3}
    
    \small\setlength{\tabcolsep}{3pt}
\begin{tabular}{ccccccc}
    \toprule
         & \textbf{Datasets} & \shortstack{\textbf{Number-}\\\textbf{Source}} & \shortstack{\textbf{Number-}\\\textbf{Target}} & \textbf{Covariates} &\textbf{Labels} & \textbf{Features} \\ 
        \multirow{5}{*}{\textbf{Tabular}} & Circle & 10000 & 10000 & 2  & 2 & (1, 32, 32) \\ 
        ~ & Moon & 10000 & 10000 &2 & 2 & (1, 32, 32) \\ 
        ~ & Breast cancer & 365 & 318 & 9 & 2 & (1, 20, 20) \\ 
        ~ & Blood & 369 & 379 & 3 & 2 & (1, 20, 20) \\ 
        ~ & Haberman & 158 & 148 & 3 & 2 & (1, 20, 20) \\ \cline{1-7}
                \multirow{4}{*}{\textbf{Office-Caltech}} & Caltech (C)& 1123 & - & - & 10 & (Variable) \\
        ~ & DSLR (D) & 157 & - & - & 10 & (3, 1000, 1000) \\ 
        ~ & Webcam (W) & 295 & - & - & 10 & (Variable) \\ 
        ~ & Amazon (A) & - & 958 & - & 10 & (3, 300, 300) \\ 
        \bottomrule
    \end{tabular}
\caption{Dataset sizes and feature dimensions. Office-Caltech counts are shown for transfer from C, D and W to A.}
    \label{tab1}
\end{table}

\noindent
\textit{Data Transformations.}
We apply the following image transformations to Office-Caltech:
\begin{itemize}
    \item Resizing each image as $(256, 256)$.
    \item Each image is normalized by channel by mean $(0.485, 0.456, 0.406)$, standard deviation $(0.229, 0.224, 0.225)$.
\end{itemize}

\noindent
\textit{Hyperparameter Settings.}
\begin{table}[tbp]
    \centering
    \renewcommand{\arraystretch}{1.3}
    
    \begin{tabular}{cccc}
    \toprule
        \textbf{Dataset} & \textbf{$lr$}& \textbf{$\alpha_w$} & \textbf{latent dim} \\ 
        Circle & 0.5 & 0.9 & 2 \\ 
        Moon & 3.0 & 0.4 & 2 \\ 
         Blood& 1.0 & 0.5 & 3 \\ 
         Breast cancer& 3.0 & 0.1 & 6 \\
         Haberman& 1.5 & 0.3 & 3 \\
        Office-Caltech & 0.2 & 0.8 & 128 \\
            \bottomrule    
    \end{tabular}
\caption{Hyperparameters. $lr$ is the learning rate for representation training.}
    \label{tab3}
\end{table}

\noindent
\textit{Network Architectures.} We display the network architecture details for three different stages here.
\begin{itemize}
\item We employ the ResNet-18 \cite{he2016deep} as the backbone for representation learning stage, which is shown in Table \ref{tab_neural1}.
    \begin{table}[tbp]%
    \centering
     \renewcommand{\arraystretch}{1.1}
    
    \begin{tabular}{ccc}
    \toprule
        Layers & Details & Output shape  \\ \midrule
        Conv-1  & $7 \times 7 $ Conv    & $(64, 112, 112) $  \\ 
        Maxpool-1  & $3 \times 3$ Maxpool    & $ (64, 56, 56) $  \\ 
       \multirow{2}{*}{Conv-2}  & \multirow{2}{*}{$\left[\begin{matrix}
           3 \times 3 ~  \text{Conv, BN, ReLU}\\
           3 \times 3 ~ \text{Conv, BN, ReLU}
       \end{matrix} \right]\times 2$} & \multirow{2}{*}{$ (64, 56, 56) $}\\
          ~ &  ~     & $ ~ $\\
        \multirow{2}{*}{Conv-3}   & \multirow{2}{*}{$\left[\begin{matrix}
           3 \times 3 ~  \text{Conv, BN, ReLU}\\
           3 \times 3 ~ \text{Conv, BN, ReLU}
       \end{matrix} \right]\times 2$}  & \multirow{2}{*}{$ (128, 28, 28) $}\\
       ~ & ~    & $ ~ $\\
        \multirow{2}{*}{Conv-4}   &  \multirow{2}{*}{$\left[\begin{matrix}
           3 \times 3 ~  \text{Conv, BN, ReLU}\\
           3 \times 3 ~ \text{Conv, BN, ReLU}
       \end{matrix} \right]\times 2$}  &\multirow{2}{*}{ $ (256, 14, 14) $}\\
        ~ & ~    & $ ~ $\\
        \multirow{2}{*}{Conv-5}   & \multirow{2}{*}{$\left[\begin{matrix}
           3 \times 3 ~  \text{Conv, BN, ReLU}\\
           3 \times 3 ~ \text{Conv, BN, ReLU}
       \end{matrix} \right]\times 2$}   & \multirow{2}{*}{$ (512, 7, 7) $}\\
        ~ & ~    & $ ~ $\\
        Avgpool   & Adaptive Average Pool    & $ (512, 1, 1) $\\
        Linear-1   & Linear   & $ (\text{latent dimension}) $\\
    \bottomrule    
    \end{tabular}
\caption{The network architecture based on ResNet-18.}
    \label{tab_neural1}
\end{table}

\item We display the DenseNet architectures below.
 \begin{table}[tbp]%
    \centering
     \renewcommand{\arraystretch}{1.1}
    
    \begin{tabular}{ccc}
    \toprule
        Layers & Details & Output shape  \\ \midrule
        
        Conv-1  & $3 \times 3 $ Conv    & $(24, 12, 12) $  \\ 
        
       \multirow{2}{*}{Dense Block-1}  & \multirow{2}{*}{$\left[\begin{matrix}
           1 \times 1 ~  \text{Conv, BN}\\
           3 \times 3 ~ \text{Conv, BN}
       \end{matrix} \right]\times 2$} & \multirow{2}{*}{$ (48, 12, 12) $}\\
          ~ &  ~     & $ ~ $\\
          
         Transition Layer-1  &BN, ReLU, $2 \times 2$ Average Pool, $1 \times 1$ Conv,     & $(24, 6, 6) $  \\ 
         
        \multirow{2}{*}{Dense Block-2}   & \multirow{2}{*}{$\left[\begin{matrix}
           1 \times 1 ~  \text{Conv, BN}\\
           3 \times 3 ~ \text{Conv, BN}
       \end{matrix} \right]\times 2$}  & \multirow{2}{*}{$ (48, 6, 6) $}\\
       ~ & ~    & $ ~ $\\

       Transition Layer-2  &BN, ReLU, $2 \times 2$ Average Pool, $1 \times 1$ Conv,     & $(24, 3, 3) $  \\ 
       
        \multirow{2}{*}{Dense Block-3}   &  \multirow{2}{*}{$\left[\begin{matrix}
           1 \times 1 ~  \text{Conv, BN}\\
           3 \times 3 ~ \text{Conv, BN}
       \end{matrix} \right]\times 2$}  &\multirow{2}{*}{ $ (48, 3, 3) $}\\
        ~ & ~    & $ ~ $\\
        
        Avgpool   & Adaptive Average Pool    & $ (48, 1, 1) $\\
        Linear-1   & Linear   & $ (1) $\\
    \bottomrule    
    \end{tabular}
\caption{The DenseNet architectures.}
    \label{densenet}
\end{table}

\item  The fully connected network with Leaky ReLU activations serves as the representation discriminator and the domain-score network for latent features.
    \begin{table}[tbp]%
    \centering
     \renewcommand{\arraystretch}{1.1}
    
    \begin{tabular}{ccc}
    \toprule
        Layers & Details & Output shape  \\ \midrule
        Linear-1  & Linear    & $(64) $  \\ 
       \multirow{2}{*}{Linear-2}  & \multirow{2}{*}{$\left[\begin{matrix}
           \text{Linear, LeakyReLU(0.2)}\\
            \text{Linear, LeakyReLU(0.2)}
       \end{matrix} \right]\times 2$} & \multirow{2}{*}{$ (128) $}\\
          ~ &  ~     & $ ~ $\\
        Linear-3   & \text{LeakyReLU(0.2), Linear}   & $ (1) $\\
    \bottomrule    
    \end{tabular}
\caption{The network architecture of full connect linear network with Leaky ReLU}
    \label{tab_neural2}
\end{table}

\item We use the network in Table \ref{tab_neural1} with a latent dimension equal to $1$ as the density ratio function if the data is raw high-dimensional images. Otherwise, we use the network in Table \ref{tab_neural2} for latent features.

\item The prediction head is a three-layer fully connected neural network.
\begin{table}[tbp]%
    \centering
     \renewcommand{\arraystretch}{1.1}
    
    \begin{tabular}{ccc}
    \toprule
        Layers & Details & Output shape  \\ \midrule
        Linear-1  & Linear, ReLU    & $(128) $  \\ 
       Linear-2   & Linear, ReLU   & $ (128) $\\
        Linear-3   & Linear   & $ (\text{category number}) $\\
    \bottomrule    
    \end{tabular}
\caption{The network architecture of prediction head}
    \label{tab5}
\end{table}      
\end{itemize}

\begin{figure}[tbp]
    \centering
    \includegraphics[width = 0.6\textwidth]{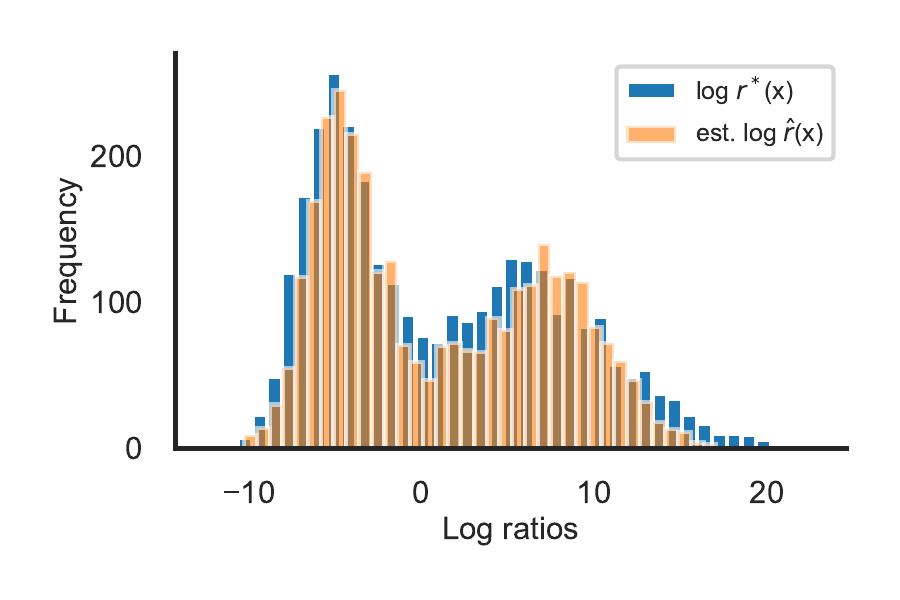}
    \caption{True and estimated log density ratios for $N(0, 0.8)$ and $N(3, 1.0)$. Blue and orange histograms show $\log r^*$ and $\log\hat r$, respectively.}
    \label{fig:density ratio}
\end{figure}

\clearpage
\bibliography{refer}
\end{document}